\def\eqref#1{equation~\ref{#1}}
\def\1{\bm{1}}
\DeclareMathAlphabet{\mathsfit}{\encodingdefault}{\sfdefault}{m}{sl}
\SetMathAlphabet{\mathsfit}{bold}{\encodingdefault}{\sfdefault}{bx}{n}
\def\sR{{\mathbb{R}}}
\def\sW{{\mathbb{W}}}
\newcommand{\E}{\mathbb{E}}
\newcommand{\KL}{D_{\mathrm{KL}}}
\newtheorem{theorem}{Theorem}
\newtheorem{proposition}{Proposition}
\renewcommand{\eqref}[1]{(\ref{#1})}
\renewcommand{\t}{^{\mbox{\tiny\sf T}}} 
\newcommand{\N}{\mathcal{N}}
\renewcommand{\KL}{\mathrm{D}_{\mathrm{KL}}}
\renewcommand{\E}{\mathbb{E}}
\newcommand{\tr}{\mathrm{tr}}
\renewcommand{\d}{\mathrm{d}}
\newcommand{\hess}{\nabla^2}
\newcommand{\dw}{\mathrm{d}w}
\newcommand{\dt}{\mathrm{d}t}
\newcommand{\half}{\frac{1}{2}}
\newcommand{\editcolor}{black}
\newcommand{\blue}
\definecolor{rho1}{rgb}{0. , 0.98, 0.601}
\definecolor{rho0}{rgb}{0.1, 0.0, 0.5}
\begin{document}

\title{Go With the Flow: Fast Diffusion for Gaussian Mixture Models}

\author{%
  George Rapakoulias $^1$ \thanks{Corresponding author: \texttt{grap@gatech.edu}}
  \And
  Ali Reza Pedram $^{1,2}$
  \And
  Fengjiao Liu $^3$
  \And
  Lingjiong Zhu $^4$
  \And
  Panagiotis Tsiotras $^1$
  \\
  \\
  $^1$ Department of Aerospace Engineering, Georgia Institute of Technology, Atlanta, GA \\
  $^2$ School of Computer Science, University of Oklahoma, Norman, OK\\
  $^3$ Department of ECE, FAMU-FSU College of Engineering, Tallahassee, FL \\
  $^4$ Department of Mathematics, Florida State University, Tallahassee, FL
}

\maketitle
\begin{abstract}
Schr\"{o}dinger Bridges (SBs) are diffusion processes that steer,  in finite time, a given initial distribution to another final one while minimizing a suitable cost functional. 
Although various methods for computing SBs have recently been proposed in the literature, most of these approaches require computationally expensive training schemes, even for solving low-dimensional problems. 
In this work, we propose an analytic parametrization of a set of feasible policies for steering the distribution of a dynamical system from one Gaussian Mixture Model (GMM) to another. 
Instead of relying on standard non-convex optimization techniques, the optimal policy within the set can be approximated as the solution of a low-dimensional linear program whose dimension scales linearly with the number of components in each mixture.
The proposed method generalizes naturally to more general classes of dynamical systems, such as controllable linear time-varying systems, enabling efficient solutions to multi-marginal momentum SBs between GMMs, a challenging distribution interpolation problem.
We showcase the potential of this approach in low-to-moderate dimensional problems such as image-to-image translation in the latent space of an autoencoder,  
learning of cellular dynamics using multi-marginal momentum 
SBs, and various other examples.
{\color{\editcolor}The implementation is publicly available at \url{https://github.com/georgeRapa/GMMflow}.}
\end{abstract}

\section{Introduction and Background}

The problem of finding mappings between distributions of data, originally known as the \emph{Optimal Transport} (OT) problem in mathematics, has received significant attention in recent years in multiple research fields, due to its application in problems such as generative AI \citep{ruthotto2021introduction, arjovsky2017wasserstein}, biology \citep{bunne2023learning, bunne2023neural,tong2020trajectory}, mean field problems  \citep{liu2022deep} and control theory \citep{chen2015optimal, chen2015optimal2, rapakoulias2024discrete} among many others. 
Despite appearing static in nature, reformulating OT in the context of dynamical systems imbues it with further structure and unlocks tools from the literature on dynamical systems that can be employed for its efficient solution \citep{benamou2000computational}. 

To set the stage, consider two distributions $\rho_0, \rho_1$, supported on the $d$-dimensional Euclidean space, denoted by $\sR^d$, and consider the regularized version of the static OT optimization problem, known as the \emph{Entropic Optimal Transport} (EOT) problem \citep{peyre2017computational}:  
\begin{equation} \label{EOT}
        \min_{\pi \in \Pi(\rho_0, \rho_1)} \int_{\mathbb{R}^{d}\times\mathbb{R}^{d}}{ \frac{1}{2} \| x_0 - x_1\|^2 \d \pi(x_0, x_1)} - \epsilon H (\pi),
\end{equation}
where $\pi(x_0, x_1)$ is the transport plan (also referred to as coupling) between $\rho_0, \rho_1$, $\Pi(\rho_0, \rho_1)$ is the set of all joint distributions with marginals $\rho_0, \rho_1$, and $H$ is the differential entropy, defined by $H(\rho) \triangleq - \int \rho(x) \log \rho(x) \, \d x$.
The corresponding dynamic formulation of the EOT problem is known as the \emph{Schr\"{o}dinger Bridge Problem} (SBP) \citep{leonard2014survey, chen2021stochastic}. 
When formulated as a stochastic optimal control problem, the SBP is given by
\begin{align} \label{SB}
\min_{u \in \mathcal{U}} \; J_{\mathrm{SB}} \triangleq \E_{x_t \sim \rho_t} \left[ \int_{0}^{1}{ \frac{1}{2 \epsilon} \| u_t(x_t) \|^2 \d t} \right], \quad \mathrm{s.t.} \begin{cases}
    \d x_t = u_t(x_t) \, \d t + \sqrt{\epsilon} \, \d w,\\
    x_0 \sim \rho_0, \quad x_1 \sim \rho_1, 
\end{cases}
\end{align}
where the objective is to find an optimal drift function $u_t(x_t)$, also referred to as the control policy in the context of control applications, belonging to a set of adapted finite-energy policies $\mathcal{U}$, such that, when applied to the stochastic dynamical system defined by the first constraint in~\eqref{SB}, the marginal distribution specified in the second constraint is guaranteed, i.e.,
for initial conditions sampled at time $t=0$  from $\rho_0$, the state at time $t=1$ will be distributed according to $\rho_1$, and the cost $J_{\mathrm{SB}}$ in \eqref{SB} will be minimized.

The increased practical applications of EOT and SBs in multiple machine learning problems, especially in high-dimensional generative applications where the boundary distributions {\color{\editcolor}$\rho_0, \rho_1$} are only available through a finite number of samples, have led to the development of a multitude of algorithms over recent years. 
The state-of-the-art methods for solving SBs leverage the properties of problem \eqref{SB}, such as the decomposition of the optimal probability flow into conditional problems that are easier to solve, sometimes even analytically \citep{chen2016optimal, lipman2022flow, liu2022flow}.
In this category of methods, a recent technique known as \emph{Diffusion Schr\"odinger Bridge Matching} (DSBM) \citep{shi2023diffusion, peluchetti2023diffusion}, or its deterministic counterpart, known as \emph{Flow Matching} (FM) \citep{ lipman2022flow} or \emph{Rectified Flow} (RF) \citep{liu2022flow}, leverages the decomposition of the optimal probability flow to a mixture of flows conditioned on their respective endpoints and retrieves an approximation of the 
optimal solution to \eqref{SB} as a mixture of conditional policies that are easy to calculate.
Theoretically, one needs to combine an infinite number of conditional flows to retrieve the true flow, due to the continuous support of the boundary distributions.
To overcome this issue, a neural network is usually trained to approximate this infinite mixture.  

While the DSBM and the various Flow Matching algorithms have proven very effective in high-dimensional problems, the efficient solution of SBs in simpler problems is hindered by the lack of closed-form expressions in all but very few special cases with Gaussian marginal distributions \citep{bunne2023schrodinger}.
To tackle this problem and avoid costly neural network training in smaller problems, recent methods such as Light-SB (LSB) \citep{korotin2024light} and Light-SB Matching (LSBM) \citep{gushchin24light} have been proposed to obtain quick and efficient solutions to SBs within seconds, for problems with low-complexity boundary distributions, such as mixture models. 
These methods work by an efficient parametrization of the Schr\"odinger potentials, a key component of the SB. Because this parametrization does not lead to closed-form expressions for the boundary distributions of the SB, the calculation of its parameters is carried out through optimization.

Inspired by the flow decomposition idea behind DSBM and FM methods, and motivated by the need to obtain light-weight and fast SB solvers for a wide class of SB problems, in this paper, we solve the problem of finding a policy that can efficiently steer the distribution of a dynamical system from a \emph{Gaussian Mixture Model} (GMM) to another one, using a mixture of conditional policies that can each steer the individual components of the initial mixture to the components of the terminal mixture.
This approach, which is tailored to GMMs, separates the problem of fitting the boundary distributions to the data and solving the SB, resulting in improved accuracy with regard to the marginal distribution fitting.
More specifically, we claim the following \textbf{main contributions}: 
\begin{enumerate}
    \item We present a computationally efficient, training-free method to solve the Schr\"{o}dinger Bridge and the multi-marginal Momentum Schr\"odinger Bridge problems in the case where the boundary distributions are Gaussian Mixture Models. 
    
    \item In contrast to existing approaches, our method can handle both stochastic and deterministic versions of the problem \eqref{SB}. 
    Based on a control-theoretic formulation, our approach also naturally generalizes to dynamical systems with a general Linear Time-Varying (LTV) structure, with the control input and stochastic component having different dimensions than the state, which could be of interest in Mean Field Games (MFG), multi-agent control applications \citep{ruthotto2020machine, liu2022deep, chen2023density}, and higher order distribution interpolation such as Wasserstein splines \citep{chen_measure-valued_2018}. 
%
    \item We demonstrate the substantial potential of our algorithm in low-dimensional problems, moderate-dimensional image-to-image translation tasks, and multi-marginal diffusion learning problems. 
    Specifically, we show that our approach outperforms state-of-the-art lightweight methods for solving the SB problem both in terms of training speed and accuracy of the learned boundary distributions, when these are available through samples (40\% better FID scores in the image translation task and one order of magnitude better MMD scores in the multi-marginal diffusion-learning problems). 

    \item Finally, we extend our method to problems with continuous GMM marginal distributions, a wide class of distributions that can capture multiple useful distributions with heavy tails,  and we use our approach to construct upper bounds on the 2-Wasserstein distance and approximate the displacement interpolation between Student-t distributions.
\end{enumerate}

\section{Preliminaries}

\subsection{Diffusion Schr\"odinger Bridge Matching and Flow Matching}

The composition of diffusion processes as mixtures of processes conditioned on their endpoints was originally proposed by \cite{peluchetti2021non} as a simulation-free algorithm for generative modeling applications. 
The concept was later tailored to solve the SBP
in the DSBM algorithm \citep{shi2023diffusion}, proposed concurrently by \cite{peluchetti2023diffusion}. Similar simulation-free methods have also been proposed to solve variants of the same problem in \cite{albergo2023building} and in \cite{liu2024generalized, theodoropoulos2025feedback} for the stochastic bridge setting, as well as in \cite{liu2022flow} and \cite{lipman2022flow} for the deterministic setting.
For a more comprehensive overview along with comparisons with other available methods, we refer the reader to \cite[Section 5]{shi2023diffusion} and \cite[Section 5]{peluchetti2023diffusion}.


Given problem \eqref{SB}, the main idea is to decompose the problem into a sequence of elementary conditional subproblems that are easier to solve, and then express the solution as a mixture of the solutions of the conditional subproblems. 
This idea has an intuitive motivation: Informally, finding a policy that transports the state distribution from an initial density to a target density can be separated into two problems. 
First, one needs to figure out a transport plan solving the ``who goes where'' problem and then one needs to compute a point-to-point optimal policy, that solves the ``how to get there'' problem \citep{terpin2024dynamic}. 
In many cases, the two subproblems are decoupled \citep{chen2021stochastic, chen2016optimal, terpin2024dynamic}; 
most importantly, however, computing the point-to-point optimal policy can be solved analytically for simple dynamical systems, such as the one in \eqref{SB}.

More precisely, the optimal probability flow $\rho_t^*$ of Problem \eqref{SB} is known \citep{follmer1988random, chen2021stochastic} to admit the decomposition 
\begin{equation} \label{opt_SB_decomb}
    \rho^*_t(x) = \int_{\mathbb{R}^{d}\times\mathbb{R}^{d}} W_{t|x_0, x_1}(x) \, \d \pi^*_{\epsilon}(x_0, x_1),
\end{equation}
where $W_{t|x_0, x_1}(x)$ is the probability density of the unforced dynamics $\d x_t =\sqrt{\epsilon} \, \d w$, namely the Brownian motion kernel, pinned at $x_0$ for $t=0$ and at $x_1$ for $t=1$, and $\pi^*_{\epsilon}(x_0, x_1)$ is the entropic optimal transport plan between {\color{\editcolor}$\rho_0, \rho_1$} solving \eqref{EOT}.
\cite{dai1991stochastic} showed that $W_{t|x_0, x_1}(x)$ solves the following optimal control problem  
%
\begin{align} \label{SB_cond}
    \min_{u_{t|0,1} \in \mathcal{U}} \; J_{0,1}(x_0, x_1) \triangleq \E \left[ \int_{0}^{1}{ \| u_{t|0,1}(x_t) \|^2 \d t} \right], \quad \mathrm{s.t.} \begin{cases}
        \d x_t = u_{t|0,1}(x_t) \, \d t + \sqrt{\epsilon} \,\d w, \\
        x_0 \sim \delta_{x_0}, \quad x_1 \sim \delta_{x_1},
    \end{cases}
\end{align}

%
where $\delta_{x_0}, \delta_{x_1}$ are Dirac delta functions centered on $x_0$ and $x_1$, respectively. 
Assuming $\rho_{t|0,1}(x)$ and $u_{t|0,1}(x)$ solve \eqref{SB_cond}, 
one can construct a feasible solution for the original problem \eqref{SB} using any transport plan $q(x_0, x_1) \in \Pi(\rho_0, \rho_1)$,
%
%
i.e., any joint distribution between the desired  boundaries $\rho_0, \rho_1$, using the mixtures 
\begin{subequations}
\begin{align}    
    & \rho_t(x) = \int_{\mathbb{R}^{d}\times\mathbb{R}^{d}} \rho_{t|0,1}(x) q(x_0, x_1) \, \d x_0 \, \d x_1 \label{mixture_rho}, \\
    & u_t(x) = \int_{\mathbb{R}^{d}\times\mathbb{R}^{d}} u_{t|0,1}(x) \frac{\rho_{t|0,1}(x) q(x_0, x_1) }{\rho_t(x)} \, \d x_0 \, \d x_1. \label{mixture_pol}
\end{align}
\end{subequations}
Showing that the flow \eqref{mixture_rho} is a feasible solution to \eqref{SB} for any valid coupling $q(x_0, x_1)$ amounts to 
verifying that the flow $\rho_t(x)$ satisfies the boundary distributions {\color{\editcolor}$\rho_0, \rho_1$} at times $t=0$ and $t=1$, respectively. 
To prove that the policy \eqref{mixture_pol} produces~\eqref{mixture_rho}, it suffices to show that the pair \eqref{mixture_rho}, \eqref{mixture_pol} satisfies the FPK PDE \citep{lipman2022flow, liu2024generalized}.
When $q(x_0, x_1) = \pi^*_{\epsilon}(x_0,x_1)$, \eqref{mixture_rho} reduces to \eqref{opt_SB_decomb}, and \eqref{mixture_pol} recovers the optimal 
solution to \eqref{SB} \citep{shi2023diffusion, peluchetti2023diffusion}.  

\subsection{Schr\"{o}dinger Bridges with Gaussian Marginals}
The SBP
with Gaussian Marginals, henceforth referred to as the Gaussian SB
(GSB), has been extensively studied in the literature and can be solved either analytically for simple choices of prior dynamics \citep{bunne2023schrodinger} or as a convex 
semidefinite optimization problem 
for general linear dynamical systems both for continuous and discrete time cases \citep{chen2015optimal2, liu2022optimal}.
Because we use the GSB as a building block to construct a policy that works with general GMM boundary distributions, we briefly review the available methods for its solution here.
To this end, consider the optimization problem with Gaussian marginals
\begin{align}
\label{GSB}
     \min_{u \in \mathcal{U}} \; J_{\mathrm{GSB}} \triangleq 
 \E \left[ \int_{0}^{1}{ \| u_t(x_t) \|^2 \, \d t} \right], \quad
 \mathrm{s.t.} \begin{cases}
     \d x_t = u_t(x_t)\, \d t + \sqrt{\epsilon}\, \d w,\\
      x_0 \sim \N(\mu_0, \Sigma_0), \quad x_1 \sim \N(\mu_1, \Sigma_1),
 \end{cases}
\end{align}
%
%
where $\mu_0, \Sigma_0, \mu_1, \Sigma_1$ are the means and covariances of the initial and final Gaussian boundary distributions, respectively. 

%
\begin{proposition} \label{prop:Bunne}
    \citep[Theorem 3]{bunne2023schrodinger} The optimal solution to Problem \eqref{GSB} is given by  $ u_t(x) = K_t(x-\mu_t) + v_t$ with $\mu_t = (1-t)\mu_0 + t \mu_1$, $v_t = \mu_1 - \mu_0$, and $K_t = S_t \t \Sigma_t^{-1}$, where
    \begin{subequations} \label{GSB_sol}
        \begin{align} \nonumber
            & \Sigma_t = (1-t)^2 \Sigma_0 + t^2 \Sigma_1 + (1-t) t (C_{\epsilon} + C_{\epsilon}\t + \epsilon I), \\
            & S_t = t (\Sigma_1 - C_{\epsilon}\t) - (1-t) (\Sigma_0 -  C_{\epsilon}) - \epsilon t I, \nonumber
        \end{align}
    \end{subequations}
    with
    $C_{\epsilon} = \frac{1}{2} ( \Sigma_0^{\frac{1}{2}} D_{\epsilon} \Sigma_0^{-\frac{1}{2}} - \epsilon I)$ and $D_{\epsilon} =( 4 \Sigma_0^{\frac{1}{2}} \Sigma_{1} \Sigma_0^{\frac{1}{2}} + \epsilon^2 I)^{\frac{1}{2}}$. 
\end{proposition}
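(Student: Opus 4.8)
The plan is to use that, under the Brownian prior $\d x_t = \sqrt{\epsilon}\,\d w$ and Gaussian endpoints, the optimal bridge is a Gaussian process, so the whole problem reduces to propagating a mean and a covariance. First I would argue that the optimal drift is affine in the state, $u_t(x) = K_t(x-\mu_t) + v_t$. This can be read off from Schr\"odinger-bridge theory: the optimal drift is $u_t = \epsilon\nabla_x\log\Psi_t$ for a potential $\Psi_t$ solving the backward heat equation of the Schr\"odinger system, and for Gaussian boundary data this potential is log-quadratic, so its gradient is affine; equivalently, one may restrict a priori to affine policies, since these are exactly the drifts that keep $x_t$ Gaussian, and then verify optimality within this class.

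Under an affine drift the linear SDE gives the decoupled moment equations $\dot\mu_t = K_t\mu_t + b_t$ and $\dot\Sigma_t = K_t\Sigma_t + \Sigma_t K_t\t + \epsilon I$, and, writing $u_t(x_t) = K_t(x_t-\mu_t) + (K_t\mu_t+b_t)$ and taking expectations, the cost splits with no cross term as $J_{\mathrm{GSB}} = \int_0^1 \bigl(\|\dot\mu_t\|^2 + \tr(K_t\Sigma_t K_t\t)\bigr)\,\d t$. The mean subproblem is then immediate: minimizing $\int_0^1\|\dot\mu_t\|^2\,\d t$ with the endpoints $\mu_0,\mu_1$ fixed yields the straight line $\mu_t = (1-t)\mu_0 + t\mu_1$, so that $v_t = \dot\mu_t = \mu_1-\mu_0$ and $b_t = v_t - K_t\mu_t$, recovering the claimed affine form.

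The substance is the covariance subproblem: minimize $\int_0^1 \tr(K_t\Sigma_t K_t\t)\,\d t$ over $K_t$ subject to $\dot\Sigma_t = K_t\Sigma_t + \Sigma_t K_t\t + \epsilon I$ with $\Sigma(0)=\Sigma_0$, $\Sigma(1)=\Sigma_1$. I would treat this as a deterministic matrix optimal-control problem and apply the Pontryagin maximum principle with a symmetric costate $\Lambda_t$. Stationarity of the Hamiltonian in the control gives $K_t = -\Lambda_t$ (hence a symmetric gain, as expected from the gradient structure of the optimal drift), and the adjoint equation reduces the problem to the Hamiltonian system $\dot\Lambda_t = \Lambda_t^2$, $\dot\Sigma_t = -(\Lambda_t\Sigma_t + \Sigma_t\Lambda_t) + \epsilon I$. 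The first equation is an autonomous matrix Riccati equation with an explicit solution, after which $\Sigma_t$ solves a time-varying Lyapunov equation; imposing the two-point boundary data $\Sigma_0,\Sigma_1$ fixes the constant of integration. This boundary-matching step is the main obstacle and is where all the algebra concentrates: the symmetric square root $D_{\epsilon} = (4\Sigma_0^{\frac{1}{2}}\Sigma_1\Sigma_0^{\frac{1}{2}} + \epsilon^2 I)^{\frac{1}{2}}$ is precisely the object that solves the resulting algebraic matrix equation, and propagating it back through the change of variables produces the cross term $C_{\epsilon}$, then $\Sigma_t$, $S_t$, and finally $K_t = S_t\t\Sigma_t^{-1}$.

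As an independent cross-check I would use the decomposition \eqref{opt_SB_decomb}. The entropic OT plan between the two Gaussians is itself jointly Gaussian with marginals $\N(\mu_0,\Sigma_0),\N(\mu_1,\Sigma_1)$ and cross-covariance block $C_{\epsilon}$, and $\rho_t^*$ is the law of a mixture of Brownian bridges pinned at its endpoints. Since such a mixture is Gaussian, its mean follows from the tower property (giving the straight line) and its covariance from the law of total covariance: the conditional pinned-bridge covariance contributes $(1-t)t\,\epsilon I$ and the variation of the interpolated endpoints contributes $(1-t)^2\Sigma_0 + t^2\Sigma_1 + (1-t)t(C_{\epsilon}+C_{\epsilon}\t)$, which sum to exactly the stated $\Sigma_t$; the gain $K_t = S_t\t\Sigma_t^{-1}$ then follows from the Fokker--Planck equation for this Gaussian flow. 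Either way the single hard ingredient is the closed form of the entropic coupling, equivalently the constant of motion $D_{\epsilon}$ of the Hamiltonian flow; once it is in hand, the remaining identities are routine substitution and boundary verification.
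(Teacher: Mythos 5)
The paper does not actually prove Proposition~\ref{prop:Bunne}; it imports it verbatim from \citet[Theorem 3]{bunne2023schrodinger}, so there is no in-paper proof to compare against. Judged on its own, your outline is sound and, notably, your ``independent cross-check'' is essentially the derivation used in the cited source: represent $\rho_t^*$ via \eqref{opt_SB_decomb} as a mixture of Brownian bridges over the entropic Gaussian coupling, get $\mu_t$ from the tower property, get $\Sigma_t$ from the law of total covariance (the $(1-t)t\,\epsilon I$ pinned-bridge term plus the endpoint-interpolation term), and read off $K_t$ from the Fokker--Planck equation — and indeed one can verify $K_t\Sigma_t + \Sigma_t K_t\t = S_t + S_t\t = \dot\Sigma_t - \epsilon I$, consistent with the affine-drift covariance ODE. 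Your first route (affine-policy reduction, cost splitting into $\|\dot\mu_t\|^2 + \tr(K_t\Sigma_t K_t\t)$, and the Pontryagin/Riccati system $\dot\Lambda_t=\Lambda_t^2$, $\dot\Sigma_t=-(\Lambda_t\Sigma_t+\Sigma_t\Lambda_t)+\epsilon I$) is the Chen--Georgiou--Pavon covariance-steering derivation and is also correct in structure.

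The one genuine gap is that both of your routes treat the closed form of $C_{\epsilon}$ and $D_{\epsilon}$ as a black box, yet those formulas \emph{are} the content of the proposition: everything else ($\mu_t$ linear, $v_t=\mu_1-\mu_0$, the shape of $\Sigma_t$ and $S_t$) is routine once the cross-covariance of the entropic coupling is known. A self-contained proof must either solve the two-point boundary problem for the Riccati flow explicitly (showing that matching $\Sigma_0,\Sigma_1$ forces the square root $D_{\epsilon}=(4\Sigma_0^{1/2}\Sigma_1\Sigma_0^{1/2}+\epsilon^2 I)^{1/2}$) or derive the Gaussian entropic OT plan's cross-covariance directly (the result of Janati et al.\ that Bunne et al.\ rely on). You correctly locate where this computation lives, but you do not carry it out, so as written the argument verifies the announced formulas rather than producing them. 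A secondary point worth one line in a full write-up: the reduction to affine drifts needs the statement that the Schr\"odinger potentials for Gaussian marginals under a Brownian prior are log-quadratic (or an explicit verification-argument within the affine class), which you assert but do not justify.
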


Furthermore, the optimal value of the cost $J_{\mathrm{GSB}}$ in \eqref{GSB} is given by the following proposition. 

\begin{proposition} \label{epsilon_W2}
Consider Problem~\eqref{GSB} with $\epsilon>0$. 
Then, the optimal value for the cost $J_{\mathrm{GSB}}$ is 
    \begin{equation} \label{JGSB} 
        J_{\mathrm{GSB}} = \| \mu_1 - \mu_0 \|^2 + \tr(\Sigma_0) + \tr(\Sigma_1) - \epsilon \left( \tr M_{2 \epsilon} - \log \det M_{2 \epsilon} + \log \det \Sigma_1 \right) + c,
    \end{equation}
    where $M_{\epsilon} = I + (I + \frac{16}{\epsilon^2} \Sigma_0 \Sigma_1)^{\frac{1}{2}}$, and $c$ is a constant independent of the
    boundary distributions.
\end{proposition}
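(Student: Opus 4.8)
The plan is to evaluate the cost directly on the optimal policy supplied by Proposition~\ref{prop:Bunne}. Substituting $u_t(x)=K_t(x-\mu_t)+v_t$ into $J_{\mathrm{GSB}}$ and using that the optimally controlled law is Gaussian, $x_t\sim\N(\mu_t,\Sigma_t)$, the expectation of $\|u_t(x_t)\|^2$ splits into a mean part and a fluctuation part: the cross term vanishes because $\E[x_t-\mu_t]=0$, the mean part contributes $\|v_t\|^2=\|\mu_1-\mu_0\|^2$, and the fluctuation part contributes $\E[\|K_t(x_t-\mu_t)\|^2]=\tr(K_t\Sigma_t K_t\t)$. Since $v_t$ is constant in $t$, integrating over $[0,1]$ immediately yields the first term $\|\mu_1-\mu_0\|^2$, so the whole problem reduces to computing $\int_0^1 \tr(K_t\Sigma_t K_t\t)\,\d t$.

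To handle this integral I would first simplify the integrand using the structure of $K_t=S_t\t\Sigma_t^{-1}$: because $K_t\Sigma_t=S_t\t$, one has $\tr(K_t\Sigma_t K_t\t)=\tr(\Sigma_t^{-1}S_t S_t\t)$. A useful auxiliary identity is the Lyapunov relation $\dot\Sigma_t=S_t+S_t\t+\epsilon I$, which follows either by differentiating the closed form of $\Sigma_t$ in \eqref{GSB_sol} or from the fact that $\Sigma_t$ is the covariance of the linear SDE $\d x_t=(K_t(x_t-\mu_t)+v_t)\,\d t+\sqrt{\epsilon}\,\d w$; it lets one trade quadratic-in-$S_t$ expressions for derivatives of $\Sigma_t$ and of $\log\det\Sigma_t$ while organizing the antiderivative.

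The evaluation itself I would carry out by simultaneous reduction. Diagonalizing the symmetric matrix $\Sigma_0^{1/2}\Sigma_1\Sigma_0^{1/2}=U\Lambda U\t$ with $\Lambda=\mathrm{diag}(\lambda_i)$, every object in \eqref{GSB_sol} is expressed through $\Sigma_0^{1/2}$, $U$, and $\Lambda$; in particular $D_\epsilon$ has eigenvalues $\delta_i=\sqrt{4\lambda_i+\epsilon^2}$. The point that makes the final formula clean is that the quantities appearing in \eqref{JGSB} are \emph{invariants}: $\tr M_{2\epsilon}$ and $\det M_{2\epsilon}$ depend only on the spectrum of $\Sigma_0\Sigma_1$, so after integrating the time-dependent traces I would re-express the result through $\tr D_\epsilon=\sum_i\delta_i$ and the corresponding determinant, matching $M_{2\epsilon}=I+(I+\tfrac{4}{\epsilon^2}\Sigma_0\Sigma_1)^{1/2}$, which is similar to $I+\tfrac{1}{\epsilon}D_\epsilon$. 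Collecting the $\tr$ and $\log\det$ contributions and absorbing every piece that does not depend on $\mu_0,\mu_1,\Sigma_0,\Sigma_1$ (e.g.\ terms proportional to $d$ or to $\log\epsilon$) into the constant $c$ would produce \eqref{JGSB}.

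The main obstacle is the explicit time integral $\int_0^1\tr(\Sigma_t^{-1}S_tS_t\t)\,\d t$: because $\Sigma_0$ and $\Sigma_1$ do not commute and $C_\epsilon$ is generally non-symmetric, the integrand does not separate into independent scalar integrals in any orthogonal basis, and one must track the matrix algebra carefully before the spectral invariants emerge. A cleaner alternative that sidesteps the integral entirely is the static reduction: by Girsanov's theorem $J_{\mathrm{GSB}}=2\epsilon\,\KL(\P^\ast\|\mathbb{W})$ for the reference $\sqrt{\epsilon}$-Brownian law $\mathbb{W}$, and the Schr\"odinger-bridge decomposition \eqref{opt_SB_decomb} collapses this to $\KL(\pi^\ast_\epsilon\|R_{01})$ between two $2d$-dimensional Gaussians---the optimal entropic coupling with cross-covariance $C_\epsilon$ and the reference endpoint law with block covariance $\bigl(\begin{smallmatrix}\Sigma_0 & \Sigma_0\\ \Sigma_0 & \Sigma_0+\epsilon I\end{smallmatrix}\bigr)$. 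The Gaussian KL formula, simplified with Schur complements (the reference Schur complement is the clean $\epsilon I$), then yields \eqref{JGSB} directly, with the asymmetric $\log\det\Sigma_1$ arising from the $\log(\det R_{01}/\det \pi^\ast_\epsilon)$ term.
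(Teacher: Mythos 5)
Your ``cleaner alternative'' at the end is, in essence, the paper's actual proof, and it is the route that works. The paper likewise reduces the dynamic problem to a static one via Girsanov's theorem and the disintegration of path measures, arriving at $\tfrac{1}{2\epsilon}J_{\mathrm{GSB}}=\inf_{\pi\in\Pi(\rho_0,\rho_1)}\KL(\pi\,\|\,R_{01})$ with $R_{01}$ the endpoint law of the reference $\sqrt{\epsilon}$-Brownian motion started from $\rho_0$. The divergence is only in the last step: the paper rewrites the static problem using $\KL(\pi\|\rho_0\otimes\rho_1)=-H(\pi)+H(\rho_0)+H(\rho_1)$ to get $J_{\mathrm{GSB}}=\mathbb{W}_2^{2\epsilon}(\rho_0\|\rho_1)-2\epsilon H(\rho_1)+c$, and then simply quotes the closed form of Mallasto et al.\ for the entropic Wasserstein cost between Gaussians (the regularization parameter doubles, which is why $M_{2\epsilon}$ appears, and the asymmetric $-\epsilon\log\det\Sigma_1$ is exactly $-2\epsilon H(\rho_1)$ up to constants). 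You instead propose to evaluate $2\epsilon\,\KL(\pi^*_{\epsilon}\|R_{01})$ directly as a KL divergence between two $2d$-dimensional Gaussians, using the cross-covariance $C_\epsilon$ from Proposition~\ref{prop:Bunne}. That is viable: the Schur complement of $R_{01}$ is indeed $\epsilon I$, the mean term yields $\tfrac{1}{\epsilon}\|\mu_1-\mu_0\|^2$, and the trace term yields $\tfrac{1}{\epsilon}\left(\tr\Sigma_0+\tr\Sigma_1-2\tr C_\epsilon\right)$ with $2\tr C_\epsilon=\epsilon\tr M_{2\epsilon}-2\epsilon d$. What it buys you is independence from the cited closed form; what it costs is more matrix algebra (especially reconciling $\log\det\bigl(\Sigma_1-C_\epsilon\t\Sigma_0^{-1}C_\epsilon\bigr)$ with $\log\det M_{2\epsilon}+\log\det\Sigma_1$) and the additional input that $C_\epsilon$ is the \emph{optimal} cross-covariance, whereas the paper only needs the optimal \emph{value} of the static problem.

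The route you lead with, however, is not a proof as written. Substituting the optimal feedback law reduces the problem to $\int_0^1\tr\bigl(\Sigma_t^{-1}S_tS_t\t\bigr)\,\d t$, and you correctly observe that $\Sigma_0$, $\Sigma_1$ and the non-symmetric $C_\epsilon$ do not admit a common diagonalization, so the integrand does not split into scalar integrals; the subsequent appeal to ``spectral invariants emerging'' after integration is a hope, not an argument, and nothing in the proposal actually evaluates the integral or produces the $\tr M_{2\epsilon}-\log\det M_{2\epsilon}$ combination from it. Either complete that computation or drop it; as it stands, the static reduction in your last paragraph is the only part of the proposal that closes the argument.
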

 In the limit of $\epsilon \rightarrow 0$, \eqref{JGSB} reduces to the well known Bures-Wasserstein distance \citep{bhatia2019bures}, defined by 
\begin{equation} \label{BWd}
    \mathrm{B}\mathbb{W}(\N(\mu_0, \Sigma_0) \| \N(\mu_1, \Sigma_1)) \triangleq \| \mu_1 - \mu_0 \|^2 + \tr(\Sigma_0) + \tr(\Sigma_1) - 2 \tr \big( \Sigma_1^{\frac{1}{2}} \Sigma_0 \Sigma_1^{\frac{1}{2}} \big)^{\frac{1}{2}}.
\end{equation}
\subsection{Momentum Schr\"{o}dinger Bridges with multiple Gaussian marginals}

Other than Problem \eqref{GSB}, we will also make use of the solution to the corresponding Gaussian multi-marginal Momentum SB
(GMSB) \citep{chen2019multi}, which is a variation of Problem \eqref{GSB} where the dynamics include a momentum term, while the goal is to match multiple marginal distributions at regular time intervals. 
More specifically, the GMSB problem reads
\begin{subequations} \label{GmSB}
\begin{align}
\min_{u \in \mathcal{U}} \quad &
 \E \left[ \int_{0}^{1}{ \| u_t(x_t, v_t) \|^2 \, \d t} \right], \label{GmSB:cost} \\
\mathrm{s.t.} \quad & \d x_t = v_t \, \d t, \quad\d v_t = u_t(x_t, v_t)\, \d t + \sqrt{\epsilon}\, \d w, \label{GmSB:dyn} \\
& x_{t_i} \sim \N(\mu_{t_i}, \Sigma_{t_i}), \quad i = 1,\dots, N,\label{GmSB:BC}
\end{align}
\end{subequations}
where $N$ is the number of marginal constraints and the joint space of position and velocity, namely $x_t, v_t \in \mathbb{R}^d$, is referred to as the phase space.  
Compared to the standard SB problem, and apart from having multiple marginal distributions, the GMSB problem only constrains the position component of the phase space, namely $x_t$. 
Even when the marginals are Gaussian, a closed-form solution to \eqref{GmSB} is unknown; however, the problem can be solved efficiently using semidefinite programming.
In the special case where the noise parameter $\epsilon$ is zero, the problem is known as the Gaussian Wasserstein spline Problem~\citep{chen_measure-valued_2018}, and an efficient semidefinite formulation for solving it is given in~\cite{chen_measure-valued_2018}.
Since the semidefinite formulation for solving \eqref{GmSB} is a well-studied problem, due to space considerations, we defer it to Appendix~\ref{App:OCS}.  
Finally, we note that a generalization of the Problems \eqref{GSB} and \eqref{GmSB} is achieved by a Linear Time Varying (LTV) structure in the prior dynamics of the bridge, i.e., replacing the first constraint in \eqref{GSB} with 
\begin{equation} \label{LTV_main}
    \d x_t = A_t x_t \, \d t + B_t u_t \, \d t + D_t \, \d w,
\end{equation}
where  $x_t \in \sR^d, \, A_t \in \sR^{d \times d}, \, u_t \in \sR^{m}, \, B_t \in \sR^{d \times m}, \, D_t \in \sR^{d \times q}, \, w_t \in \sR^q$. 
Bridges with prior dynamics of the form \eqref{LTV_main} have been extensively studied in the context of control theory, with the corresponding literature known as Covariance Steering (CS) \citep{chen2015optimal, chen2015optimal2, bakolas2018finite, liu2022optimal}.
CS problems can be formulated as convex programs for both continuous and discrete-time cases~\citep{chen2015optimal2, liu2022optimal}, and therefore attain an efficient and exact calculation, which we will exploit in the sequel.

\section{Fast Diffusion for Mixture Models}

\subsection{Gaussian Mixture Schr\"odinger Bridge}\label{GMMflow}

Equation~\eqref{mixture_pol} expresses the policy of Problem \eqref{SB} as an infinite mixture of conditional, point-to-point policies. 
In this section, we extend this idea to construct a mixture policy, consisting of conditional policies each solving a Gaussian bridge sub-problem of the 
form~\eqref{GSB}.
To this end, consider the problem
\begin{subequations} \label{GMM_SB}
\begin{align}
 \min_{u \in \mathcal{U}} \quad &J_{\mathrm{GMM}} \triangleq \E \left[ \int_{0}^{1}{ \| u_t(x_t) \|^2 \,
\d t} \right], \label{GMM_SB:cost} \\
\mathrm{s.t.} \quad &\d x_t = u_t(x_t) \, \d t + \sqrt{\epsilon} \, \d w, \label{GMM_SB:dyn} \\
& x_0 \sim  \textstyle\sum_{i=1}^{N_0} \alpha_0^i \N \left(\mu_0^i, \Sigma_0^i\right), \quad  x_1 \sim \textstyle\sum_{j=1}^{N_1} \alpha_1^j \N \left(\mu_1^j, \Sigma_1^j\right)    .  \label{GMM_SB:BC}
\end{align}
\end{subequations}
%
%
The main result is summarized in the following theorem. 
\begin{theorem} \label{feasibility_thm}
    Consider problem \eqref{GMM_SB}, with $N_0$ components in the initial mixture and $N_1$ components in the terminal mixture. 
    Assume that $u_{t|ij}$ is the conditional policy that solves the $(i,j)$-GSB problem, that is, the bridge from the $i$-th component of the initial mixture, to the $j$-th component of the terminal mixture and let the resulting probability flow be $\rho_{t|ij}$. 
    Furthermore,  let $\lambda_{ij} \geq 0$ such that, for all $j\in\{1, 2, \dots, N_1\}$, $\sum_i \lambda_{ij} = \alpha_1^j$ and such that, for all $i \in\{1, 2, \dots, N_0\}$, $\sum_{j} \lambda_{ij} = \alpha_0^i$.
    Then, the policy
    \begin{equation}\label{lambda_pol}
        u_t(x) = \textstyle\sum_{i,j} {u_{t|ij}(x) \frac{ \rho_{t|ij}(x)\lambda_{ij}}{\sum_{i,j} \rho_{t|ij}(x) \lambda_{ij}}},
    \end{equation}
    is a feasible policy for Problem \eqref{GMM_SB}, and
    the corresponding probability flow is
\begin{equation}\label{rho_lambda}
    \rho_t(x) = \textstyle\sum_{i,j} \rho_{t|ij}(x) \lambda_{ij}.
\end{equation}
\end{theorem}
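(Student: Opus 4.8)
The plan is to follow the same two-step verification used for the continuous decomposition in \eqref{mixture_rho}--\eqref{mixture_pol}: first check that the flow $\rho_t$ in \eqref{rho_lambda} matches the prescribed GMM marginals at $t=0$ and $t=1$, and then verify that the pair $(\rho_t, u_t)$ from \eqref{rho_lambda}, \eqref{lambda_pol} satisfies the Fokker--Planck--Kolmogorov (FPK) equation associated with the prior dynamics \eqref{GMM_SB:dyn}. Since each conditional pair $(\rho_{t|ij}, u_{t|ij})$ solves the $(i,j)$-GSB problem of Proposition~\ref{prop:Bunne}, it satisfies by construction
\begin{equation} \label{fpk_cond}
    \partial_t \rho_{t|ij} = -\nabla\cdot\left(u_{t|ij}\,\rho_{t|ij}\right) + \frac{\epsilon}{2}\,\Delta\rho_{t|ij},
\end{equation}
with Gaussian endpoints $\rho_{0|ij} = \N(\mu_0^i, \Sigma_0^i)$ and $\rho_{1|ij} = \N(\mu_1^j, \Sigma_1^j)$; this is the only structural fact about the GSB solution that the argument needs.

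For the boundary conditions I would use that the initial endpoint $\rho_{0|ij}$ depends only on the index $i$, not on $j$, so that, by \eqref{rho_lambda} and the constraint $\sum_j \lambda_{ij} = \alpha_0^i$,
\begin{equation} \label{bc_zero}
    \rho_0(x) = \textstyle\sum_{i,j} \lambda_{ij}\,\N(x;\mu_0^i,\Sigma_0^i) = \textstyle\sum_i \Big(\textstyle\sum_j \lambda_{ij}\Big)\N(x;\mu_0^i,\Sigma_0^i) = \textstyle\sum_i \alpha_0^i\,\N(x;\mu_0^i,\Sigma_0^i),
\end{equation}
which is exactly the initial marginal in \eqref{GMM_SB:BC}. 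The terminal condition follows symmetrically, since $\rho_{1|ij}$ depends only on $j$ and $\sum_i \lambda_{ij} = \alpha_1^j$.

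The core step is the FPK verification. Summing \eqref{fpk_cond} against the weights $\lambda_{ij}$ and using linearity of both the Laplacian and the divergence operator gives $\partial_t\rho_t = -\nabla\cdot\big(\sum_{i,j}\lambda_{ij}u_{t|ij}\rho_{t|ij}\big) + \frac{\epsilon}{2}\Delta\rho_t$. The key observation is that the policy \eqref{lambda_pol} is defined precisely so that the probability flux factorizes, $u_t(x)\rho_t(x) = \sum_{i,j}\lambda_{ij}u_{t|ij}(x)\rho_{t|ij}(x)$: the nonlinear normalization by $\rho_t$ in \eqref{lambda_pol} cancels against this denominator, which is licit because $\rho_t(x)>0$ everywhere as a positive combination of Gaussian densities. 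Substituting this identity yields $\partial_t\rho_t = -\nabla\cdot(u_t\rho_t) + \frac{\epsilon}{2}\Delta\rho_t$, i.e., $(\rho_t,u_t)$ solves the FPK equation, so the drift $u_t$ indeed transports $\rho_0$ into $\rho_1$ along the flow \eqref{rho_lambda}.

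I expect the main obstacle to be conceptual rather than computational: recognizing that the nonlinearity in \eqref{lambda_pol} disappears once one passes to the flux $u_t\rho_t$, which is exactly what linearizes the FPK operator over the finite mixture. The remaining technical care is to confirm $u_t\in\mathcal{U}$, i.e., that the mixture drift is adapted and of finite energy; this is inherited from the finite energy of each GSB policy $u_{t|ij}$ together with the fact that the weights $\rho_{t|ij}\lambda_{ij}/\rho_t$ are nonnegative and sum to one, so that $\|u_t\|^2$ is controlled by the convex combination of the $\|u_{t|ij}\|^2$ via Jensen's inequality.
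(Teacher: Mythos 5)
Your proposal is correct and follows essentially the same route as the paper's proof: verify the boundary marginals via the row/column constraints on $\lambda_{ij}$, then sum the conditional FPK equations weighted by $\lambda_{ij}$ and use the flux identity $u_t\rho_t=\sum_{i,j}\lambda_{ij}u_{t|ij}\rho_{t|ij}$ to recover the unconditional FPK equation (the paper carries this out for the more general LTV drift $A_tx+B_tu$, of which \eqref{GMM_SB:dyn} is a special case). Your closing remark on finite energy of $u_t$ via Jensen's inequality is a sensible addition the paper defers to the proof of Theorem~\ref{OT_thm}, but it does not change the argument.
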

The mixture policy \eqref{lambda_pol} is a weighted average of conditional policies, weighted according to  $\lambda_{ij} \rho_{t|ij}(x)$, while the denominator $\sum_{i,j} \rho_{t|ij}(x) \lambda_{ij}$ is just a normalizing constant.
Since $\rho_{t|ij}(x)$ is a Gaussian distribution centered at the mean of the $(i,j)$-Gaussian bridge at time $t$, this weighting scheme prioritizes the conditional policies whose mean is closer to the value of $x$ at the time~$t$.


Equation~\eqref{lambda_pol} provides a set feasible {\color{\editcolor} of} solutions to Problem~\eqref{GMM_SB}, for all values of $\lambda_{ij}$ satisfying the conditions of Theorem~\ref{feasibility_thm}.
Obtaining the optimal policy within this set 
is challenging, in general.
Alternatively, we can formulate a tractable problem by minimizing 
an upper bound to the original minimum effort cost function~\eqref{GMM_SB:cost}, which is linear with respect to the transport plan $\lambda_{ij}$.
%
%
More formally, we have the following theorem.

\begin{theorem} \label{OT_thm}
Let $J_{ij}$ be the optimal cost of solving the $(i,j)$-Gaussian bridge subproblem of the form \eqref{GSB} with marginal distributions the $i$-th component of the initial and the $j$-th component of the terminal mixture.
Then, the cost function of the linear optimization problem
\begin{subequations}\label{fleet_OT}
\begin{align}
\min_{\lambda_{ij} \, \geq \, 0} \quad & J_{\mathrm{OT}} \triangleq \textstyle\sum_{i,j} \lambda_{ij} J_{ij} \label{OTcost} \\
\mathrm{s.t.} \quad & \textstyle\sum_j \lambda_{ij} = \alpha_0^i \;\; \forall i \in \{1, 2, \dots, N_0\}, \;\mathrm{and} \; \textstyle\sum_i \lambda_{ij} = \alpha_1^j, \;\; \forall j \in \{1, 2, \dots, N_{\color{\editcolor} 1}\}, \label{lambda_f}
\end{align}
\end{subequations}
provides an upper bound for \eqref{GMM_SB:cost}, i.e., $J_{\mathrm{GMM}} \leq J_{\mathrm{OT}}$, for all positive values of $\lambda_{ij}$ satisfying \eqref{lambda_f}.
\end{theorem}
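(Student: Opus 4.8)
The plan is to combine the feasibility guarantee of Theorem~\ref{feasibility_thm} with a pointwise convexity (Jensen) estimate on the instantaneous control effort. Since Theorem~\ref{feasibility_thm} certifies that, for any $\lambda_{ij}\ge 0$ satisfying the marginal constraints \eqref{lambda_f}, the mixture policy \eqref{lambda_pol} is feasible for \eqref{GMM_SB} and generates the flow \eqref{rho_lambda}, the minimum $J_{\mathrm{GMM}}$ is automatically bounded above by the cost incurred by this one particular policy. It therefore suffices to show that the cost of \eqref{lambda_pol}, evaluated along its own flow, does not exceed $\sum_{i,j}\lambda_{ij}J_{ij}$; minimizing the resulting bound over feasible $\lambda$ then yields the claimed $J_{\mathrm{GMM}}\le J_{\mathrm{OT}}$.

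First I would write the cost of \eqref{lambda_pol} as the space--time integral $\int_0^1\!\!\int_{\sR^d}\|u_t(x)\|^2\rho_t(x)\,\d x\,\d t$, using that the expectation in \eqref{GMM_SB:cost} is taken along the flow $\rho_t$ produced by this policy. The crucial observation is that, introducing the weights $w_{ij}(x)=\lambda_{ij}\rho_{t|ij}(x)/\rho_t(x)$, equation \eqref{lambda_pol} exhibits $u_t(x)=\sum_{i,j}w_{ij}(x)\,u_{t|ij}(x)$ as a genuine \emph{convex combination} of the conditional drifts: the $w_{ij}(x)$ are nonnegative and, by \eqref{rho_lambda}, sum to one at every $x$. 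Applying Jensen's inequality to the convex map $\xi\mapsto\|\xi\|^2$ then gives the pointwise bound $\|u_t(x)\|^2\le\sum_{i,j}w_{ij}(x)\|u_{t|ij}(x)\|^2$.

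Multiplying this inequality by $\rho_t(x)$ makes the normalizing denominator in $w_{ij}$ cancel, leaving $\|u_t(x)\|^2\rho_t(x)\le\sum_{i,j}\lambda_{ij}\rho_{t|ij}(x)\|u_{t|ij}(x)\|^2$. Integrating over $\sR^d$ and over $t\in[0,1]$ and interchanging the finite sum with the integral, each term becomes $\lambda_{ij}\int_0^1\!\!\int\|u_{t|ij}(x)\|^2\rho_{t|ij}(x)\,\d x\,\d t$, which is exactly $\lambda_{ij}J_{ij}$, the optimal cost of the $(i,j)$-Gaussian bridge, since $u_{t|ij},\rho_{t|ij}$ are its optimal policy and flow. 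This yields a cost bound $\sum_{i,j}\lambda_{ij}J_{ij}=J_{\mathrm{OT}}$ for policy \eqref{lambda_pol}, and combining with the feasibility bound $J_{\mathrm{GMM}}\le$ (cost of \eqref{lambda_pol}) completes the argument for every admissible $\lambda$.

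The argument is essentially a one-line application of Jensen's inequality once the mixture is recognized as a convex combination, so I do not expect a serious obstacle; the only points requiring care are bookkeeping rather than depth. Specifically, I must ensure that the cost is evaluated along the flow $\rho_t$ generated by \eqref{lambda_pol}, so that the weighting by $\rho_t$ is legitimate and the denominator cancels cleanly, which is exactly what Theorem~\ref{feasibility_thm} provides; and I must check that the normalization of the per-bridge cost matches, since both $J_{\mathrm{GMM}}$ in \eqref{GMM_SB:cost} and $J_{ij}$ in \eqref{GSB} use $\|u\|^2$ without the $1/(2\epsilon)$ prefactor, making the identification of the integral term with $J_{ij}$ exact.
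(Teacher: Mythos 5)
Your proposal is correct and follows essentially the same route as the paper's proof: both recognize \eqref{lambda_pol} as a convex combination of the conditional drifts with weights $\lambda_{ij}\rho_{t|ij}(x)/\rho_t(x)$, apply the discrete Jensen inequality to $\xi\mapsto\|\xi\|^2$, cancel the normalizing denominator against $\rho_t(x)$, and identify each remaining term with $\lambda_{ij}J_{ij}$ via Fubini. Your explicit invocation of Theorem~\ref{feasibility_thm} to justify that the minimum $J_{\mathrm{GMM}}$ is bounded by the cost of this particular feasible policy is a small clarification the paper leaves implicit, but the argument is the same.
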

For clarity of exposition, we defer the proofs of Theorems \ref{feasibility_thm}, \ref{OT_thm} to Appendix~\ref{App_proofs}, {\color{\editcolor} along with an optimality analysis of the upper bound of Theorem~\ref{OT_thm}.

In practice, to use policy \eqref{lambda_pol} in problems where the boundary distributions are available only through samples, GMMs are first fitted in the samples of $\rho_0, \rho_1$ using the Expectation Maximization (EM) algorithm \citep{bishop2006pattern}, and then \eqref{lambda_pol} is calculated using Theorems~\ref{feasibility_thm}, \ref{OT_thm}. For clarity of exposition, we present an overview of this approach in Algorithms~\ref{alg1}, \ref{alg2} for both training and inference, accompanied by the corresponding theoretical complexity analysis in Appendix \ref{App:complexity}.

\begin{algorithm}[!h]
\caption{GMMflow training}\label{alg1}
\hspace*{\algorithmicindent}\textbf{Input:} Samples from boundary distributions $\rho_0$, $\rho_1$; number of GMM components $N_0$ and $N_1$, noise level $\epsilon \geq0$.

\begin{algorithmic}

    \State $ \{\alpha_0^{i}, \mu_0^i, \Sigma_0^i\}_{i=1}^{N_0}\leftarrow$\textsc{EM}$(\rho_0, N_0)$  \,\textcolor{gray}{\textit{ // fits a GMM to initial dataset}}

    \State $ \{\alpha_1^{j}, \mu_1^j, \Sigma_1^j\}_{j=1}^{N_1}\leftarrow$\textsc{EM}$(\rho_1, N_1)$ \,\textcolor{gray}{\textit{ // fits a  GMM to final dataset}}
    
    \For{$(i,j) \in \{1,\dots, N_0\} \times \{1, \dots, N_1 \}$} \, \textcolor{gray}{\textit{//compute in parallel}}
        \State $\{u_{t|ij}, \rho_{t|ij}, J_{ij}\}\leftarrow$ \textsc{CS}$(\mu_0^i, \Sigma_0^i, \mu_1^j, \Sigma_1^j)$
    \,\textcolor{gray}{\textit{//solves the $(i,j)$-th conditional GSB}}
    \EndFor
    
    \State $\lambda_{ij} \leftarrow $ 
    \textsc{solve (15) using }$ \{J_{ij}, \alpha_0^{i}, \alpha_1^{j}\}$ 
    \State \Return $u_{t|ij}, \rho_{t|ij}, \lambda_{ij}$
    
\end{algorithmic}
\end{algorithm}

\begin{algorithm}[!h]
\caption{GMMflow inference}\label{alg2}
\hspace*{\algorithmicindent}\textbf{Input:} Component-level solutions $u_{t|ij}, \rho_{t|ij}$, transport plan $\lambda_{ij}$, Initial condition $x_0 \sim \rho_0$, SDE integrator $\texttt{sde\_int}()$.

\begin{algorithmic}
    \State $u_t(x) \leftarrow \text{Compute \eqref{lambda_pol}}$
    \State $x_t \leftarrow \texttt{sde\_int}(\eqref{GMM_SB:dyn}, x_0, t \in [0, 1])$
    \State \Return $x_1$ 
\end{algorithmic}
\end{algorithm}
}

\subsection{Multi-Marginal Problems}\label{sec:mmSB}

In this section, we generalize the results of Section~\ref{GMMflow} to solve the multi-marginal momentum SBs problem~\citep{chen2023deep, chen2019multi} with GMM marginal distributions. 
That is, we solve
\begin{subequations} \label{mSB_GMM}
    \begin{align}
        \min_{u\in\mathcal{U}} \quad & J_{\mathrm{GMM}} \triangleq \E \left[ \int_0^1 \| u_t(x_t, v_t) \|^2 \, \dt \right] \label{mSB_GMM:cost} \\
        \mathrm{s.t.} \quad &  \d x_t = v_t \, \dt, \quad\d v_t = u_t(x_t, v_t) \, \d t + \sqrt{\epsilon} \, \d w_t, \label{mSB_GMM:dyn} \\
        & x_{t_i} \sim \textstyle\sum_{k=1}^{N_i} \alpha^k_{i} \N(\mu_{i}^k, \Sigma_{i}^k), \quad i = 1, \dots M, \label{mSB_GMM:BC}
    \end{align}
\end{subequations}
where the $i$-th marginal mixture is assumed to have $N_i$ components. 
Similarly to Theorem \ref{feasibility_thm}, we will combine conditional GMSBs of the form \eqref{GmSB} to build a feasible set of policies solving \eqref{mSB_GMM}.
To facilitate notation, we denote by $\mathbf{i} = (i_1, \dots, i_M)$ the index of the conditional multi-marginal GMSB and use the notation $\{\mathbf{i}|i_j=k\}$ to denote the set of all values of $\mathbf{i}$ such that $i_j=k$. 
With this notation in mind, we provide the following generalizations of Theorems \ref{feasibility_thm} and \ref{OT_thm}:

\begin{theorem} \label{feasibility_thm_multi}
    Consider problem \eqref{mSB_GMM}, with $M$  marginal mixture distributions, each having $N_i$ Gaussian components, where $i = 1, \dots M$. 
    Let $\mathbf{i} = (i_1 \dots i_M)$ be an $M$-dimensional index, $u_{t|\mathbf{i}}$ be the conditional policy that solves the $\mathbf{i}$-GMSB problem, that is, the Gaussian multi-marginal momentum Schr\"{o}dinger Bridge going through the $(i_1, \dots, i_M)$ components of the marginal mixture models, and let the resulting probability flow be $\rho_{t|\mathbf{i}}$. 
    Furthermore,  let $\lambda_{\mathbf{i}} \geq 0$ be such that, for all $j = 1, \dots, M$ and for all $k=1,\dots, N_j$,
    \begin{equation} \label{lambda_feas_multi}
        \textstyle \sum_{\{\mathbf{i}|i_j = k\}} \lambda_{\mathbf{i}} = \alpha_j^{k}.
    \end{equation}
    Then, the policy
    \begin{equation}\label{lambda_pol_multi}
        u_t(x, v) = \textstyle\sum_{\mathbf{i}} {u_{t|\mathbf{i}}(x, v) \frac{ \rho_{t|\mathbf{i}}(x, v)\lambda_{\mathbf{i}}}{\sum_{\mathbf{i}} \rho_{t|\mathbf{i}}(x, v) \lambda_{\mathbf{i}}}},
    \end{equation}
    is a feasible policy for Problem \eqref{GMM_SB}, and
    the corresponding probability flow is
\begin{equation}\label{rho_lambda_multi}
    \rho_t(x, v) = \textstyle \sum_{\mathbf{i}} \rho_{t|\mathbf{i}}(x, v) \lambda_{\mathbf{i}}.
\end{equation}
\end{theorem}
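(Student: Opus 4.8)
The plan is to mirror the proof of Theorem~\ref{feasibility_thm}, adapting it to the phase-space (momentum) dynamics \eqref{mSB_GMM:dyn} and to the presence of $M$ intermediate marginals, of which only the \emph{position} component is constrained in \eqref{mSB_GMM:BC}. As already established for the two-marginal case, it suffices to check two things: first, that the candidate flow \eqref{rho_lambda_multi} reproduces the prescribed position marginals at every time $t_j$; and second, that the pair $(\rho_t,u_t)$ given by \eqref{rho_lambda_multi} and \eqref{lambda_pol_multi} satisfies the Fokker--Planck--Kolmogorov (FPK) equation associated with \eqref{mSB_GMM:dyn}, which certifies that $u_t$ generates $\rho_t$.

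For the marginal conditions I would fix a marginal time $t_j$ and marginalize \eqref{rho_lambda_multi} over the unconstrained velocity coordinate $v$. By construction of the $\mathbf{i}$-GMSB, the position marginal of $\rho_{t_j|\mathbf{i}}$ is the $i_j$-th Gaussian $\N(\mu_j^{i_j},\Sigma_j^{i_j})$ of the $j$-th mixture. Summing over $\mathbf{i}$ and regrouping according to the value $i_j=k$ yields
\begin{equation*}
  \textstyle\sum_{k=1}^{N_j} \N(\mu_j^{k},\Sigma_j^{k}) \sum_{\{\mathbf{i}|i_j=k\}} \lambda_{\mathbf{i}} = \sum_{k=1}^{N_j} \alpha_j^{k}\,\N(\mu_j^{k},\Sigma_j^{k}),
\end{equation*}
where the last equality invokes the feasibility constraint \eqref{lambda_feas_multi}. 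This is exactly the required $j$-th position marginal, and since the argument runs for each $j$ independently, all $M$ marginals are matched.

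For the FPK equation, each conditional pair $(\rho_{t|\mathbf{i}},u_{t|\mathbf{i}})$ solves a GMSB and hence satisfies $\partial_t \rho_{t|\mathbf{i}} = -\nabla_x\!\cdot(v\,\rho_{t|\mathbf{i}}) - \nabla_v\!\cdot(u_{t|\mathbf{i}}\rho_{t|\mathbf{i}}) + \tfrac{\epsilon}{2}\Delta_v \rho_{t|\mathbf{i}}$ on the whole interval. Taking the $\lambda_{\mathbf{i}}$-weighted sum, linearity makes the advection term in $x$ and the diffusion term in $v$ immediately reproduce $-\nabla_x\!\cdot(v\,\rho_t)$ and $\tfrac{\epsilon}{2}\Delta_v \rho_t$ for $\rho_t$ in \eqref{rho_lambda_multi}. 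The only term nonlinear in the policy is the velocity advection; here the key observation is that the weighting in \eqref{lambda_pol_multi} is precisely the one for which $u_t\,\rho_t = \sum_{\mathbf{i}} \lambda_{\mathbf{i}}\,u_{t|\mathbf{i}}\,\rho_{t|\mathbf{i}}$, so that $-\nabla_v\!\cdot(u_t\rho_t)$ matches the weighted sum as well. Thus $(\rho_t,u_t)$ satisfies the FPK equation, proving that \eqref{lambda_pol_multi} generates \eqref{rho_lambda_multi}. Finiteness of the energy, and hence $u_t\in\mathcal{U}$, follows from the pointwise convexity of \eqref{lambda_pol_multi} in the $u_{t|\mathbf{i}}$: a Jensen inequality against the weights $\rho_{t|\mathbf{i}}\lambda_{\mathbf{i}}/\rho_t$ bounds $\E\int_0^1\|u_t\|^2\,\dt$ by $\sum_{\mathbf{i}}\lambda_{\mathbf{i}}\,\E\int_0^1\|u_{t|\mathbf{i}}\|^2\,\dt$, which is finite since each conditional GMSB policy has finite energy (the same estimate underlies the multi-marginal analogue of Theorem~\ref{OT_thm}).

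I expect the main obstacle to be the intermediate-marginal bookkeeping rather than any single hard estimate. Unlike the two-marginal Theorem~\ref{feasibility_thm}, where only the endpoints $t=0,1$ are checked, here one must ensure that each conditional flow $\rho_{t|\mathbf{i}}$ solves a \emph{single} global FPK equation across all of $[t_1,t_M]$ despite being pinned at several Gaussians, and that the $M$ families of transport constraints \eqref{lambda_feas_multi}, one per marginal index $j$, hold simultaneously so the regrouping step goes through at every $t_j$ without interference across different $j$. Because only the position marginals are fixed while the velocities remain free, the marginalization over $v$ must be carried out explicitly; this is exactly why \eqref{lambda_feas_multi} is imposed on the position-component index alone.
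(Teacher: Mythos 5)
Your proposal is correct and follows essentially the same route as the paper's proof: verify the position marginals by marginalizing over $v$ and regrouping the sum over $\mathbf{i}$ according to $i_j=k$ using \eqref{lambda_feas_multi}, then establish that \eqref{lambda_pol_multi} generates \eqref{rho_lambda_multi} by taking the $\lambda_{\mathbf{i}}$-weighted sum of the conditional FPK equations and exploiting the identity $u_t\rho_t=\sum_{\mathbf{i}}\lambda_{\mathbf{i}}u_{t|\mathbf{i}}\rho_{t|\mathbf{i}}$. The only cosmetic difference is that the paper writes the FPK step in the general LTV form $A z + B u$ with $\tfrac{1}{2}\tr(DD\t\hess\rho)$ rather than the explicit kinetic form, and your additional Jensen-based finite-energy remark is the same estimate the paper defers to Theorem~\ref{mOT_thm_multi}.
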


To approximate the optimal multi-marginal transport plan $\lambda_{\mathbf{i}}$, we use the following upper bound.
\begin{theorem} \label{mOT_thm_multi}
Let $J_{\mathbf{i}}$ be the optimal cost of solving the $\mathbf{i}$-GMSB subproblem of the form \eqref{GmSB} with marginal distributions $(i_1, \dots, i_M)$ components of the marginal mixtures.
Then, the cost function of the linear optimization problem
\begin{subequations}\label{m_fleet_OT}
\begin{align}
\min_{\lambda_{\mathbf{i}} \, \geq \, 0} \quad & J_{\mathrm{OT}} \triangleq \textstyle \sum_{\mathbf{i}} \lambda_{\mathbf{i}} J_{\mathbf{i}} \label{OTcost_multi} \\
\mathrm{s.t.} \quad &   \textstyle \sum_{\{\mathbf{i}|i_j = k\}} \lambda_{\mathbf{i}} = \alpha_j^{k}, \;\; \forall \;\; j = 1,\dots,M, \, k = 1,\dots, N_j \label{m_lambda_f1}
\end{align}
\end{subequations}
provides an upper bound for \eqref{mSB_GMM:cost}, that is, $J_{\mathrm{GMM}} \leq J_{\mathrm{OT}}$, for all values of $\lambda_{\mathbf{i}}$ satisfying  \eqref{m_lambda_f1}. 
\end{theorem}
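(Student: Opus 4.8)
The plan is to mirror the argument for Theorem~\ref{OT_thm}, since passing to the multi-marginal setting changes only the indexing: the pair $(i,j)$ becomes the tuple $\mathbf{i}$, and the two families of pairwise constraints \eqref{lambda_f} become the single family \eqref{m_lambda_f1}. The engine of the proof is Jensen's inequality applied pointwise to the quadratic running cost, combined with the probability-flow identity \eqref{rho_lambda_multi} already supplied by Theorem~\ref{feasibility_thm_multi}.

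First I would fix any $\lambda_{\mathbf{i}} \geq 0$ satisfying \eqref{m_lambda_f1} and invoke Theorem~\ref{feasibility_thm_multi}, which guarantees that \eqref{lambda_pol_multi} is a feasible policy whose induced flow is \eqref{rho_lambda_multi}, namely $\rho_t(x,v) = \sum_{\mathbf{i}} \lambda_{\mathbf{i}} \rho_{t|\mathbf{i}}(x,v)$. The key observation is that the weights $w_{\mathbf{i}}(x,v) \triangleq \lambda_{\mathbf{i}} \rho_{t|\mathbf{i}}(x,v)/\rho_t(x,v)$ are nonnegative and sum to one at every phase-space point, so \eqref{lambda_pol_multi} exhibits $u_t$ as a pointwise convex combination $u_t = \sum_{\mathbf{i}} w_{\mathbf{i}}\, u_{t|\mathbf{i}}$ of the conditional drifts.

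Second, convexity of $\|\cdot\|^2$ yields $\|u_t(x,v)\|^2 \leq \sum_{\mathbf{i}} w_{\mathbf{i}}(x,v)\,\|u_{t|\mathbf{i}}(x,v)\|^2$. I would then write the cost \eqref{mSB_GMM:cost} as the phase-space integral $J_{\mathrm{GMM}} = \int_0^1 \int \|u_t(x,v)\|^2 \rho_t(x,v)\, \d x\, \d v\, \d t$, substitute this bound, and observe that multiplying by $\rho_t$ cancels the normalizing denominator, since $w_{\mathbf{i}}\,\rho_t = \lambda_{\mathbf{i}}\,\rho_{t|\mathbf{i}}$. After interchanging the finite sum over $\mathbf{i}$ with the integrals (justified by nonnegativity of the integrands via Tonelli), each summand collapses to $\lambda_{\mathbf{i}} \int_0^1 \int \|u_{t|\mathbf{i}}\|^2 \rho_{t|\mathbf{i}}\, \d x\, \d v\, \d t = \lambda_{\mathbf{i}} J_{\mathbf{i}}$, where the last equality is precisely the definition of $J_{\mathbf{i}}$ as the optimal GMSB cost attained by $u_{t|\mathbf{i}}$ along its own flow $\rho_{t|\mathbf{i}}$. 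Summing gives $J_{\mathrm{GMM}} \leq \sum_{\mathbf{i}} \lambda_{\mathbf{i}} J_{\mathbf{i}} = J_{\mathrm{OT}}$, matching \eqref{OTcost_multi}.

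I expect the remaining subtleties to be bookkeeping rather than conceptual. The constraints \eqref{m_lambda_f1} force $\sum_{\mathbf{i}} \lambda_{\mathbf{i}} = 1$ — summing over $k$ for a fixed $j$ and using $\sum_k \alpha_j^k = 1$ — which is what makes $w_{\mathbf{i}}$ a genuine probability weight, so I would flag this normalization explicitly. The main obstacle is making airtight the identification of $J_{\mathbf{i}}$ with the integral of the conditional running cost along $\rho_{t|\mathbf{i}}$; this rests on $u_{t|\mathbf{i}}$ being the optimal GMSB drift and $\rho_{t|\mathbf{i}}$ its induced marginal flow, exactly as produced by the semidefinite solution of \eqref{GmSB}. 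Feasibility of the LP \eqref{m_fleet_OT}, and hence finiteness of $J_{\mathrm{OT}}$, is immediate from the product plan $\lambda_{\mathbf{i}} = \prod_{j=1}^M \alpha_j^{i_j}$, so no existence issue arises.
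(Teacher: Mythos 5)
Your proposal is correct and follows essentially the same route as the paper's proof: express the mixture policy as a pointwise convex combination of the conditional drifts, apply Jensen's inequality to $\|\cdot\|^2$, cancel the normalizing denominator against $\rho_t$, swap sum and integral, and identify each summand with $\lambda_{\mathbf{i}} J_{\mathbf{i}}$. The only cosmetic difference is that you invoke Tonelli and explicitly note the normalization of the weights, whereas the paper cites Fubini and the discrete Jensen inequality directly; the substance is identical.
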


{

\color{\editcolor}
In practice, in order to use policy \eqref{lambda_pol_multi} for inference, the initial conditions for the SDE \eqref{mSB_GMM:dyn} must be defined.
Specifically, to fully define the initial conditions in the state space $[x_0, v_0]$, given an initial position sample $x_0$, the corresponding velocity $v_0$ must be estimated. 
To this end, note that given $\lambda_{\mathbf{i}}$, the state distribution $\rho_t(x_t, v_t)$ is a fully defined GMM given by Equation \eqref{rho_lambda_multi}. 
It is easy to show that the conditional distribution $\rho_t(v_t|x_t)$ is also a GMM, whose parameters can be easily computed.
Since this calculation is trivial, we defer it to Appendix~\ref{App:mSB:details}. 
For completeness, we provide the multi-marginal inference algorithm in Algorithm~\ref{alg3}. The multi-marginal training algorithm is omitted due to its similarity to \eqref{alg2}.   

\begin{algorithm}[!h]
\caption{Multi-marginal GMMflow inference}\label{alg3}
\hspace*{\algorithmicindent}\textbf{Input:} Component-level solutions $u_{t|\mathbf{i}}, \rho_{t|\mathbf{i}}$, multi-marginal transport plan $\lambda_{\mathbf{i}}$, sample from $x_0 \sim \rho_0(x_0)$, SDE integrator $\texttt{sde\_int}()$.

\begin{algorithmic}
    \State $u_t(x) \leftarrow \text{Compute \eqref{lambda_pol_multi}}$
    \State Sample $v_0 \sim \rho_0(v_0|x_0)$ using Equation \eqref{cond_v_GMM}
    \State $[ x_t; v_t] \leftarrow \texttt{sde\_int} (\text{Equation } \eqref{mSB_GMM:dyn}, [x_0; v_0], t \in [0, 1])$
    \State \Return $x_t$ 
\end{algorithmic}
\end{algorithm}

}

\subsection{Continuous Gaussian Mixtures}

The results of Section~\ref{GMMflow} can be extended to problems with continuous 
GMM boundary distributions. 
Specifically, we consider a bridge of the form
\begin{subequations}\label{cGMM_SB}
\begin{align}
\min_{u\in\mathcal{U}} \quad & \mathbb{E}\left[\int_{0}^{1}\Vert u_{t}(x_t)\Vert^{2} \d t\right], \label{cGMM_SB:cost} \\
\mathrm{s.t.} \quad &  \d x_t = u_t(x_t) \, \d t,  \label{cGMM_SB:dyn} \\
& x_{i}\sim\int_{\mathbb{R}^{m}}\mathcal{N}(\mu_{i}(w_{i}),\Sigma_{i}(w_{i}))\, \d P_i(w_i), \quad i=0,1, \label{cGMM_SB:BC}
\end{align}
\end{subequations}
where the boundary distributions \eqref{cGMM_SB:BC} are continuous GMMs with mixing 
measures $P(w_0),\, P(w_1)$ respectively.
We keep the dynamics \eqref{cGMM_SB:dyn} deterministic to simplify the analysis, although all the results carry over to the general case of stochastic dynamics as well. 
This more general formulation specializes to problem \eqref{GMM_SB} when $P(w_0), \, P(w_1)$ have discrete support; however, it includes many other scenarios where the mixing distributions are continuous. Specifically, the generalization of Theorem~\ref{feasibility_thm} is as follows. 
\begin{theorem} \label{cont_feasibility_thm}
Consider Problem \eqref{cGMM_SB} and
assume that $u_{t|w_{0},w_{1}}$ is the conditional policy
that solves the $(w_{0},w_{1})$-GSB problem, that is the bridge
from the initial Gaussian distribution with parameter $w_{0}$
to the terminal Gaussian distribution with parameter $w_{1}$
and let the resulting probability flow be $\rho_{t|w_{0},w_{1}}$.
Furthermore, let $\Lambda(w_{0},w_{1})$ be any coupling
such that its marginal distributions are $P_{0}$ and $P_{1}$ respectively. 
Then, the policy
\begin{equation} \label{cont_pol}
u_{t}(x)=\int_{\mathbb{R}^{m}\times\mathbb{R}^{m}}u_{t|w_{0},w_{1}}(x)\frac{\rho_{t|w_{0},w_{1}}(x) \,\d\Lambda(w_{0},w_{1})}{\int_{\mathbb{R}^{m}\times\mathbb{R}^{m}}\rho_{t|w_{0},w_{1}}(x) \,\d\Lambda(w_{0},w_{1})}
\end{equation}
is a feasible policy for Problem \eqref{cGMM_SB} and the corresponding probability flow is
\begin{equation} \label{cont_flow}
\rho_{t}(x)=\int_{\mathbb{R}^{m}\times\mathbb{R}^{m}}\rho_{t|w_{0},w_{1}}(x) \, \d\Lambda(w_{0},w_{1}).
\end{equation}
\end{theorem}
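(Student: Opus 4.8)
The plan is to mirror the proof of Theorem~\ref{feasibility_thm}, replacing the finite sums over mixture components by integrals against the coupling measure $\Lambda$. Feasibility amounts to two checks: (i) that the flow \eqref{cont_flow} matches the boundary constraints \eqref{cGMM_SB:BC} at $t=0$ and $t=1$, and (ii) that the pair $(\rho_t,u_t)$ defined by \eqref{cont_flow}, \eqref{cont_pol} satisfies the Fokker--Planck--Kolmogorov equation associated with \eqref{cGMM_SB:dyn}, which here reduces to the continuity (Liouville) equation since the dynamics are deterministic.

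First I would verify the boundary conditions. Because $u_{t|w_0,w_1}$ solves the $(w_0,w_1)$-GSB, its conditional flow satisfies $\rho_{0|w_0,w_1} = \mathcal{N}(\mu_0(w_0),\Sigma_0(w_0))$ and $\rho_{1|w_0,w_1} = \mathcal{N}(\mu_1(w_1),\Sigma_1(w_1))$. Evaluating \eqref{cont_flow} at $t=0$ and using that $\Lambda$ has first marginal $P_0$,
\[
\rho_0(x) = \int_{\mathbb{R}^m \times \mathbb{R}^m} \mathcal{N}(\mu_0(w_0),\Sigma_0(w_0)) \, \d\Lambda(w_0,w_1) = \int_{\mathbb{R}^m} \mathcal{N}(\mu_0(w_0),\Sigma_0(w_0)) \, \d P_0(w_0),
\]
which is exactly the $i=0$ constraint; the $t=1$ case is identical via the second marginal $P_1$. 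This step uses only the marginal property of $\Lambda$, so \emph{any} admissible coupling produces a feasible flow.

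Next I would verify the continuity equation. Each conditional pair satisfies its own transport equation $\partial_t \rho_{t|w_0,w_1} + \nabla\cdot(\rho_{t|w_0,w_1}\, u_{t|w_0,w_1}) = 0$. The key algebraic observation is that the normalizing denominator in \eqref{cont_pol} equals $\rho_t(x)$, so the momentum field factorizes as
\[
\rho_t(x)\, u_t(x) = \int_{\mathbb{R}^m \times \mathbb{R}^m} u_{t|w_0,w_1}(x)\,\rho_{t|w_0,w_1}(x)\,\d\Lambda(w_0,w_1).
\]
Substituting into $\partial_t \rho_t + \nabla\cdot(\rho_t u_t)$ and interchanging the derivatives with the integral over $(w_0,w_1)$ collapses the expression to $\int [\partial_t \rho_{t|w_0,w_1} + \nabla\cdot(\rho_{t|w_0,w_1}\, u_{t|w_0,w_1})]\,\d\Lambda = 0$, which establishes feasibility.

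The main obstacle is precisely the step that is trivial in the discrete case of Theorem~\ref{feasibility_thm}: justifying the interchange of $\partial_t$ and $\nabla\cdot$ with the integral against a possibly non-atomic measure $\Lambda$. The hard part will be producing $\Lambda$-integrable dominating bounds for $\partial_t \rho_{t|w_0,w_1}$ and $\nabla\cdot(\rho_{t|w_0,w_1} u_{t|w_0,w_1})$ uniformly over the support of $\Lambda$, so that differentiation under the integral sign applies. I would extract such bounds from the closed-form Gaussian expressions of Proposition~\ref{prop:Bunne}, controlling the $t$- and $x$-derivatives of $\rho_{t|w_0,w_1}$ through the conditional means and covariances, and imposing mild regularity and integrability assumptions on the maps $w \mapsto (\mu(w),\Sigma(w))$ and on $\Lambda$ to ensure these dominating functions are integrable away from the degenerate endpoints $t \in \{0,1\}$.
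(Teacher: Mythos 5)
Your proposal follows essentially the same route as the paper's proof: verify the boundary marginals using only the marginal property of $\Lambda$, then integrate the conditional FPK (here, continuity) equations against $\Lambda$ and use the fact that the normalizing denominator in \eqref{cont_pol} equals $\rho_t$ to collapse the expression. The interchange of derivatives and integration that you flag as the hard step is performed without explicit justification in the paper as well, so your added attention to dominating bounds is a refinement rather than a departure.
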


Furthermore, Theorem~\ref{OT_thm} generalizes to the following. 

\begin{theorem} \label{cont_OTthm}
Let $J(w_{0},w_{1})$ be the optimal cost of solving the $(w_{0},w_{1})$-Gaussian bridge subproblem. 
Then, the optimal transport problem:
\begin{equation}\label{J:OT}
J_{\mathrm{OT}} \triangleq \min_{\Lambda\in\Pi(P_{0},P_{1})}\int_{\mathbb{R}^{m}\times\mathbb{R}^{m}}J(w_{0},w_{1}) \, \d\Lambda(w_{0},w_{1}),
\end{equation}
provides an upper bound for Problem \eqref{cGMM_SB}, that is, $J_{\mathrm{GMM}}\leq J_{\mathrm{OT}}$, 
where $\Pi(P_{0},P_{1})$ represents the set of all couplings with marginals $P_{0}$ and $P_{1}$.
\end{theorem}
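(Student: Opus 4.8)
The plan is to transfer the discrete argument behind Theorem~\ref{OT_thm} to the continuous setting, using the convexity of the quadratic running cost together with the linearity of the flow decomposition~\eqref{cont_flow}. Fix an arbitrary coupling $\Lambda\in\Pi(P_0,P_1)$. By Theorem~\ref{cont_feasibility_thm}, the induced policy~\eqref{cont_pol} is feasible for Problem~\eqref{cGMM_SB} and generates the flow~\eqref{cont_flow}; hence its control effort $\int_0^1\!\int_{\mathbb{R}^d}\|u_t(x)\|^2\rho_t(x)\,\d x\,\d t$ is an upper bound for the optimal value $J_{\mathrm{GMM}}$. It therefore suffices to bound this effort by $\int J(w_0,w_1)\,\d\Lambda(w_0,w_1)$ and then minimize over $\Lambda$.

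First I would note that, for each fixed $(t,x)$ with $\rho_t(x)>0$, the family $\d\nu_{t,x}(w_0,w_1)\triangleq\rho_{t|w_0,w_1}(x)\,\d\Lambda(w_0,w_1)/\rho_t(x)$ is a probability measure in $(w_0,w_1)$, because integrating its numerator returns the denominator by~\eqref{cont_flow}. The mixture policy~\eqref{cont_pol} is then exactly the $\nu_{t,x}$-barycenter of the conditional drifts, $u_t(x)=\int u_{t|w_0,w_1}(x)\,\d\nu_{t,x}(w_0,w_1)$. Applying Jensen's inequality to the convex map $z\mapsto\|z\|^2$ and multiplying through by $\rho_t(x)$ yields the pointwise estimate
\[
\|u_t(x)\|^2\,\rho_t(x)\;\leq\;\int\|u_{t|w_0,w_1}(x)\|^2\,\rho_{t|w_0,w_1}(x)\,\d\Lambda(w_0,w_1).
\]

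Next I would integrate over $x$ and $t$ and exchange the order of integration; since every integrand is non-negative, Tonelli's theorem legitimizes the swap and gives
\[
J_{\mathrm{GMM}}\;\leq\;\int\left[\int_0^1\!\int_{\mathbb{R}^d}\|u_{t|w_0,w_1}(x)\|^2\,\rho_{t|w_0,w_1}(x)\,\d x\,\d t\right]\d\Lambda(w_0,w_1).
\]
The bracketed term is the control effort of the conditional policy $u_{t|w_0,w_1}$ computed along its own flow $\rho_{t|w_0,w_1}$; as this policy solves the $(w_0,w_1)$-GSB subproblem optimally, the bracket equals $J(w_0,w_1)$. Hence the effort of~\eqref{cont_pol} is at most $\int J(w_0,w_1)\,\d\Lambda$, and since $\Lambda$ was arbitrary, taking the infimum over $\Pi(P_0,P_1)$ delivers $J_{\mathrm{GMM}}\leq J_{\mathrm{OT}}$.

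The main obstacle I anticipate is not a single hard inequality but the measure-theoretic bookkeeping that the continuous formulation introduces: one must check that $\nu_{t,x}$ is well-defined $\rho_t$-almost everywhere (the set $\{\rho_t(x)=0\}$ carries no mass and is discarded from the cost integral), and that $J(w_0,w_1)$ is $\Lambda$-integrable so that $J_{\mathrm{OT}}$ is finite and the Tonelli exchange is justified. Both are controlled by the non-negativity of all integrands and by the explicit, continuous dependence of the Gaussian bridge cost~\eqref{JGSB} on the parameters $(\mu_i(w_i),\Sigma_i(w_i))$, which I would assume inherits sufficient integrability from the mixing measures $P_0,P_1$.
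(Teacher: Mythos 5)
Your proposal is correct and follows essentially the same route as the paper's proof: Jensen's inequality applied to the squared norm of the barycentric mixture policy, an exchange of integration order, identification of the inner integral with the conditional optimal cost $J(w_0,w_1)$, and an infimum over couplings. Your added attention to the measure-theoretic details (the probability measure $\nu_{t,x}$, Tonelli for non-negative integrands, and the null set where $\rho_t$ vanishes) is a slightly more careful rendering of the same argument.
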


Problem \eqref{J:OT} is challenging to solve in general. 
However, in many practical cases, such as for Student-t boundary distributions, the parameter spaces for $w_0, w_1$ are one-dimensional and \eqref{J:OT} can be solved in closed form.
We explore this interesting direction in Appendix~\ref{App:heavy_tails} to approximate the Wasserstein-2 distance and the displacement interpolation between heavy-tail distributions. 

\section{Related Work}

Although the idea of creating a mixture policy from elementary point-to-point policies is at the heart of flow-matching, to the best of our knowledge, 
its benefits for solving problems with mixture models with a finite number of components have not been explored. 
In an early work, \cite{chen2016optimal} developed an upper bound on the 2-Wasserstein distance between Gaussian mixture models, i.e., the static, deterministic version of Problem \eqref{GMM_SB}, that matches the upper bound of Theorem~\ref{OT_thm}.
However, the problem of finding a policy that solves the dynamic problem was not explored.
Focusing on works concerning dynamic problems, the concept of constructing stochastic differential equations (SDEs) as mixtures of Gaussian probability flows can be traced back to mathematical finance applications \citep{brigo2002general, brigo2002lognormal}.
More recently, and in the context of generative applications, \cite{albergo2023building} developed 
similar expressions for finding a policy for steering between mixture models using an alternative conditional solution for the Gaussian-to-Gaussian bridge, focused on the case of deterministic, fully observable dynamical systems with no prior dynamics, and the component-level transport plan was not optimized.
%


Our work reveals some similarities in scope and structure with LSB \citep{korotin2024light} and LSBM \citep{gushchin24light}.
Similarly to LSB and LSBM, we aim to provide numerically inexpensive tools to solve the SBP in low-to-moderate dimensional scenarios. 
Moreover, our feedback policy in Equation \eqref{lambda_pol} is a mixture of affine feedback terms weighted with exponential kernels; this is also the case in LSB and LSBM \cite[Proposition 3.3]{korotin2024light}.
The formulations, however, are otherwise quite distinct.
Specifically, LSB/LSBM works by modeling the so-called Schr\"{o}dinger \textit{potentials} using a GMM. 
%
This results in flows with boundary distributions that have a mixture-like structure and are optimal by construction, but whose marginals are neither amenable to exact calculation, nor are GMMs, in general.
%
Finally, to approximate the optimal flows for boundary distributions available only through samples, both works solve non-linear optimization problems, which are prone to converging to locally optimal solutions.
In contrast, our approach works by first pre-fitting GMMs to the data using the Expectation Maximization (EM) \citep{pedregosa2011scikit}, and then computing an optimal policy by solving exactly a linear program.
Finally, being based on a control-theoretic framework, our method generalizes effectively to partially observable and multi-marginal distribution matching problems as shown in Section~\ref{sec:mmSB}, while extending LSB and LSBM to handle such problems is non-trivial.   

\section{Experiments} \label{experiments}
%
\paragraph{2D Problems and Benchmarks.}

We first test the algorithm in various 2D ``toy'' problems as shown, for example, in Figure~\ref{fig:GMM2D} for a Gaussian-to-Gaussian Mixture problem for various noise levels. 
%
To assess optimality, we evaluate the resulting transport cost for policy \eqref{lambda_pol} for each noise level and compare it with the upper bound from \eqref{OTcost}.
\begin{figure}[t]
    \centering
    \includegraphics[width=0.8\linewidth]{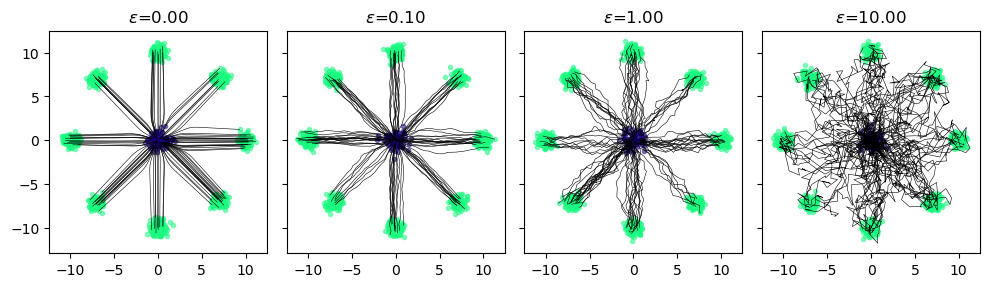}
    \caption{\textcolor{rho0}{Gaussian} to \textcolor{rho1}{8-Gaussians} with no prior dynamics and for various levels of noise.}
    \label{fig:GMM2D}
\end{figure}
We also run a series of EOT benchmarks and compare them with state-of-the-art neural approaches such as the DSB \citep{de2021diffusion} and DSBM
\citep{shi2023diffusion} algorithms. {\color{\editcolor} Due to space considerations, we defer their discussion to Appendix \ref{App:exp_details}, along with further experiments studying training and inference time scaling with respect to the problem dimension and the number of GMM components.} 
To evaluate the performance on problems
with many GMM components, we tested the algorithm on the distributions depicted in Figure~\ref{fig:GT2DCSL}, where we first pre-fit 500-component GMMs in the initial and terminal samples.
\begin{figure}
    \vspace{-2mm}
    \centering
    \includegraphics[ width=0.95 \textwidth]{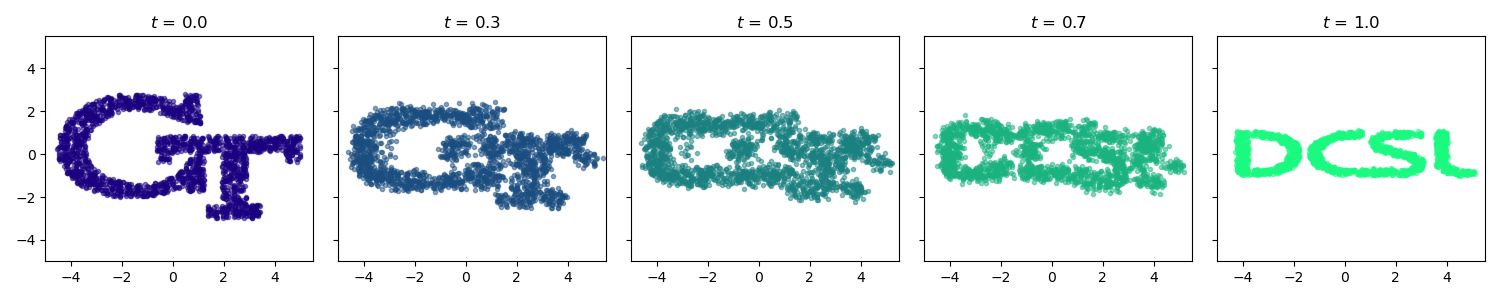}
    \caption{\textcolor{rho0}{GT} to \textcolor{rho1}{DCSL} Schr\"odinger Bridge with zero prior dynamics and $\epsilon=0.1$.}
    \label{fig:GT2DCSL}
\end{figure}
\paragraph{Image-to-Image Translation.}
\begin{figure*}[!ht]
    \centering
    \includegraphics[width=1 \linewidth]{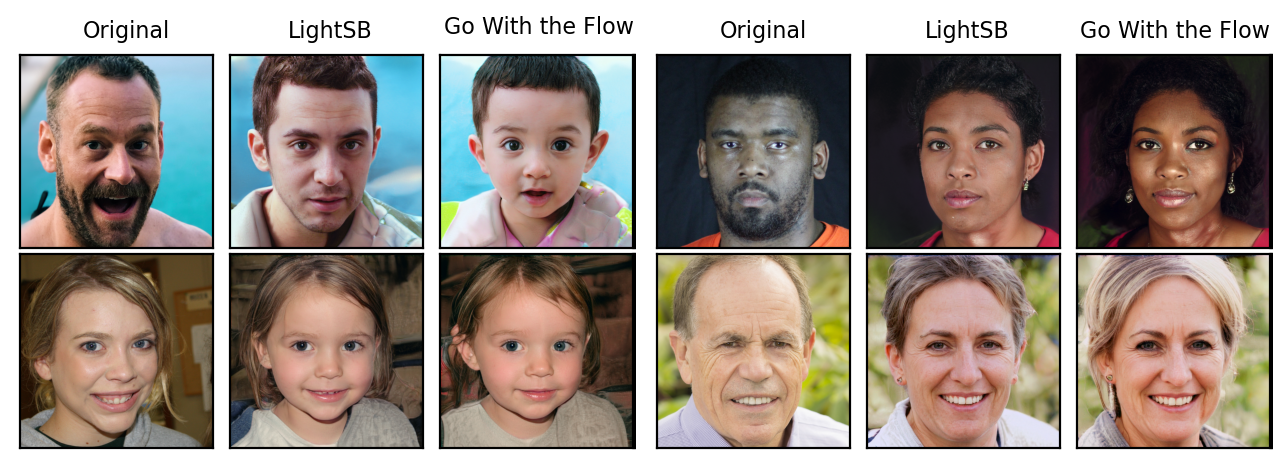}
    \caption{Adult to Child (left) and Man to Woman (right) image translation task.}
    \label{fig:Itrans}
\end{figure*}
%
Following~\cite{korotin2024light}, we use our algorithm in the latent space of an autoencoder to perform a man-to-woman and adult-to-child image translation task. 
We use the pre-trained ALAE autoencoder \citep{pidhorskyi2020adversarial}, trained on the FFHQ dataset~\citep{karras2019style}.
The latent space of the autoencoder is 512-dimensional.
We start by fitting a 10-component mixture model to the embeddings of each image class,
with diagonal covariance matrices to facilitate matrix inversions in the Gaussian-to-Gaussian policy calculations summarized in Proposition~\ref{prop:Bunne}, and then apply {\color{\editcolor} Algorithms \ref{alg1}, \ref{alg2}  for $\epsilon=0.01$}. 
The results are illustrated in Figure~\ref{fig:Itrans}.

To test how well the generated images match the features of the given target distribution, we calculate the Fréchet inception distance (FID) 
scores~\citep{heusel2017gans} between the actual and the generated images of a given class, using 10,000 samples from each distribution.
The FID scores correspond to the empirical Bures-Wasserstein distance between the images of the two classes, evaluated in the latent space of the Inception network.
To further test how close the transformed images are to the target class, we also calculate the empirical Bures-Wasserstein distance between the transformed images and the real images of the target class, directly in the latent space of the ALAE autoencoder and report it as ALAE-B$\mathbb{W}$.
We compare against two state-of-the-art lightweight SB solvers, namely, LSB and LSBM,
with results shown in Tables~\ref{FID:M2W}~and~\ref{FID:A2C}. 

\begin{table}[!ht]
\parbox{.5\linewidth}{
    \caption{Man-to-Woman FID comparison}
    \label{FID:M2W}
    \centering
    \vspace*{2mm}
    \begin{tabular}{l|ccc}
        M$\rightarrow$W &  FID  &   ALAE-B$\mathbb{W}$ & T. Cost \\
        \hline
        LSB             &  4.94     & 28.9 & 8.23 \\
        LSBM            &  4.98     & 28.3 & 8.18 \\
        \textbf{GMMflow}         &  3.04     & 9.3  & 9.05 \\
    \end{tabular}
}
\hfill
\parbox{.5\linewidth}{
    \caption{Adult-to-Child FID comparison}
    \label{FID:A2C}
    \centering
        \vspace*{2mm}
    \begin{tabular}{l|ccc}
          A$\rightarrow$C   & FID &  ALAE-B$\mathbb{W}$ & T. Cost \\
        \hline
        LSB                 &  6.62   &  31.00  & 8.18  \\
        LSBM                &  6.61   &  30.99  & 8.19 \\
        \textbf{GMMflow}             &  3.50   &  8.54   & 9.33 \\
    \end{tabular}
}
\end{table}

Qualitatively, our algorithm performs more aggressive feature changes
compared to the baseline method, as illustrated in Figure~\ref{fig:Itrans}.
Quantitatively, the features of the transformed images better capture the true distribution of the features of a given target class, given the almost 40\% better FID scores and 65\% better ALAE-B$\mathbb{W}$ scores provided in 
Tables~\ref{FID:M2W}~and~\ref{FID:A2C}. 
{\color{\editcolor} We note that the improvement in the FID and ALAE-B$\mathbb{W}$ scores comes with a slight increase in the average transport cost, which we measure by $\sqrt{\E_\pi\|x_0 - x_1\|^2}$, as reported in the last columns of Tables \ref{FID:M2W} and \ref{FID:A2C}. }
%
We attribute this significant performance gain to our use of
the EM algorithm for the GMM pre-fitting, which is less prone to converge to locally optimal values, compared to LightSB's maximum likelihood objective, or the LSBM's bridge matching objective.
We further remark that our approach takes 63\% less time to train, as noted in Table~\ref{tab:speed} in Appendix~\ref{expDetails_Im}.

\paragraph{Multi-Marginal Problems.}

A key challenge in SBs is learning a system's underlying diffusion process, given samples from partial observations of the distribution of its state, measured at regular time intervals \citep{chen2023deep}.
This challenge arises, for example, in learning the dynamics of large cell populations throughout their different developmental stages \citep{bunne22proximal, tong2020trajectory, terpin2024learning}.
To showcase the effectiveness of our approach in such problems, we consider the scRNA-seq dataset from \citep{moon2019visualizing}, with the pre-processing detailed in \cite{tong2020trajectory}.
The dataset contains samples from the first 100 Principal Components (PC) of individual cell proteins, grouped at 5 regular time intervals, denoted by $t_1, \dots, t_5$. 

For our setup, we keep the first 5 PCs from the 5 marginal distributions, and prefit 5-component GMMs in each marginal. 
We use the second-order model \eqref{mSB_GMM:dyn} to capture the prior dynamics and the structure of the system; however, we note that any LTV model with structure of the form \eqref{LTV_main} would be applicable. 
To solve the resulting multi-marginal momentum SBs, we first compute the cost of each GMSB and then solve \eqref{m_fleet_OT}. Computing the GMSB cost for all the combinations of components is the most computationally expensive part of our approach. 
{\color{\editcolor}By parallelizing this computation, the total training time is approximately 8 minutes on an i7-12700 CPU.} 

We visualize the data generated by our method in Figure~\ref{scRNA_5d}, and provide standard performance metrics in Table~\ref{scRNA_table}. 
Specifically, following \cite{chen2023deep}, we use the Sliced Wasserstein Distance (SWD) and Maximum Mean Discrepancy (MMD) metrics averaged over the 4 predicted time marginals of the dataset.
Although there are no light-weight solvers for second-order multi-marginal problems, we compare our method against DMSB \citep{chen2023deep}, NSBL \citep{koshizuka2023neural}, and MIOFlow \citep{tong2020trajectory}.
Even though our method requires minutes to compute on a CPU while neural methods take an hour to train on a high-end GPU, the proposed approach outperforms the baselines by one order of magnitude in the MMD metric due to the accurate fitting of the GMMs. The SWD metric, although acceptable, is bounded by the expressivity of the GMMs to capture the higher-order distribution structure. The metrics for MIOFlow, NSBL, and DMSB in Table~\ref{scRNA_table} are taken from \citep{chen2023deep}.

\begin{figure}[!ht]
  \begin{minipage}[l]{.45\linewidth}
    \centering
    \includegraphics[width=1 \linewidth]{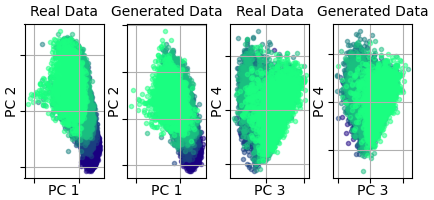}
    \captionof{figure}{GMM momentum multimagrinal SB: Real vs Generated data.}
    \label{scRNA_5d}
  \end{minipage}
  \hfill
  \begin{minipage}[r]{.55\linewidth}
    \centering
    \vspace{-6 mm}
    \captionof{table}{SWD and MMD indices averaged over the 4 time steps}
\label{scRNA_table}
    \vspace{-2mm}
        \begin{tabular}{l|cc} 
        
             {\color{\editcolor} Method}             &  SWD    &  MMD       \\
            \hline
            MIOFlow          & 0.38           &  0.28 \\
            NSBL             & 0.24           &  0.10 \\
            DMSB             & 0.22           &  0.06 \\
            \textbf{GMMflow}   & 0.37 $\pm$4E-3 &  0.0038 $\pm$7E-4 \\
    \end{tabular}
\end{minipage}
\end{figure}

\section{Conclusion and Limitations}

This paper introduces a novel, efficient method for solving SB problems with GMM boundary distributions by utilizing a mixture of conditional policies, each solving a Gaussian bridge subproblem.
In the same way that low-dimensional methods such as Gaussian distributions and mixture models are highly used in statistics and machine learning, we believe that our approach will be a valuable tool in many useful practical problems related to optimal transport, distribution interpolation, distributional control, and related applications, given its very low computational complexity, and excellent empirical performance in complicated statistical problems.  
The main limitation of our work stems from its dependence on GMM marginal distributions.
Since this class of distributions is not designed for use in very high-dimensional problems, we do not expect our method to be applicable in such areas, but rather to work as an efficient tool for obtaining rapid SB solutions to smaller problems.


\newpage
\subsection*{Acknowledgments}

{\color{\editcolor} The authors would like to thank Peter Garud for his help with the coding of the multimarginal SB implementation.}
Support for this work has been provided by 
ONR award N00014-18-1-2828 and NASA ULI award \#80NSSC20M0163.
This article solely reflects the opinions and conclusions of its authors and not of any NASA entity. 
George Rapakoulias acknowledges financial support from the A. Onassis Foundation Scholarship.
Lingjiong Zhu is partially supported by NSF grants DMS-2053454 and DMS-2208303.

\bibliographystyle{iclr2025_conference} 
\bibliography{refs}

\begin{thebibliography}{66}
\providecommand{\natexlab}[1]{#1}
\providecommand{\url}[1]{\texttt{#1}}
\expandafter\ifx\csname urlstyle\endcsname\relax
  \providecommand{\doi}[1]{doi: #1}\else
  \providecommand{\doi}{doi: \begingroup \urlstyle{rm}\Url}\fi

\bibitem[Albergo \& Vanden-Eijnden(2023)Albergo and Vanden-Eijnden]{albergo2023building}
Michael Albergo and Eric Vanden-Eijnden.
\newblock Building normalizing flows with stochastic interpolants.
\newblock In \emph{International Conference on Learning Representations}, Kigali Rwanda, May 2023.

\bibitem[Andrews \& Mallows(1974)Andrews and Mallows]{andrews1974scale}
David~F. Andrews and Colin~L. Mallows.
\newblock Scale mixtures of normal distributions.
\newblock \emph{Journal of the Royal Statistical Society. Series B (Methodological)}, 36\penalty0 (1):\penalty0 99--102, 1974.

\bibitem[Arjovsky et~al.(2017)Arjovsky, Chintala, and Bottou]{arjovsky2017wasserstein}
Martin Arjovsky, Soumith Chintala, and L{\'e}on Bottou.
\newblock Wasserstein generative adversarial networks.
\newblock In \emph{International Conference on Machine Learning}, volume~70, pp.\  214--223, Sydney, {Australia}, 2017. PMLR.

\bibitem[Bakolas(2018)]{bakolas2018finite}
Efstathios Bakolas.
\newblock Finite-horizon covariance control for discrete-time stochastic linear systems subject to input constraints.
\newblock \emph{Automatica}, 91:\penalty0 61--68, 2018.

\bibitem[Benamou \& Brenier(2000)Benamou and Brenier]{benamou2000computational}
Jean-David Benamou and Yann Brenier.
\newblock A computational fluid mechanics solution to the {M}onge-{K}antorovich mass transfer problem.
\newblock \emph{Numerische Mathematik}, 84\penalty0 (3):\penalty0 375--393, 2000.

\bibitem[Bensoussan et~al.(2016)Bensoussan, Sung, Yam, and Yung]{bensoussan2016linear}
Alain Bensoussan, KCJ Sung, Sheung Chi~Phillip Yam, and Siu-Pang Yung.
\newblock Linear-quadratic mean field games.
\newblock \emph{Journal of Optimization Theory and Applications}, 169:\penalty0 496--529, 2016.

\bibitem[Bhatia et~al.(2019)Bhatia, Jain, and Lim]{bhatia2019bures}
Rajendra Bhatia, Tanvi Jain, and Yongdo Lim.
\newblock On the {B}ures--{W}asserstein distance between positive definite matrices.
\newblock \emph{Expositiones Mathematicae}, 37\penalty0 (2):\penalty0 165--191, 2019.

\bibitem[Bishop \& Nasrabadi(2006)Bishop and Nasrabadi]{bishop2006pattern}
Christopher~M Bishop and Nasser~M Nasrabadi.
\newblock \emph{Pattern Recognition and Machine Learning}, volume~4.
\newblock Springer, 2006.

\bibitem[Boyd \& Vandenberghe(2004)Boyd and Vandenberghe]{boyd_convex_optimization}
Stephen Boyd and Lieven Vandenberghe.
\newblock \emph{Convex Optimization}.
\newblock Cambridge University Press, Cambridge, UK, 1st edition, 2004.

\bibitem[Brigo(2002)]{brigo2002general}
Damiano Brigo.
\newblock The general mixture diffusion {SDE} and its relationship with an uncertain-volatility option model with volatility-asset decorrelation.
\newblock \emph{Available at SSRN 455060}, 2002.

\bibitem[Brigo et~al.(2002)Brigo, Mercurio, and Sartorelli]{brigo2002lognormal}
Damiano Brigo, Fabio Mercurio, and Giulio Sartorelli.
\newblock Lognormal-mixture dynamics under different means, 2002.

\bibitem[Bunne \& R{\"a}tsch(2023)Bunne and R{\"a}tsch]{bunne2023neural}
Charlotte Bunne and Gunnar R{\"a}tsch.
\newblock Neural optimal transport predicts perturbation responses at the single-cell level.
\newblock \emph{Nature Methods}, 20:\penalty0 1639–1640, 2023.

\bibitem[Bunne et~al.(2022)Bunne, Papaxanthos, Krause, and Cuturi]{bunne22proximal}
Charlotte Bunne, Laetitia Papaxanthos, Andreas Krause, and Marco Cuturi.
\newblock Proximal optimal transport modeling of population dynamics.
\newblock In \emph{Proceedings of The 25th International Conference on Artificial Intelligence and Statistics}, volume 151 of \emph{Proceedings of Machine Learning Research}, pp.\  6511--6528. PMLR, 28--30 Mar 2022.

\bibitem[Bunne et~al.(2023{\natexlab{a}})Bunne, Hsieh, Cuturi, and Krause]{bunne2023schrodinger}
Charlotte Bunne, Ya-Ping Hsieh, Marco Cuturi, and Andreas Krause.
\newblock The {S}chr{\"o}dinger bridge between {G}aussian measures has a closed form.
\newblock In \emph{International Conference on Artificial Intelligence and Statistics}, volume 206, pp.\  5802--5833. PMLR, 2023{\natexlab{a}}.

\bibitem[Bunne et~al.(2023{\natexlab{b}})Bunne, Stark, Gut, Del~Castillo, Levesque, Lehmann, Pelkmans, Krause, and R{\"a}tsch]{bunne2023learning}
Charlotte Bunne, Stefan~G Stark, Gabriele Gut, Jacobo~Sarabia Del~Castillo, Mitch Levesque, Kjong-Van Lehmann, Lucas Pelkmans, Andreas Krause, and Gunnar R{\"a}tsch.
\newblock Learning single-cell perturbation responses using neural optimal transport.
\newblock \emph{Nature Methods}, 20\penalty0 (11):\penalty0 1759--1768, 2023{\natexlab{b}}.

\bibitem[Chen et~al.(2023)Chen, Liu, Tao, and Theodorou]{chen2023deep}
Tianrong Chen, Guan-horng Liu, Molei Tao, and Evangelos~A Theodorou.
\newblock Deep multi-marginal momentum {S}chr{\"o}dinger bridge.
\newblock In \emph{Proceedings of the 37th International Conference on Neural Information Processing Systems}, volume~36, pp.\  57058--57086. Curran Associates, Inc., December 2023.

\bibitem[Chen(2024)]{chen2023density}
Yongxin Chen.
\newblock Density control of interacting agent systems.
\newblock \emph{Transactions on Automatic Control}, 69\penalty0 (1):\penalty0 246--260, 2024.

\bibitem[Chen et~al.(2015{\natexlab{a}})Chen, Georgiou, and Pavon]{chen2015optimal}
Yongxin Chen, Tryphon~T Georgiou, and Michele Pavon.
\newblock Optimal steering of a linear stochastic system to a final probability distribution, part {I}.
\newblock \emph{Transactions on Automatic Control}, 61\penalty0 (5):\penalty0 1158--1169, 2015{\natexlab{a}}.

\bibitem[Chen et~al.(2015{\natexlab{b}})Chen, Georgiou, and Pavon]{chen2015optimal2}
Yongxin Chen, Tryphon~T Georgiou, and Michele Pavon.
\newblock Optimal steering of a linear stochastic system to a final probability distribution, part {II}.
\newblock \emph{Transactions on Automatic Control}, 61\penalty0 (5):\penalty0 1170--1180, 2015{\natexlab{b}}.

\bibitem[Chen et~al.(2016)Chen, Georgiou, and Pavon]{chen2016optimal}
Yongxin Chen, Tryphon~T Georgiou, and Michele Pavon.
\newblock Optimal transport over a linear dynamical system.
\newblock \emph{Transactions on Automatic Control}, 62\penalty0 (5):\penalty0 2137--2152, 2016.

\bibitem[Chen et~al.(2018)Chen, Conforti, and Georgiou]{chen_measure-valued_2018}
Yongxin Chen, Giovanni Conforti, and Tryphon~T. Georgiou.
\newblock Measure-valued spline curves: An optimal transport viewpoint.
\newblock \emph{SIAM Journal on Mathematical Analysis}, 50\penalty0 (6):\penalty0 5947--5968, January 2018.

\bibitem[Chen et~al.(2019)Chen, Conforti, Georgiou, and Ripani]{chen2019multi}
Yongxin Chen, Giovanni Conforti, Tryphon~T Georgiou, and Luigia Ripani.
\newblock Multi-marginal {S}chr{\"o}dinger bridges.
\newblock In \emph{International Conference on Geometric Science of Information}, pp.\  725--732, Toulouse, France, August 2019. Springer.

\bibitem[Chen et~al.(2021)Chen, Georgiou, and Pavon]{chen2021stochastic}
Yongxin Chen, Tryphon~T Georgiou, and Michele Pavon.
\newblock Stochastic control liaisons: {Richard Sinkhorn} meets {Gaspard Monge} on a {Schr\"odinger} bridge.
\newblock \emph{SIAM Review}, 63\penalty0 (2):\penalty0 249--313, 2021.

\bibitem[Cordero-Encinar et~al.(2025)Cordero-Encinar, Akyildiz, and Duncan]{cordero2025non}
Paula Cordero-Encinar, O.~Deniz Akyildiz, and Andrew~B. Duncan.
\newblock Non-asymptotic analysis of diffusion annealed {L}angevin {M}onte {C}arlo for generative modelling.
\newblock \emph{arXiv preprint arXiv:2502.09306}, 2025.

\bibitem[Dai~Pra(1991)]{dai1991stochastic}
Paolo Dai~Pra.
\newblock A stochastic control approach to reciprocal diffusion processes.
\newblock \emph{Applied Mathematics and Optimization}, 23\penalty0 (1):\penalty0 313--329, 1991.

\bibitem[De~Bortoli et~al.(2021)De~Bortoli, Thornton, Heng, and Doucet]{de2021diffusion}
Valentin De~Bortoli, James Thornton, Jeremy Heng, and Arnaud Doucet.
\newblock Diffusion {Schr\"odinger} bridge with applications to score-based generative modeling.
\newblock In \emph{Advances in Neural Information Processing Systems}, volume~34, pp.\  17695--17709, held virtually, 2021. Curran Associates, Inc.

\bibitem[Flamary et~al.(2021)Flamary, Courty, Gramfort, Alaya, Boisbunon, Chambon, Chapel, Corenflos, Fatras, Fournier, Gautheron, Gayraud, Janati, Rakotomamonjy, Redko, Rolet, Schutz, Seguy, Sutherland, Tavenard, Tong, and Vayer]{flamary2021pot}
R{\'e}mi Flamary, Nicolas Courty, Alexandre Gramfort, Mokhtar~Z Alaya, Aur{\'e}lie Boisbunon, Stanislas Chambon, Laetitia Chapel, Adrien Corenflos, Kilian Fatras, Nemo Fournier, L\'{e}o Gautheron, Nathalie~T.H. Gayraud, Hicham Janati, Alain Rakotomamonjy, Ievgen Redko, Antoine Rolet, Antony Schutz, Vivien Seguy, Danica~J. Sutherland, Romain Tavenard, Alexander Tong, and Titouan Vayer.
\newblock {POT}: {P}ython optimal transport.
\newblock \emph{Journal of Machine Learning Research}, 22\penalty0 (78):\penalty0 1--8, 2021.

\bibitem[F{\"o}llmer(1988)]{follmer1988random}
Hans F{\"o}llmer.
\newblock Random fields and diffusion processes.
\newblock \emph{Lect. Notes Math}, 1362:\penalty0 101--204, 1988.

\bibitem[Gushchin et~al.(2023)Gushchin, Kolesov, Mokrov, Karpikova, Spiridonov, Burnaev, and Korotin]{gushchin2023building}
Nikita Gushchin, Alexander Kolesov, Petr Mokrov, Polina Karpikova, Andrei Spiridonov, Evgeny Burnaev, and Alexander Korotin.
\newblock Building the bridge of {S}chr{\"o}dinger: A continuous entropic optimal transport benchmark.
\newblock In \emph{Advances in Neural Information Processing Systems}, volume~36, pp.\  18932--18963, New Orleans, USA, 2023. Curran Associates, Inc.

\bibitem[Gushchin et~al.(2024)Gushchin, Kholkin, Burnaev, and Korotin]{gushchin24light}
Nikita Gushchin, Sergei Kholkin, Evgeny Burnaev, and Alexander Korotin.
\newblock Light and optimal {S}chr{\o}dinger bridge matching.
\newblock In Ruslan Salakhutdinov, Zico Kolter, Katherine Heller, Adrian Weller, Nuria Oliver, Jonathan Scarlett, and Felix Berkenkamp (eds.), \emph{Proceedings of the 41st International Conference on Machine Learning}, volume 235 of \emph{Proceedings of Machine Learning Research}, pp.\  17100--17122. PMLR, 21--27 Jul 2024.

\bibitem[Hendrycks \& Gimpel(2016)Hendrycks and Gimpel]{hendrycks2016gaussian}
Dan Hendrycks and Kevin Gimpel.
\newblock Gaussian error linear units ({GELU}s).
\newblock \emph{arXiv preprint arXiv:1606.08415}, 2016.

\bibitem[Heusel et~al.(2017)Heusel, Ramsauer, Unterthiner, Nessler, and Hochreiter]{heusel2017gans}
Martin Heusel, Hubert Ramsauer, Thomas Unterthiner, Bernhard Nessler, and Sepp Hochreiter.
\newblock G{AN}s trained by a two time-scale update rule converge to a local {N}ash equilibrium.
\newblock In \emph{Advances in Neural Information Processing Systems}, volume~30. Curran Associates, Inc., 2017.

\bibitem[Karras et~al.(2021)Karras, Laine, and Aila]{karras2019style}
Tero Karras, Samuli Laine, and Timo Aila.
\newblock A style-based generator architecture for generative adversarial networks.
\newblock \emph{IEEE Transactions on Pattern Analysis and Machine Intelligence}, 43\penalty0 (12):\penalty0 4217--4228, 2021.

\bibitem[Kim et~al.(2024)Kim, Kwon, Cho, Lee, and Won]{kim2024t3}
Juno Kim, Jaehyuk Kwon, Mincheol Cho, Hyunjong Lee, and Joong-Ho Won.
\newblock $t^{3}$-variational autoencoder: Learning heavy-tailed data with student's $t$ and power divergence.
\newblock In \emph{International Conference on Learning Representations}, Vienna, Austria, May 2024.

\bibitem[Korotin et~al.(2024)Korotin, Gushchin, and Burnaev]{korotin2024light}
Alexander Korotin, Nikita Gushchin, and Evgeny Burnaev.
\newblock Light {Schr\"odinger} bridge.
\newblock In \emph{The Twelfth International Conference on Learning Representations}, Vienna, Austria, May 2024.

\bibitem[Koshizuka \& Sato(2023)Koshizuka and Sato]{koshizuka2023neural}
Takeshi Koshizuka and Issei Sato.
\newblock Neural {L}agrangian {S}chr\"{o}dinger bridge: Diffusion modeling for population dynamics.
\newblock In \emph{The Eleventh International Conference on Learning Representations}, Kigali, Rwanda, May 2023.

\bibitem[L{\'e}onard(2014)]{leonard2014survey}
Christian L{\'e}onard.
\newblock A survey of the {Schr{\"o}dinger} problem and some of its connections with optimal transport.
\newblock \emph{Discrete \& Continuous Dynamical Systems-A}, 34\penalty0 (4):\penalty0 1533--1574, 2014.

\bibitem[Lipman et~al.(2023)Lipman, Chen, Ben-Hamu, Nickel, and Le]{lipman2022flow}
Yaron Lipman, Ricky T.~Q. Chen, Heli Ben-Hamu, Maximilian Nickel, and Matthew Le.
\newblock Flow matching for generative modeling.
\newblock In \emph{International Conference on Learning Representations}, Kigali Rwanda, May 2023.

\bibitem[Liu \& Tsiotras(2024)Liu and Tsiotras]{liu2024reachability}
Fengjiao Liu and Panagiotis Tsiotras.
\newblock Reachability and controllability analysis of the state covariance for linear stochastic systems.
\newblock \emph{arXiv preprint arXiv:2406.14740}, 2024.

\bibitem[Liu et~al.(2025)Liu, Rapakoulias, and Tsiotras]{liu2022optimal}
Fengjiao Liu, George Rapakoulias, and Panagiotis Tsiotras.
\newblock Optimal covariance steering for discrete-time linear stochastic systems.
\newblock \emph{Transactions on Automatic Control}, 70\penalty0 (4):\penalty0 2289--2304, 2025.

\bibitem[Liu et~al.(2022)Liu, Chen, So, and Theodorou]{liu2022deep}
Guan-Horng Liu, Tianrong Chen, Oswin So, and Evangelos Theodorou.
\newblock Deep generalized {Schr\"odinger} bridge.
\newblock In \emph{Advances in Neural Information Processing Systems}, volume~35, pp.\  9374--9388, Louisiana, LA, 2022. Curran Associates, Inc.

\bibitem[Liu et~al.(2024)Liu, Lipman, Nickel, Karrer, Theodorou, and Chen]{liu2024generalized}
Guan-Horng Liu, Yaron Lipman, Maximilian Nickel, Brian Karrer, Evangelos~A Theodorou, and Ricky~TQ Chen.
\newblock {Generalized Schr{\"o}dinger bridge matching}.
\newblock In \emph{International Conference on Learning Representations}, Vienna, Austria, 2024.

\bibitem[Liu et~al.(2023)Liu, Gong, and Liu]{liu2022flow}
Xingchao Liu, Chengyue Gong, and Qiang Liu.
\newblock Flow straight and fast: Learning to generate and transfer data with rectified flow.
\newblock In \emph{International Conference on Learning Representations}, Kigali Rwanda, May 2023.

\bibitem[Mallasto et~al.(2022)Mallasto, Gerolin, and Minh]{mallasto2022entropy}
Anton Mallasto, Augusto Gerolin, and H{\`a}~Quang Minh.
\newblock Entropy-regularized 2-{W}asserstein distance between {G}aussian measures.
\newblock \emph{Information Geometry}, 5\penalty0 (1):\penalty0 289--323, 2022.

\bibitem[Moon et~al.(2019)Moon, Van~Dijk, Wang, Gigante, Burkhardt, Chen, Yim, Elzen, Hirn, Coifman, Ivanova, Wolf, and Krishnaswamy]{moon2019visualizing}
Kevin~R Moon, David Van~Dijk, Zheng Wang, Scott Gigante, Daniel~B Burkhardt, William~S Chen, Kristina Yim, Antonia van~den Elzen, Matthew~J Hirn, Ronald~R Coifman, Natalia~B. Ivanova, Guy Wolf, and Smita Krishnaswamy.
\newblock Visualizing structure and transitions in high-dimensional biological data.
\newblock \emph{Nature Biotechnology}, 37\penalty0 (12):\penalty0 1482--1492, 2019.

\bibitem[Mosek(2020)]{aps2020mosek}
ApS Mosek.
\newblock Mosek modeling cookbook, 2020.

\bibitem[Pandey et~al.(2025)Pandey, Pathak, Xu, Mandt, Pritchard, Vahdat, and Mardani]{pandey2025heavy}
Kushagra Pandey, Jaideep Pathak, Yilun Xu, Stephan Mandt, Michael Pritchard, Arash Vahdat, and Morteza Mardani.
\newblock Heavy-tailed diffusion models.
\newblock In \emph{International Conference on Learning Representations}, Singapore, 2025.

\bibitem[Pedregosa et~al.(2011)Pedregosa, Varoquaux, Gramfort, Michel, Thirion, Grisel, Blondel, Prettenhofer, Weiss, Dubourg, Vanderplas, Passos, Cournapeau, Brucher, Perrot, and Duchesnay]{pedregosa2011scikit}
Fabian Pedregosa, Ga{\"e}l Varoquaux, Alexandre Gramfort, Vincent Michel, Bertrand Thirion, Olivier Grisel, Mathieu Blondel, Peter Prettenhofer, Ron Weiss, Vincent Dubourg, Jake Vanderplas, Alexandre Passos, David Cournapeau, Matthieu Brucher, Matthieu Perrot, and \'{E}douard Duchesnay.
\newblock Scikit-learn: Machine learning in {P}ython.
\newblock \emph{Journal of Machine Learning Research}, 12\penalty0 (85):\penalty0 2825--2830, 2011.

\bibitem[Peluchetti(2021)]{peluchetti2021non}
Stefano Peluchetti.
\newblock Non-denoising forward-time diffusions.
\newblock \emph{arXiv preprint arXiv:2312.14589}, 2021.

\bibitem[Peluchetti(2023)]{peluchetti2023diffusion}
Stefano Peluchetti.
\newblock Diffusion bridge mixture transports, {S}chr{\"o}dinger bridge problems and generative modeling.
\newblock \emph{Journal of Machine Learning Research}, 24\penalty0 (374):\penalty0 1--51, 2023.

\bibitem[Peyré \& Cuturi(2019)Peyré and Cuturi]{peyre2017computational}
Gabriel Peyré and Marco Cuturi.
\newblock \emph{Computational Optimal Transport: With Applications to Data Science}.
\newblock Foundations and Trends in Machine Learning, 2019.

\bibitem[Pidhorskyi et~al.(2020)Pidhorskyi, Adjeroh, and Doretto]{pidhorskyi2020adversarial}
Stanislav Pidhorskyi, Donald~A Adjeroh, and Gianfranco Doretto.
\newblock Adversarial latent autoencoders.
\newblock In \emph{Proceedings of the IEEE/CVF Conference on Computer Vision and Pattern Recognition}, pp.\  14104--14113, Seattle, Washington, June 2020.

\bibitem[Rapakoulias \& Tsiotras(2023)Rapakoulias and Tsiotras]{rapakoulias2023discrete}
George Rapakoulias and Panagiotis Tsiotras.
\newblock Discrete-time optimal covariance steering via semidefinite programming.
\newblock In \emph{62nd Conference on Decision and Control}, pp.\  1802--1807, Singapore, 2023.

\bibitem[Rapakoulias \& Tsiotras(2024)Rapakoulias and Tsiotras]{rapakoulias2024discrete}
George Rapakoulias and Panagiotis Tsiotras.
\newblock Discrete-time maximum likelihood neural distribution steering.
\newblock In \emph{IEEE 63rd Conference on Decision and Control (CDC)}, pp.\  2195--2200, MiCo, Milan, Italy, 2024.

\bibitem[Ruthotto \& Haber(2021)Ruthotto and Haber]{ruthotto2021introduction}
Lars Ruthotto and Eldad Haber.
\newblock An introduction to deep generative modeling.
\newblock \emph{GAMM-Mitteilungen}, 44\penalty0 (2):\penalty0 e202100008, 2021.

\bibitem[Ruthotto et~al.(2020)Ruthotto, Osher, Li, Nurbekyan, and Fung]{ruthotto2020machine}
Lars Ruthotto, Stanley~J Osher, Wuchen Li, Levon Nurbekyan, and Samy~Wu Fung.
\newblock A machine learning framework for solving high-dimensional mean field game and mean field control problems.
\newblock \emph{Proceedings of the National Academy of Sciences}, 117\penalty0 (17):\penalty0 9183--9193, 2020.

\bibitem[Santambrogio(2015)]{santambrogio2015optimal}
Filippo Santambrogio.
\newblock \emph{Optimal Transport for Applied Mathematicians}, volume~87.
\newblock Springer, 2015.

\bibitem[Saravanos et~al.(2023)Saravanos, Li, and Theodorou]{saravanos2023distributed}
Augustinos~D Saravanos, Yihui Li, and Evangelos~A Theodorou.
\newblock Distributed hierarchical distribution control for very-large-scale clustered multi-agent systems.
\newblock In \emph{Robotics: Science and Systems XIX}, Daegu, Republic of Korea, July 2023.

\bibitem[S{\"a}rkk{\"a} \& Solin(2019)S{\"a}rkk{\"a} and Solin]{sarkka2019applied}
Simo S{\"a}rkk{\"a} and Arno Solin.
\newblock \emph{Applied Stochastic Differential Equations}, volume~10.
\newblock Cambridge University Press, 2019.

\bibitem[Shi et~al.(2023)Shi, De~Bortoli, Campbell, and Doucet]{shi2023diffusion}
Yuyang Shi, Valentin De~Bortoli, Andrew Campbell, and Arnaud Doucet.
\newblock Diffusion {Schr{\"o}dinger} bridge matching.
\newblock In \emph{Advances in Neural Information Processing Systems}, volume~36, pp.\  62183--62223. Curran Associates, Inc., 2023.

\bibitem[Terpin et~al.(2024{\natexlab{a}})Terpin, Lanzetti, and D\"{o}rfler]{terpin2024dynamic}
Antonio Terpin, Nicolas Lanzetti, and Florian D\"{o}rfler.
\newblock Dynamic programming in probability spaces via optimal transport.
\newblock \emph{SIAM Journal on Control and Optimization}, 62\penalty0 (2):\penalty0 1183--1206, 2024{\natexlab{a}}.

\bibitem[Terpin et~al.(2024{\natexlab{b}})Terpin, Lanzetti, Gadea, and Dorfler]{terpin2024learning}
Antonio Terpin, Nicolas Lanzetti, Mart{\'\i}n Gadea, and Florian Dorfler.
\newblock Learning diffusion at lightspeed.
\newblock In \emph{Advances in Neural Information Processing Systems}, volume~37, pp.\  6797--6832. Curran Associates, Inc., 2024{\natexlab{b}}.

\bibitem[Theodoropoulos et~al.(2025)Theodoropoulos, Komianos, Pacelli, Liu, and Theodorou]{theodoropoulos2025feedback}
Panagiotis Theodoropoulos, Nikolaos Komianos, Vincent Pacelli, Guan-Horng Liu, and Evangelos Theodorou.
\newblock Feedback {S}chr\"odinger bridge matching.
\newblock In \emph{The Thirteenth International Conference on Learning Representations}, Singapore, 2025.

\bibitem[Tong et~al.(2020)Tong, Huang, Wolf, Van~Dijk, and Krishnaswamy]{tong2020trajectory}
Alexander Tong, Jessie Huang, Guy Wolf, David Van~Dijk, and Smita Krishnaswamy.
\newblock {T}rajectory{N}et: A dynamic optimal transport network for modeling cellular dynamics.
\newblock In \emph{Proceedings of the 37th International Conference on Machine Learning}, volume 119 of \emph{Proceedings of Machine Learning Research}, pp.\  9526--9536. PMLR, 13--18 Jul 2020.

\bibitem[Virtanen et~al.(2020)Virtanen, Gommers, Oliphant, Haberland, Reddy, Cournapeau, Burovski, Peterson, Weckesser, Bright, van~der Walt, Brett, Wilson, Millman, Mayorov, Nelson, Jones, Kern, Larson, Carey, Polat, Feng, Moore, VanderPlas, Laxalde, Perktold, Cimrman, Henriksen, Quintero, Harris, Archibald, Ribeiro, Pedregosa, van Mulbregt, and Contributors]{virtanen2020scipy}
Pauli Virtanen, Ralf Gommers, Travis~E Oliphant, Matt Haberland, Tyler Reddy, David Cournapeau, Evgeni Burovski, Pearu Peterson, Warren Weckesser, Jonathan Bright, St\'{e}fan~J. van~der Walt, Matthew Brett, Joshua Wilson, K.~Jarrod Millman, Nikolay Mayorov, Andrew~R.J. Nelson, Eric Jones, Robert Kern, Eric Larson, CJ~Carey, Ilhan Polat, Yu~Feng, Eric~W. Moore, Jake VanderPlas, Denis Laxalde, Josef Perktold, Robert Cimrman, Ian Henriksen, E.A. Quintero, Charles~R Harris, Anne~M. Archibald, Ant\^{o}nio~H. Ribeiro, Fabian Pedregosa, Paul van Mulbregt, and SciPy~1.0 Contributors.
\newblock {SciPy} 1.0: fundamental algorithms for scientific computing in {P}ython.
\newblock \emph{Nature Methods}, 17\penalty0 (3):\penalty0 261--272, 2020.

\bibitem[Wheeden \& Zygmund(1977)Wheeden and Zygmund]{wheeden1977measure}
Richard~Lee Wheeden and Antoni Zygmund.
\newblock \emph{Measure and Integral}, volume~26.
\newblock Dekker New York, 1977.

\end{thebibliography}

\newpage

\onecolumn
\appendix

\begin{center}
\Large \bf Go With the Flow: Fast Diffusion for Gaussian
Mixture Models \vspace{3pt}\\ {\normalsize Supplementary Material}
\end{center}

The supplementary document is organized as follows.
\begin{itemize}
\item
In Appendix~\ref{App_proofs}, we provide
the technical proofs of the results in the main paper{\color{\editcolor}, an optimality analysis of Theorem \ref{OT_thm}, and the complexity analysis of Algorithm \ref{alg1}}.
\item 
In Appendix~\ref{App:heavy_tails}, we discuss continuous mixtures and applications to heavy-tailed distributions.
\item 
In Appendix~\ref{App:exp_details}, we provide additional numerical experiments and some details for the experiments of Section~\ref{experiments} in the main paper.
\item 
In Appendix~\ref{App:OCS}, we provide technical background for the Gaussian Schr\"{o}dinger Bridges with LTV prior dynamics. 
\end{itemize}

\setcounter{equation}{0}
\renewcommand{\theequation}{A.\arabic{equation}}

\section{Proofs}\label{App_proofs}

All proofs are carried out for a stochastic Linear Time-Varying (LTV) system
\begin{equation} \label{LTV}
    \d x_t = A_t x_t \, \d t + B_t u(x_t) \, \d t + D_t \, \d w,
\end{equation}
where $x_t \in \sR^d, \, A_t \in \sR^{d \times d}, \, u_t \in \sR^{m}, \, B_t \in \sR^{d \times m}, \, D_t \in \sR^{d \times q}$ and $\d w$ is the $q$-dimensional Brownian increment having the properties
$\E[\d w] = \E[\dw \, \d t ] = 0 $ and $\E [\dw \, \dw\t] = I \d t $.
The dynamical system \eqref{GMM_SB:dyn} is just a special case of \eqref{LTV}
with $A_t = 0, B_t = I, D_t = \sqrt{\epsilon} \, I$, while the second order model \eqref{mSB_GMM:dyn} can be captured by 
\begin{equation}
    A_t = \begin{bmatrix}
        0  & I \\ 0  & 0
    \end{bmatrix},\quad B_t = \begin{bmatrix}
        0 \\ I
    \end{bmatrix}, \quad D_t = \sqrt{\epsilon} \begin{bmatrix}
        0 \\ I
    \end{bmatrix}.
\end{equation}

\subsection{The Fokker-Planck-Kolmogorov Equation}

The equation describing the propagation of the distribution of the state of the dynamical system \eqref{LTV}, known as the Fokker-Planck-Kolmogorov (FPK) equation \citep{sarkka2019applied} is:
\begin{equation} \label{FPK}
    \frac{\partial \rho_t}{\partial t}  +  \sum_i \frac{\partial}{\partial x_i} 
    \Big(  \rho_t (A_t x + B_t u_t(x)) \Big) 
    - \frac{1}{2} \sum_{i,j} \frac{\partial^2}{\partial x_i \partial x_j} \left( [D_t D_t\t]_{ij} \rho_t\right) = 0,
\end{equation}
where, for simplicity, we write $\rho_t = \rho(t,x)$.
This equation can be written more compactly using standard vector notation as follows
\begin{equation} \label{FPK_comp}
    \frac{\partial \rho_t}{\partial t}  +  \nabla \cdot \Big( \rho_t\, (A_t x + B_t u_t(x)) \Big) - \frac{1}{2} \tr \left( D_t D_t\t \hess\rho_t \right) = 0, 
\end{equation}
where $\hess \rho_t $ denotes the Hessian of the density with respect to the state $x$ at time $t$.
In the specific case where $D_t = \sqrt{\epsilon} \, I$, equation~\eqref{FPK_comp} reduces to the well-known equation
\begin{equation}
        \frac{\partial \rho_t}{\partial t}  + \nabla \cdot \Big( \rho_t (A_t x + B_t u_t(x)) \Big) - \frac{\epsilon}{2} \Delta \rho_t = 0,
\end{equation}
where $\Delta$ denotes the Laplacian operator.

\subsection{Proof of Proposition~\ref{epsilon_W2}}
Let $\rho_0 \sim \N(\mu_0, \Sigma_0),\, \rho_1 \sim \N(\mu_1, \Sigma_1)$ be two multivariate Gaussian measures in $\mathbb{R}^d$, and consider the entropy regularized 2-Wasserstein distance problem between $\rho_0$ and $\rho_1$, defined by 
\begin{equation} \label{eW2}
    \mathbb{W}_2^{\epsilon}(\rho_0 \| \rho_1) \triangleq \min_{\pi \in \Pi(\rho_0, \rho_1)} \int \|x-y\|^2 \d \pi(x, y) + \epsilon \mathrm{D}_{\mathrm{KL}} (\pi \| \rho_0 \otimes\rho_1),
\end{equation}
where $\KL$ denotes the KL-divergence operator.

In \cite[Theorem 2]{mallasto2022entropy}, it is shown that 
\eqref{eW2} admits the following closed-form solution
\begin{equation} \label{OT_epsilon} 
    \mathbb{W}_2^{\epsilon}(\rho_0 \| \rho_1) = \| \mu_1 - \mu_0 \|^2 + \tr(\Sigma_0) + \tr(\Sigma_1) - \frac{\epsilon}{2}\left( \tr M_{\epsilon} - \log \det M_{\epsilon} + d \log 2 - 2 d\right),
\end{equation}
where $M_{\epsilon} = I + (I + (4/\epsilon)^2 \Sigma_0 \Sigma_1)^{\frac{1}{2}}$.

Returning to Problem \eqref{GSB}, let $\mathcal{P}, \mathcal{W^{\epsilon}}$ be the path measures corresponding to the controlled SDE of \eqref{GSB} (expressed as the first constraint) and the Brownian motion with covariance $\epsilon I$ respectively, with initial conditions sampled from $\rho_0$.
Let $\mathcal{D}(\rho_0, \rho_1)$ denote the set of all path measures with marginals $\rho_0, \rho_1$ at times $t=0$ and $t=1$. 
Then, the SB problem \eqref{GSB} admits the following equivalent representations:  
\begin{subequations}
\begin{align}
    \inf_{\mathcal{P} \in \mathcal{D}(\rho_0, \rho_1)} \hspace*{-3mm}\mathrm{D}_{\mathrm{KL}} \big(\mathcal{P} \| \mathcal{W}^{\epsilon} \big)
    & = \inf_{\pi \in \Pi(\rho_0, \rho_1)} \left\{\int \frac{\|x-y\| ^2}{2 \epsilon} \d \pi(x, y) - H(\pi) + H(\rho_0)+ \frac{d}{2}\log(2 \pi e )\right\} \label{disintegration} \\ 
     & = \inf_{\mathcal{P} \in \mathcal{D}(\rho_0, \rho_1)} \E \left[ \int_0^1  \frac{1}{2 \epsilon } \|u_t(x_t)\|^2 \d t \right] = \frac{1}{2 \epsilon} J_{\mathrm{GSB}},\label{girsanov}
\end{align}
\end{subequations}
where \eqref{disintegration} is due to the disintegration of the path measures, and \eqref{girsanov} is due to Girsanov's Theorem. 
We refer the reader to \cite{chen2021stochastic} for a detailed derivation. 
Solving \eqref{girsanov} for $J_{\mathrm{GSB}}$, we obtain 
\begin{equation}
    J_{\mathrm{GSB}} = \inf_{\pi} \left\{\int \|x - y\|^2 \d \pi(x, y) - 2\epsilon H(\pi) - 2 \epsilon H(\rho_1)\right\}, 
\end{equation}
up to a constant independent of the parameters of $\rho_0, \rho_1$. 
Using the closed form solution for \eqref{OT_epsilon}, and noting that  
\begin{equation}
    \KL(\pi \| \rho_0 \otimes \rho_1) = - H(\pi) + H(\rho_0) + H(\rho_1), 
\end{equation}
and that the differential entropy of the multivariate Gaussian distribution is 
\begin{equation}
    H(\rho_1) = \frac{1}{2}\log \det \Sigma_1 + \frac{d}{2}\log 2 \pi e,
\end{equation}
we conclude that the optimal cost  of Problem \eqref{GSB} is equal to
\begin{subequations}
\begin{align}
    J_{\mathrm{GSB}} & = \mathbb{W}_2^{2\epsilon}(\rho_0 \|\rho_1) - 2 \epsilon H(\rho_1) \nonumber \\
    & = \| \mu_1 - \mu_0 \|^2 + \tr(\Sigma_0) + \tr(\Sigma_1) - \epsilon \left( \tr M_{2 \epsilon} - \log \det M_{2 \epsilon} + \log \det \Sigma_1 \right),
\end{align}
\end{subequations}
up to a numerical constant independent of $\rho_0, \rho_1$, which concludes the proof.

\subsection{Proof of Theorem~\ref{feasibility_thm}} 

First, notice that the probability flow \eqref{rho_lambda} satisfies the constraints \eqref{GMM_SB:BC} for all feasible values of $\lambda_{ij}$, since 
\begin{subequations}
\begin{align}
    \rho_0 &= \sum_{i,j} \rho_{0|ij} \lambda_{ij} = \sum_{i,j} \N(\mu^i_0, \Sigma_0^i) \lambda_{ij} = \sum_{i} \N(\mu^i_0, \Sigma_0^i) \alpha_0^i,\\
    \rho_1 &= \sum_{i,j} \rho_{1|ij} \lambda_{ij} = \sum_{i,j} \N(\mu^j_1, \Sigma_1^i) \lambda_{ij} = \sum_{j} \N(\mu^j_1, \Sigma_1^i) \alpha_1^j.
\end{align}
\end{subequations}
Therefore, it suffices to show that the policy \eqref{lambda_pol} produces the probability flow \eqref{rho_lambda}.
Following the approach of~\cite{lipman2022flow} and \cite{liu2024generalized}, we show that the pair \eqref{lambda_pol}, \eqref{rho_lambda} satisfies the FPK equation.
We start from the FPK equation describing a conditional flow and sum over all conditional variables to retrieve the unconditional flow. 
Specifically, given that the individual policies $u_{t|ij}$ solve the Gaussian Bridge subproblems \eqref{GSB}, the pair $(\rho_{t|ij}, u_{t|ij})$ satisfies the FPK equation for the dynamical system~\eqref{LTV}, that is,
    \begin{align}\label{to:multiply}
     \frac{\partial \rho_{t|ij}}{\partial t} + \nabla \cdot \left( \rho_{t|ij}\left(A_t x + B_t u_{t|ij}\right) \right) - \frac{1}{2} \tr \left( D_t D_t\t \hess(\rho_{t|ij}) \right) = 0.
     \end{align}
Multiplying equation~\eqref{to:multiply} by $\lambda_{ij}$ and summing over $i,j$, we obtain
\begin{align}
 \sum_{i,j} \lambda_{ij} \left[ \frac{\partial \rho_{t|ij}}{\partial t} + \nabla \cdot \left( \rho_{t|ij}\left(A_t x + B_t u_{t|ij} \right) \right) - \frac{1}{2} \tr \left( D_t D_t\t \hess(\rho_{t|ij}) \right) \right] = 0, 
 \end{align}
which implies that
\begin{align}
&\frac{\partial}{\partial t} \left( \sum_{i,j} \rho_{t|ij} \lambda_{ij} \right) + \nabla \cdot \left(  A_t x \sum_{i,j} \rho_{t|ij} \lambda_{ij} + B_t \sum_{i,j} u_{t|ij} \rho_{t|ij} \lambda_{ij}  \right) \nonumber
\\
&\qquad\qquad\qquad\qquad\qquad\qquad- \frac{1}{2} \tr \left( D_t D_t\t \hess \left( \sum_{i,j} \rho_{t|ij} \lambda_{ij} \right) \right) = 0.  \label{feas:step3} 
\end{align}
This can be further simplified as
\begin{align}
\frac{\partial \rho_t}{\partial t} + \nabla \cdot \left( \rho_t \left( A_t x + B_t \sum_{ij} u_{t|ij}\frac{\rho_{t|ij} \lambda_{ij} }{\sum_{i,j} \rho_{t|ij} \lambda_{ij}} \right) \right) - \frac{1}{2} \tr \left( D_t D_t\t \hess(\rho_{t}) \right) = 0, 
\end{align}
which yields that
\begin{align}
\frac{\partial \rho_t}{\partial t} + \nabla \cdot \Big( \rho_t (A_t x + B_t u_t) \Big) - \frac{1}{2} \tr \left( D_t D_t\t \hess(\rho_{t}) \right) = 0.
\end{align}
This completes the proof.

\subsection{Proof of Theorem \ref{OT_thm}}
It suffices to show that the cost \eqref{GMM_SB:cost} is upper bounded by the cost of 
\eqref{OTcost}.
Substituting policy \eqref{lambda_pol} to the cost \eqref{GMM_SB:cost} we obtain
\begin{subequations}
\begin{align}
     J_{\mathrm{GMM}} =  & \E_{x_t \sim \rho_t} \left[ \int_{0}^{1}{ \left\| \sum_{i,j} {u_{t|ij}(x_t) \frac{ \rho_{t|ij}(x_t)\lambda_{ij}}{\sum_{i,j} \rho_{t|ij}(x_t) \lambda_{ij}}} \right\|^2 \d t} \right] \\
    = & \int_{0}^{1} \int { \rho_t (x) \left\| \sum_{i,j} {u_{t|ij}(x) \frac{ \rho_{t|ij}(x)\lambda_{ij}}{\sum_{i,j} \rho_{t|ij}(x) \lambda_{ij}}} \right\|^2 \, \d x \, \d t} \label{step2} \\
    \leq & \int_{0}^{1} \int { \rho_t (x) \frac{ \sum_{i,j} \left\|u_{t|ij}(x)\right\|^2 \rho_{t|ij}(x)\lambda_{ij}}{\sum_{i,j} \rho_{t|ij}(x) \lambda_{ij}} \, \d x \, \d t}  \label{jensen} \\
    = & \int_{0}^{1}  \int { \sum_{i,j} \left\|u_{t|ij}(x)\right\|^2 \rho_{t|ij}(x) \lambda_{ij}}  \, \d x \, \d t \\
    = & \sum_{i,j} \lambda_{ij} \E_{x_t \sim \rho_{t|ij}} \left[
 \int_{0}^{1} { \left\|u_{t|ij}(x_t)\right\|^2}  \, \d t \right] \\
    = & \sum_{i,j} \lambda_{ij} J_{ij} = J_{\mathrm{OT}},
\end{align}
\end{subequations}
where \eqref{step2} is due to Fubini's theorem 
\citep[Theorem~6.1]{wheeden1977measure} and \eqref{jensen} makes use of the discrete version of Jensen's inequality \citep[Theorem~7.35]{wheeden1977measure}.


{ \color{\editcolor} 
\subsection{Optimality of the Upper Bound of Theorem~\ref{OT_thm}.}

To assess the optimality of the upper bound introduced in Theorem~\ref{OT_thm}, we study the gap between $J_{\mathrm{OT}}$ and $J_{\mathrm{GMM}}$ in the following theorem. 

\begin{theorem} \label{optimality_thm}
    In the setting of Theorem~\ref{OT_thm}, let $\rho_{t|ij}(x), u_{t|ij}(x)$ be the solution of the $(i,j)$-GSB and $u_t(x), \rho_t(x)$ as defined in \eqref{lambda_pol}, \eqref{rho_lambda}. Then, the following bound holds for $J_{\mathrm{OT}}, J_{\mathrm{GMM}}$.
    \begin{equation}
            0 \leq J_{\mathrm{OT}} - J_{\mathrm{GMM}} \leq \!\! \int_0^1 \!\! \int_{\mathbb{R}^n} \sum_{i,j} \!\sum_{\substack{  i' \neq i \\ j' \neq j} } \big\|u_{t|ij} \! - \! u_{t|i'j'} \big\|^2 \min \{ \lambda_{ij} \rho_{t|ij},  \lambda_{i'j'} \rho_{t|i'j'} \}  \, \mathrm{d} x \, \mathrm{d} t,
    \end{equation}
    where the dependence on $x$ is omitted for notational convenience.
\end{theorem}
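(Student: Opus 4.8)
The lower bound $0 \le J_{\mathrm{OT}} - J_{\mathrm{GMM}}$ is exactly the content of Theorem~\ref{OT_thm}, so the work lies entirely in the upper bound, which I would establish by computing the gap pointwise in $(t,x)$. First I would write both functionals as integrals over $[0,1]\times\mathbb{R}^n$ of a pointwise integrand, exactly as in the proof of Theorem~\ref{OT_thm}. Introducing the convex weights $w_{ij}(t,x) = \lambda_{ij}\rho_{t|ij}(x)/\rho_t(x)$, which are nonnegative and sum to one wherever $\rho_t>0$, the mixture policy reads $u_t = \sum_{ij} w_{ij} u_{t|ij}$. The integrand of $J_{\mathrm{GMM}}$ is then $\rho_t\,\|\sum_{ij} w_{ij} u_{t|ij}\|^2$, while that of $J_{\mathrm{OT}} = \int_0^1\!\int \sum_{ij}\lambda_{ij}\rho_{t|ij}\|u_{t|ij}\|^2\,\d x\,\d t$ equals $\rho_t\sum_{ij} w_{ij}\|u_{t|ij}\|^2$. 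Hence the gap integrand is $\rho_t\big(\sum_{ij} w_{ij}\|u_{t|ij}\|^2 - \|\sum_{ij} w_{ij} u_{t|ij}\|^2\big)$, i.e. $\rho_t$ times the weighted variance of the family $\{u_{t|ij}\}$ under the weights $w_{ij}$.

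The second step is to apply the elementary variance identity for convex combinations, $\sum_k w_k\|a_k\|^2 - \|\sum_k w_k a_k\|^2 = \tfrac12\sum_{k,l} w_k w_l\|a_k - a_l\|^2$, which follows by expanding the squared norms and using $\sum_k w_k = 1$. This simultaneously re-proves nonnegativity (each summand is nonnegative) and rewrites the gap integrand as $\tfrac12\sum_{(ij),(i'j')}\rho_t\, w_{ij} w_{i'j'}\,\|u_{t|ij} - u_{t|i'j'}\|^2$. Simplifying the prefactor gives $\rho_t\, w_{ij} w_{i'j'} = \lambda_{ij}\rho_{t|ij}\,\lambda_{i'j'}\rho_{t|i'j'}/\rho_t$, and the diagonal terms vanish since $\|u_{t|ij}-u_{t|ij}\|^2=0$.

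The key estimate is then purely algebraic: since every summand of $\rho_t = \sum_{kl}\lambda_{kl}\rho_{t|kl}$ is nonnegative, one has $\rho_t \ge \max\{\lambda_{ij}\rho_{t|ij},\,\lambda_{i'j'}\rho_{t|i'j'}\}$, and therefore, using $ab/\max(a,b) = \min(a,b)$,
\begin{equation*}
\frac{\lambda_{ij}\rho_{t|ij}\,\lambda_{i'j'}\rho_{t|i'j'}}{\rho_t} \;\le\; \min\{\lambda_{ij}\rho_{t|ij},\,\lambda_{i'j'}\rho_{t|i'j'}\}.
\end{equation*}
Substituting this bound into each off-diagonal summand and integrating over $x$ and $t$ (by Fubini, as in the proof of Theorem~\ref{OT_thm}, since all integrands are nonnegative) yields the stated upper bound on $J_{\mathrm{OT}}-J_{\mathrm{GMM}}$.

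I expect the only genuine subtlety to be bookkeeping rather than analysis: the variance identity naturally produces a \emph{symmetric} sum over ordered pairs with a factor $\tfrac12$, so the final step is to reorganize it into a sum over \emph{distinct} index pairs $\{(i,j),(i',j')\}$ to match the form written in the statement (the $\tfrac12$ is absorbed when passing from ordered to unordered pairs). The remaining points are minor: $\rho_t>0$ almost everywhere so the weights $w_{ij}$ are well defined, and the pointwise $\min$-bound is applied under the integral, which is legitimate because every quantity involved is nonnegative. No new analytic input beyond the variance identity and the $\min$–$\max$ inequality is required.
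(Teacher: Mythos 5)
Your proof is correct and follows essentially the same route as the paper's: both reduce the gap $J_{\mathrm{OT}}-J_{\mathrm{GMM}}$ to the integrated weighted variance of the conditional policies under the weights $\lambda_{ij}\rho_{t|ij}/\rho_t$, and both close with the same elementary bound $\lambda_{ij}\rho_{t|ij}\,\lambda_{i'j'}\rho_{t|i'j'}/\rho_t \le \min\{\lambda_{ij}\rho_{t|ij},\lambda_{i'j'}\rho_{t|i'j'}\}$. The one genuine (if minor) difference is the middle step: the paper bounds each $\|u_{t|ij}-u_t\|^2$ by Jensen's inequality applied to the convex combination $u_t=\sum_{i'j'}w_{i'j'}u_{t|i'j'}$, whereas you invoke the exact pairwise identity $\sum_k w_k\|a_k\|^2-\|\sum_k w_k a_k\|^2=\tfrac12\sum_{k,l}w_kw_l\|a_k-a_l\|^2$, which replaces that inequality by an equality and yields an intermediate bound tighter by a factor of $\tfrac12$ (and which still implies the stated, unhalved bound since all summands are nonnegative). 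Your reading of the index condition as ranging over distinct pairs $(i,j)\neq(i',j')$ is the right one --- only the truly diagonal term vanishes --- and matches what the paper's own derivation actually uses, the literal restriction $i'\neq i,\ j'\neq j$ in the displayed formula notwithstanding.
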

\begin{proof}
We first express $u_t(x)$ as an expectation, i.e.,
\[
u_t(x) = \sum_{i,j} {u_{t|ij}(x) \frac{ \rho_{t|ij}(x)\lambda_{ij}} {\rho_{t}(x)}} = \mathbb{E} [ \omega_t(x)],
\]
where $\omega_{t}(x)$ follows a discrete distribution defined by $\{ \omega_t(x) = u_{t|ij}(x) \, \, \mathrm{w.p.} \, \, { \rho_{t|ij}(x)\lambda_{ij}}/ {\rho_{t}(x)} \}$.
Note that for a random variable $x \in \mathbb{R}^d$, the variance decomposition yields 
\[
    \|\mathbb{E}[x]\|^2 = \mathbb{E}[\|x\|^2] - \mathbb{E}[\|x  - \mathbb{E}[x]\|^2 ].
\]
Using the last equation and the expression $u_t(x)$ above, written as an expectation, we obtain
\begin{align*}
   J_{\mathrm{GMM}}  &\triangleq  \int_0^1 \!\! \int_{\mathbb{R}^d} \!\! \rho_{t} (x) \| u_t(x) \|^2 \, \mathrm{d} x \, \mathrm{d} t \\
   & =\int_0^1 \!\! \int_{\mathbb{R}^d}  \!\sum_{i,j} \lambda_{ij} \rho_{t|i,j} \|u_{t|i,j}(x)\|^2  \, \mathrm{d} x \, \mathrm{d} t - \!\! \int_0^1 \!\! \int_{\mathbb{R}^d}\!  \sum_{i,j} \lambda_{ij} \rho_{t|i,j} \|u_{t|i,j}(x)- u_t(x)\|^2  \, \mathrm{d} x \, \mathrm{d} t \\
    &  = J_{\mathrm{OT}} - \!\!\int_0^1  \!\!\int_{\mathbb{R}^d} \!\sum_{i,j} \lambda_{ij} \rho_{t|ij} \|u_{t|ij} \!- \!u_t\|^2  \, \mathrm{d} x \, \mathrm{d} t. 
   \end{align*} 
The fact that the second term in the last equation is non-negative justifies the upper bound in Theorem~\ref{OT_thm}. 
Next, we show that when the conditional densities are well separated, the second term in the last equation becomes arbitrarily small, and the bound becomes tight.
Expanding the term inside the norm in the integral of the last equation, and dropping the dependence on $x$ for notational convenience, we obtain
\begin{align*} 
\|u_{t|i,j}- u_t\|^2 
& = \bigg\|u_{t|i,j}-  \sum_{i'j'} u_{t|i'j'} \frac{\lambda_{i'j'} \rho_{t|i'j'}}{\rho_t} \bigg\|^2 \\
& = \bigg\| \frac{ u_{t|ij} \rho_{t}- \sum_{i'j'} u_{t|i'j'} \lambda_{i'j'} \rho_{t|i'j'} }{ \rho_t} \bigg\|^2 \\
& \leq\left(\sum_{i'j'}\Vert u_{t|i,j}- u_{t|i'j'}\Vert\frac{\lambda_{i'j'} \rho_{t|i'j'}}{\rho_t}\right)^{2} \textrm{{\color{gray} (due to Jensen's inequality)}} \\
& \leq\sum_{i'j'}\Vert u_{t|i,j}-u_{t|i'j'}\Vert^{2}\frac{\lambda_{i'j'} \rho_{t|i'j'}}{\rho_t} \\
& =\sum_{\substack{i' \neq i \\ j' \neq j}}\Vert u_{t|i,j}-u_{t|i'j'}\Vert^{2}\frac{\lambda_{i'j'} \rho_{t|i'j'}}{\rho_t}.
\end{align*}

Substituting this upper bound in the expression for $J_{\mathrm{GMM}}$ and rearranging, we get 
    \begin{align*}
    & 0 \leq J_{\mathrm{OT}} - J_{\mathrm{GMM}} \\
    & \leq \int_0^1 \int_{\mathbb{R}^d} \sum_{i,j} \lambda_{ij} \rho_{t|i,j} \|u_{t|i,j}(x)- u_t(x)\|^2  \, \mathrm{d} x \, \mathrm{d} t \\
    & \leq \!\! \int_0^1 \!\! \int_{\mathbb{R}^d} \sum_{i,j} \!\sum_{\substack{  i' \neq i \\ j' \neq j} } \big\|u_{t|ij} \! - \! u_{t|i'j'} \big\|^2 \frac{ \lambda_{i'j'} \lambda_{ij} \rho_{t|i'j'} \rho_{t|i,j}  }{  \rho_{t}  }  \, \mathrm{d} x \, \mathrm{d} t \\
    & \leq \!\! \int_0^1 \!\! \int_{\mathbb{R}^d} \sum_{i,j} \!\sum_{\substack{  i' \neq i \\ j' \neq j} } \big\|u_{t|ij} \! - \! u_{t|i'j'} \big\|^2 \min \{ \lambda_{ij} \rho_{t|ij},  \lambda_{i'j'} \rho_{t|i'j'} \}  \, \mathrm{d} x \, \mathrm{d} t,  
    \end{align*}
where the last inequality comes from the inequality $\frac{a_i a_j}{\sum_i a_i} \leq \min(a_i, a_j)$ for all positive numbers $\{a_i\}_{i=1}^N$. This completes the proof.
\end{proof}

Letting $$Q = \frac{ \int \min \{ \lambda_{ij} \rho_{t|ij}, \lambda_{i'j'} \rho_{t|i'j'} \} \, \mathrm{d} t }{ \iint \min \{ \lambda_{ij} \rho_{t|ij}, \lambda_{i'j'} \rho_{t|i'j'}\} \, \mathrm{d} x 
\, \mathrm{d} t},$$ 
that is, the normalized distribution of the minimum of the densities $\rho_{t|ij}, \rho_{t|i'j'}$, and assuming that $\mathbb{E}_Q [ \|u_{t|ij}(x) - u_{t|i',j'}(x) \|^2] < \infty $, since the policies $u_{t|ij}$ are affine with respect to $x$, we conclude that  
\begin{equation}
  J_{\mathrm{OT}} - J_{\mathrm{GMM}} \rightarrow 0  \quad \text{as} \quad  \mathrm{TV}(\rho_{t|ij}, \rho_{t|i'j'}) \rightarrow 1 \quad \forall \, i,j,i', j', (i,j) \neq (i', j') , 
\end{equation}\
where $\mathrm{TV}(\mu, \nu)$ denotes the total variation between two probability measures. }

{
\color{\editcolor}
\subsection{Training and Inference complexity of GMMflow} \label{App:complexity}

In this section, we provide a computational complexity analysis of Algorithm~\ref{alg1} with respect to the number of components in each mixture and the problem dimension. The computational complexity of fitting a GMM using the EM algorithm scales as $O (INK(D + D^2))$ \citep{pedregosa2011scikit}, where $I$ is the number of EM iterations, $N$ is the number of data points, $K$ is the number of Gaussian components (modes), and $D$ is the dimensionality of the data. Once the GMMs are fitted, solving a linear program with $N_0 \times N_1$ variables, where $N_0$ and $N_1$ denote the number of modes in the input and output distributions, respectively. 
Modern solvers such as MOSEK \citep{aps2020mosek} efficiently solve LP problems using interior-point methods, which have a computational complexity of $O(\sqrt{l} N^3)$ \citep{boyd_convex_optimization}, where $l$ represents the number of constraints. 

Regarding the computational complexity of inference, each evaluation of the GMMflow policy, i.e., equation~\eqref{lambda_pol}, scales linearly with the number of components in each mixture and the SDE integration also scales linearly with the number of discretization time steps. In practice, when implementing the GMMflow policy, only a small number of GSB policies are computed since the component level transport plan $\lambda_{ij}$ is sparse. Moreover, this computation is done in parallel for all conditional policies together. This results in very fast, practically constant-time inference regardless of the component number or problem dimension. 
}

\subsection{Proof of Theorem~\ref{feasibility_thm_multi}}

We start by noting that we can write the dynamical system \eqref{mSB_GMM:dyn} in the form of \eqref{LTV} with 
\begin{equation}
    A_t = \begin{bmatrix} 0 & I \\ 0  & 0\end{bmatrix}, \quad  B_t = \begin{bmatrix} 0  \\  I \end{bmatrix}, \quad D_t = \begin{bmatrix} 0  \\  \sqrt{\epsilon} I \end{bmatrix}.
\end{equation}
Due to \eqref{rho_lambda_multi} the joint density of the phase space $(x_t,  v_t)$ is given by 
\begin{equation}
    \rho_t(x, v) = \sum_{\mathbf{i}} \lambda_{\mathbf{i}} \, \N \left(\begin{bmatrix} x \\ v\end{bmatrix} \, ; \begin{bmatrix} \mu^x_{t|\mathbf{i}} \\ \mu^v_{t|\mathbf{i}}\end{bmatrix}, \, \begin{bmatrix} \Sigma^{xx}_{t|\mathbf{i}} & \Sigma^{xv}_{t|\mathbf{i}} \\ \Sigma^{vx}_{t|\mathbf{i}} & \Sigma^{vv}_{t|\mathbf{i}}\end{bmatrix} \right),
\end{equation}
We now note that for all $j=1, \dots,  M$, the position marginal $x_{t_j}$ at time $t_j$ is distributed as
\begin{equation}
    x_{t_j} \sim \sum_{\mathbf{i}}  \lambda_{\mathbf{i}} \N(\mu^x_{t_j|\mathbf{i}}, \Sigma^{xx}_{t_j|\mathbf{i}}) = \sum_{k=1}^{N_j} \sum_{\{\mathbf{i}: i_j = k \} } \lambda_{\mathbf{i}} \N(\mu_j^k, \Sigma^{k}_j) = \sum_{k=1}^{N_j} \alpha_j^k \N(\mu^k_j, \Sigma^k_j), 
\end{equation}
and therefore the flow \eqref{rho_lambda_multi} satisfies the constraint \eqref{mSB_GMM:BC}. 
Using a similar approach to the proof of Theorem \ref{feasibility_thm}, we will show that \eqref{lambda_pol_multi} produces the probability flow \eqref{rho_lambda_multi} by summing over all conditional GMSB flows. 
To facilitate notation, we will denote the phase space by $z \in \mathbb{R}^{2d}$, i.e., $z = [x ; v]$. 
Given that the individual policies $u_{t|\mathbf{i}}$ solve the GMSB subproblems \eqref{GmSB}, the pair $(\rho_{t|\mathbf{i}}, u_{t|\mathbf{i}})$ satisfies the FPK equation for the dynamical system~\eqref{LTV}, that is,
    \begin{align}\label{to:multiply:2}
     \frac{\partial \rho_{t|\mathbf{i}}}{\partial t} + \nabla \cdot \left( \rho_{t|\mathbf{i}}\left(A {\color{\editcolor}z} + B u_{t|\mathbf{i}}\right) \right) - \frac{1}{2} \tr \left( D D\t \hess(\rho_{t|\mathbf{i}}) \right) = 0.
     \end{align}
Multiplying equation~\eqref{to:multiply:2} by $\lambda_{\mathbf{i}}$ and summing over $\mathbf{i}$, we obtain
\begin{align}
 \sum_{\mathbf{i}} \lambda_{\mathbf{i}} \left[ \frac{\partial \rho_{t|\mathbf{i}}}{\partial t} + \nabla \cdot \left( \rho_{t|\mathbf{i}}\left(A z + B u_{t|\mathbf{i}} \right) \right) - \frac{1}{2} \tr \left( D D \t \hess(\rho_{t|\mathbf{i}}) \right) \right] = 0, 
 \end{align}
which implies that
\begin{align}
&\frac{\partial}{\partial t} \left( \sum_{\mathbf{i}} \rho_{t|\mathbf{i}} \lambda_{\mathbf{i}} \right) + \nabla \cdot \left(  A z \sum_{\mathbf{i}} \rho_{t|\mathbf{i}} \lambda_{\mathbf{i}} + B \sum_{\mathbf{i}} u_{t|\mathbf{i}} \rho_{t|\mathbf{i}} \lambda_{\mathbf{i}}  \right) \nonumber
\\
&\qquad\qquad\qquad\qquad\qquad\qquad- \frac{1}{2} \tr \left( D D \t \hess \left( \sum_{\mathbf{i}} \rho_{t|\mathbf{i}} \lambda_{\mathbf{i}} \right) \right) = 0.  \label{feas:step3:2} 
\end{align}
This can be further simplified as
\begin{align}
\frac{\partial \rho_t}{\partial t} + \nabla \cdot \left( \rho_t \left( A z + B \sum_{\mathbf{i}} u_{t|\mathbf{i}}\frac{\rho_{t|\mathbf{i}} \lambda_{\mathbf{i}} }{\sum_{\mathbf{i}} \rho_{t|\mathbf{i}} \lambda_{\mathbf{i}}} \right) \right) - \frac{1}{2} \tr \left( D D \t \hess(\rho_{t}) \right) = 0, 
\end{align}
which yields that
\begin{align}
\frac{\partial \rho_t}{\partial t} + \nabla \cdot \Big( \rho_t (A z + B u_t) \Big) - \frac{1}{2} \tr \left( D D \t \hess(\rho_{t}) \right) = 0.
\end{align}
This completes the proof.

\subsection{Proof of Theorem~\ref{mOT_thm_multi}}

The proof is similar to that of Theorem~\ref{OT_thm}. 
Substituting policy \eqref{lambda_pol_multi} to the cost \eqref{mSB_GMM:cost} we obtain
\begin{subequations}
\begin{align}
     J_{\mathrm{GMM}} =  & \E_{z_t \sim \rho_t} \left[ \int_{0}^{1}{ \left\| \sum_{\mathbf{i}} {u_{t|\mathbf{i}}(z_t) \frac{ \rho_{t|\mathbf{i}}(z_t)\lambda_{\mathbf{i}}}{\sum_{\mathbf{i}} \rho_{t|\mathbf{i}}(z_t) \lambda_{\mathbf{i}}}} \right\|^2 \d t} \right] \\
    = & \int_{0}^{1} \int { \rho_t (z) \left\| \sum_{\mathbf{i}} {u_{t|\mathbf{i}}(z) \frac{ \rho_{t|\mathbf{i}}(z)\lambda_{\mathbf{i}}}{\sum_{\mathbf{i}} \rho_{t|\mathbf{i}}(z) \lambda_{\mathbf{i}}}} \right\|^2 \, \d z \, \d t} \label{step22} \\
    \leq & \int_{0}^{1} \int { \rho_t (z) \frac{ \sum_{\mathbf{i}} \left\|u_{t|\mathbf{i}}(z)\right\|^2 \rho_{t|\mathbf{i}}(z)\lambda_{\mathbf{i}}}{\sum_{\mathbf{i}} \rho_{t|\mathbf{i}}(z) \lambda_{\mathbf{i}}}\,  \d z \, \d t}  \label{jensen2} \\
    = & \int_{0}^{1}  \int { \sum_{\mathbf{i}} \left\|u_{t|\mathbf{i}}(z)\right\|^2 \rho_{t|\mathbf{i}}(z) \lambda_{\mathbf{i}}}  \, \d z \, \d t \\
    = & \sum_{\mathbf{i}} \lambda_{\mathbf{i}} \E_{z_t \sim \rho_{t|\mathbf{i}}} \left[
 \int_{0}^{1} { \left\|u_{t|\mathbf{i}}(z_t)\right\|^2}  \, \d t \right] \\
    = & \sum_{\mathbf{i}} \lambda_{\mathbf{i}} J_{\mathbf{i}} = J_{\mathrm{OT}},
\end{align}
\end{subequations}
where \eqref{step22} is due to Fubini's theorem 
\citep[Theorem~6.1]{wheeden1977measure} and \eqref{jensen2} makes use of the discrete version of Jensen's inequality \citep[Theorem~7.35]{wheeden1977measure}.

\subsection{Proof of Theorem~\ref{cont_feasibility_thm}}

The proof is similar to the (discrete) GMM case.
First, notice that
\begin{align}
&\rho_{0}=\int_{\mathbb{R}^{m}\times\mathbb{R}^{m}}\rho_{0|w_{0},w_{1}}\d\Lambda(w_{0},w_{1})=\int_{\mathbb{R}^{m}}\mathcal{N}(\mu_{0}(w_{0}),\Sigma_{0}(w_{0}))\d P_{0}(w_{0}),
\\
&\rho_{1}=\int_{\mathbb{R}^{m}\times\mathbb{R}^{m}}\rho_{1|w_{0},w_{1}}\d\Lambda(w_{0},w_{1})=\int_{\mathbb{R}^{m}}\mathcal{N}(\mu_{1}(w_{1}),\Sigma_{1}(w_{1}))\d P_{1}(w_{1}).
\end{align}
Next, notice that $\rho_{t|w_{0},w_{1}}$ and $u_{t|w_{0},w_{1}}$ satisfy the FPK equation:
\begin{equation}\label{FPK:w0:w1}
\frac{\partial\rho_{t|w_{0},w_{1}}}{\partial t}
+\nabla\cdot\left(\rho_{t|w_{0},w_{1}}(A_{t}x_{t}+B_{t}u_{t|w_{0},w_{1}})\right)-\frac{1}{2}\mathrm{tr}\left(D_{t}D_{t}\t\nabla^{2}(\rho_{t|w_{0},w_{1}})\right)=0.
\end{equation}
By taking the expectation with respect to the distribution $\Lambda(w_{0},w_{1})$ in \eqref{FPK:w0:w1}, 
we get
\begin{align}
&\int_{\mathbb{R}^{m}\times\mathbb{R}^{m}}\Bigg[\frac{\partial\rho_{t|w_{0},w_{1}}}{\partial t}
+\nabla\cdot\left(\rho_{t|w_{0},w_{1}}(A_{t}x_{t}+B_{t}u_{t|w_{0},w_{1}})\right)\nonumber
\\
&\qquad\qquad\qquad\qquad\qquad\qquad-\frac{1}{2}\mathrm{tr}\left(D_{t}D_{t}\t\nabla^{2}(\rho_{t|w_{0},w_{1}})\right)\Bigg]\d\Lambda(w_{0},w_{1})=0,
\end{align}
which implies that
\begin{align}
&\frac{\partial}{\partial t}\int_{\mathbb{R}^{m}\times\mathbb{R}^{m}}\rho_{t|w_{0},w_{1}}\d\Lambda(w_{0},w_{1})
\nonumber
\\
&\qquad
+\nabla\cdot\left(A_{t}x_{t}\int_{\mathbb{R}^{m}\times\mathbb{R}^{m}}\rho_{t|w_{0},w_{1}}\d\Lambda(w_{0},w_{1})
+B_{t}\int_{\mathbb{R}^{m}\times\mathbb{R}^{m}}u_{t|w_{0},w_{1}}\rho_{t|w_{0},w_{1}}\d\Lambda(w_{0},w_{1})\right)
\nonumber
\\
&\qquad\qquad
-\frac{1}{2}\mathrm{tr}\left(D_{t}D_{t}\t\nabla^{2}\left(\int_{\mathbb{R}^{m}\times\mathbb{R}^{m}}\rho_{t|w_{0},w_{1}}\d\Lambda(w_{0},w_{1})\right)\right)=0,
\end{align}
which yields that
\begin{align}
&\frac{\partial\rho_{t}}{\partial t}
+\nabla\cdot\left(\rho_{t}\left(A_{t}x_{t}+B_{t}\int_{\mathbb{R}^{m}\times\mathbb{R}^{m}}u_{t|w_{0},w_{1}}\frac{\rho_{t|w_{0},w_{1}}\d\Lambda(w_{0},w_{1})}{\int_{\mathbb{R}^{m}\times\mathbb{R}^{m}}\rho_{t|w_{0},w_{1}}\d\Lambda(w_{0},w_{1})}\right)\right)
\nonumber
\\
&\qquad\qquad\qquad
-\frac{1}{2}\mathrm{tr}\left(D_{t}D_{t}\t\nabla^{2}(\rho_{t})\right)=0.
\end{align}
Hence, we conclude that
\begin{equation}
\frac{\partial\rho_{t}}{\partial t}+\nabla\cdot(\rho_{t}(A_{t}x_{t}+B_{t}u_{t}))-\frac{1}{2}\mathrm{tr}\left(D_{t}D_{t}\t\nabla^{2}(\rho_{t})\right)=0.
\end{equation}


\subsection{Proof of Theorem~\ref{cont_OTthm}}

The proof is similar to the (discrete) GMM case. We can compute that
\begin{align}
J_{\mathrm{GMM}}&=\mathbb{E}_{x_{t}\sim\rho_{t}}\left[\int_{0}^{1}\left\Vert\int_{\mathbb{R}^{m}\times\mathbb{R}^{m}}u_{t|w_{0},w_{1}}(x)
\frac{\rho_{t|w_{0},w_{1}}(x)\d\Lambda(w_{0},w_{1})}{\int_{\mathbb{R}^{m}\times\mathbb{R}^{m}}\rho_{t|w_{0},w_{1}}(x)\d\Lambda(w_{0},w_{1})}\right\Vert^{2}\d t\right]
\\
&=\int_{0}^{1}\int_{\mathbb{R}^{n}}\rho_{t}\left\Vert\int_{\mathbb{R}^{m}\times\mathbb{R}^{m}}u_{t|w_{0},w_{1}}(x)
\frac{\rho_{t|w_{0},w_{1}}(x)\d\Lambda(w_{0},w_{1})}{\int_{\mathbb{R}^{m}\times\mathbb{R}^{m}}\rho_{t|w_{0},w_{1}}(x)\d\Lambda(w_{0},w_{1})}\right\Vert^{2}\d x\d t
\\
&\leq\int_{0}^{1}\int_{\mathbb{R}^{n}}\rho_{t}
\frac{\int_{\mathbb{R}^{m}\times\mathbb{R}^{m}}\Vert u_{t|w_{0},w_{1}}(x)\Vert^{2}\rho_{t|w_{0},w_{1}}(x)\d\Lambda(w_{0},w_{1})}{\int_{\mathbb{R}^{m}\times\mathbb{R}^{m}}\rho_{t|w_{0},w_{1}}(x)\d\Lambda(w_{0},w_{1})}\d x\d t
\\
&=\int_{0}^{1}\int_{\mathbb{R}^{n}}\int_{\mathbb{R}^{m}\times\mathbb{R}^{m}}\Vert u_{t|w_{0},w_{1}}(x)\Vert^{2}\rho_{t|w_{0},w_{1}}(x)\d\Lambda(w_{0},w_{1})\d x\d t
\\
&=\int_{\mathbb{R}^{m}\times\mathbb{R}^{m}}\mathbb{E}_{x\sim\rho_{t|w_{0},w_{1}}}\left[\int_{0}^{1}\Vert u_{t|w_{0},w_{1}}(x)\Vert^{2}\d t\right]\d\Lambda(w_{0},w_{1}).
\end{align}
Hence, for any $\Lambda\in\Pi(P_{0},P_{1})$, 
\begin{equation}\label{take:in}
J_{\mathrm{GMM}}\leq\int_{\mathbb{R}^{m}\times\mathbb{R}^{m}}J(w_{0},w_{1})\d\Lambda(w_{0},w_{1}).
\end{equation}
By taking the infimum over $\Lambda\in\Pi(P_{0},P_{1})$ in \eqref{take:in}, we conclude that
$J_{\mathrm{GMM}}\leq J_{\mathrm{OT}}$.

\section{Continuous Gaussian Mixtures}\label{App:heavy_tails}
\setcounter{equation}{0}
\renewcommand{\theequation}{B.\arabic{equation}}

Theorem \ref{cont_OTthm} reduces the high-dimensional dynamic optimal transport problem \eqref{cGMM_SB} to the simpler, static OT problem \eqref{J:OT} in the space of couplings between the parameter distributions, i.e., $\Pi(P_{0},P_{1})$.
Although in general, problem \eqref{J:OT} is still difficult to solve, 
in many practical applications the parameter spaces $w_0, w_1$ are one-dimensional and $J(w_0, w_1)$ has a tractable closed form. Under these conditions, and provided the distributions $P_0, P_1$ admit positive densities $p_{0},p_{1}$, \eqref{J:OT} can be solved in almost closed form \citep{santambrogio2015optimal}. 
%

Our motivation for the extension to continuous mixtures stems from the fact that many heavy-tail distributions, such as the multivariate Student-t distribution and the alpha-stable distribution, can be expressed in the form of continuous Gaussian mixtures.
In this context, one can use a generalized version of Theorems~\ref{feasibility_thm} and  \ref{OT_thm} to create tractable upper bounds on the 2-Wasserstein distance between such distributions and approximate the corresponding optimal transport map, displacement interpolation, and flow fields, respectively.

\subsection{Multivariate \texorpdfstring{$t$}{t}-Distribution}
The Student-t distribution has been used as a heavy-tailed alternative to the Gaussian distribution, as a generative prior distribution in diffusion models and related generative models in the recent literature; see e.g.,~\cite{kim2024t3, pandey2025heavy, cordero2025non}.
In this section, we will explore the use cases of Theorems~\ref{cont_feasibility_thm} and \ref{cont_OTthm} to the case of Student-t boundary distributions. 

To this end, let $x_{0}, x_1$ follow $d$-dimensional multivariate $t$-distributions with parameters
$\nu_0, \mu_0, \Sigma_0$ and $\nu_1, \mu_1, \Sigma_1$ respectively.\footnote{The density of a multivariate t-distribution with $\nu$ degrees of freedom, and scale and location parameters $\mu, \Sigma$ respectively, is given by 
\begin{equation}
\frac{\Gamma((\nu+d)/2)}{\Gamma(\nu/2)\nu^{d/2}\pi^{d/2}|\Sigma|^{1/2}}
\left[1+\frac{1}{\nu}(x-\mu)^{\top}\Sigma^{-1}(x-\mu)\right]^{-(\nu+d)/2}.
\end{equation}
}  

A multivariate $t$-distribution can be viewed as a generalized Gaussian mixture model; 
see, for example, \cite{andrews1974scale}.
More specifically, let $u_{0}, u_1$ follow a gamma distribution,\footnote{Here $\text{Gamma}(a,b)$ denotes a gamma distribution with probability density functional
proportional to $x^{a-1}e^{-bx}$ where $a$ is the shape parameter and $b$ is the inverse scale parameter.} i.e., 
\begin{equation}
     u_{0}\sim\text{Gamma}(\nu_{0}/2,\nu_{0}/2), \qquad
     u_{1}\sim\text{Gamma}(\nu_{1}/2,\nu_{1}/2),
\end{equation}
and conditional on $u_{0}, u_1$, $x_{0}\sim\mathcal{N}(\mu_{0},u_{0}^{-1}\Sigma_{0})$ and $x_{1}\sim\mathcal{N}(\mu_{1},u_{1}^{-1}\Sigma_{1})$ respectively.
Then, $w_{0}^{2}=u_{0}^{-1}$ follows the distribution $\text{InverseGamma}(\nu_{0}/2,\nu_{0}/2)$
and $w_{1}^{2}=u_{1}^{-1}$ follows the distribution $\text{InverseGamma}(\nu_{1}/2,\nu_{1}/2)$, 
that is, $w_{0}^{2}$ has the probability density function
\begin{equation}
\frac{(2/\nu_{0})^{\nu_{0}/2}}{\Gamma(\nu_{0}/2)}(1/x)^{\frac{\nu_{0}}{2}+1}e^{-\frac{2}{\nu_{0}x}},
\end{equation}
and $w_{1}^{2}$  has the probability density function
\begin{equation}
\frac{(2/\nu_{1})^{\nu_{1}/2}}{\Gamma(\nu_{1}/2)}(1/x)^{\frac{\nu_{1}}{2}+1}e^{-\frac{2}{\nu_{1}x}}.
\end{equation}

Let $P_0, P_1, p_0, p_1$ the CDFs and PDFs of $w_0, w_1$ respectively. 
Starting with the CDF of $w_0$, we have
\begin{equation} \label{cdf_0}
P_{0}(x) = \mathbb{P}(w_{0}\leq x)=\mathbb{P}(w_{0}^{2}\leq x^{2})
=\int_{0}^{x^{2}}\frac{(2/\nu_{0})^{\nu_{0}/2}}{\Gamma(\nu_{0}/2)}(1/y)^{\frac{\nu_{0}}{2}+1}e^{-\frac{2}{\nu_{0}y}}\d y = \frac{\Gamma(\frac{\nu_0}{2}, \frac{\nu_0}{2x^2})}{\Gamma(\frac{\nu_0}{2}, 0)},
\end{equation}
which implies that $w_{0}\sim P_{0}$ has the probability density function
\begin{equation} \label{pdf_0}
p_0(x) = \frac{\d}{\d x}\mathbb{P}(w_{0}\leq x)=2x\frac{(2/\nu_{0})^{\nu_{0}/2}}{\Gamma(\nu_{0}/2)}(1/x^{2})^{\frac{\nu_{0}}{2}+1}e^{-\frac{2}{\nu_{0}x^{2}}}
=\frac{2(2/\nu_{0})^{\nu_{0}/2}}{\Gamma(\nu_{0}/2)}(1/x)^{\nu_{0}+1}e^{-\frac{2}{\nu_{0}x^{2}}}.
\end{equation}
Similarly, $w_{1}\sim P_{1}$ has a CDF given by 
\begin{equation} \label{cdf_1}
P_{1}(x) = \frac{\Gamma(\frac{\nu_1}{2}, \frac{\nu_1}{2x^2})}{\Gamma(\frac{\nu_1}{2}, 0)},
\end{equation}
and a PDF given by 
\begin{equation} \label{pdf_1}
p_1(x) = \frac{2(2/\nu_{1})^{\nu_{1}/2}}{\Gamma(\nu_{1}/2)}(1/x)^{\nu_{1}+1}e^{-\frac{2}{\nu_{1}x^{2}}}.
\end{equation}

With equations \eqref{cdf_0}-\eqref{pdf_1} in mind, consider Problem \eqref{cGMM_SB} with Student-t distributions with parameters $\nu_0, \mu_0, \Sigma_0$ and $\nu_1, \mu_1, \Sigma_1$, that is, continuous mixtures of the form
\begin{subequations}  \label{t_paramerization}
\begin{align}
    & \rho_0(x) = \int \N(x; \mu_0, w_0^2 \Sigma_0) p_0(w_0) \d w_0, \\
    & \rho_1(x) = \int \N(x; \mu_1, w_1^2 \Sigma_1) p_1(w_1) \d w_1,
\end{align}
\end{subequations}
where $p_0(w_0), p_1(w_1)$ are given by \eqref{pdf_0} and \eqref{pdf_1} respectively. 

Considering noise-free dynamics to simplify the respective formulas, as in \eqref{cGMM_SB:dyn}, the $(w_0$-$w_1)$-GSB for the boundary distribution parametrization \eqref{t_paramerization}, admits the following closed form
\begin{subequations} \label{w0w1_closedForm}
\begin{align}
& \Sigma_{t|w_0, w_1}=(1-t)^{2}w_{0}^{2}\Sigma_{0}
+t^{2}w_{1}^{2} \Sigma_{1}
+ (1-t)tw_{0}w_{1} \left(C + C\t\right), \\
& \mu_{t|w_0, w_1} = (1-t)\mu_0 + t \mu_1, \\
& K_{t|w_0, w_1} = S_t \t \Sigma_t^{-1}, \\
& v_{t|w_0, w_1} = \mu_1 - \mu_0, \\
& S_{t|w_0, w_1} = t \left(\Sigma_1 - C \t\right) - (1-t) (\Sigma_0 -  C), 
\end{align}
\end{subequations}
where $C = \Sigma_0^{\half} D \Sigma_0^{-\half}, \, D = (\Sigma_0^{\half} \Sigma_1 \Sigma_0^{\half})^{\half}$.
Equations \eqref{w0w1_closedForm} yield 
\begin{subequations}
    \begin{align}
        & \rho_{t|w_0, w_1}(x) = \N\left(x; \mu_{t|w_0w_1}, \Sigma_{t|w_0, w_1}\right), \\
        & u_{t|w_0, w_1}(x) = K_{t|w_0, w_1}\left(x-\mu_{t|w_0, w_1}\right) + \mu_1 - \mu_0.
    \end{align}
\end{subequations}

Furthermore, the optimal cost $J(w_0, w_1)$ in \eqref{J:OT} can be calculated using \eqref{BWd} and equals 
\begin{equation} \label{Jw0w1}
    J(w_0, w_1) = \| \mu_0 - \mu_1 \|^2 + w_0^2 \tr(\Sigma_0) + w_1^2 \tr(\Sigma_1) - 2 w_0 w_1 \tr(D).
\end{equation}

Focusing on problem \eqref{J:OT}, it is known that when the transport cost has the form $J(w_0, w_1) = h(w_0-w_1)$ where $h$ is a convex function, the corresponding optimal transport plan is given by \cite[Theorem 2.9]{santambrogio2015optimal} 
\begin{equation} \label{1Dtransport}
\Lambda^{\ast}(w_{0},w_{1})= \big(P^{-1}_0(x), P^{-1}_1(x) \big)_{\#}\mathrm{Unif}([0,1]),
\end{equation}
where $\mathrm{Unif}([0,1])$ is the uniform measure over the set $[0,1]$, and $P_0^{-1}, P_1^{-1}$ are the corresponding inverse CDFs of \eqref{cdf_0}, \eqref{cdf_1}.
The cost function \eqref{Jw0w1}, does not satisfy this condition, since it cannot be written as a perfect square for general scaling matrices $\Sigma_0, \Sigma_1$. 
We resolve this issue through the following proposition.

\begin{proposition}
    When solving the OT problem \eqref{J:OT}, the transport cost \eqref{Jw0w1} and the cost function $\tilde{J}(w_0, w_1) = |w_0-w_1|^2$ are equivalent, i.e., they result in the same optimal coupling $\Lambda^*$.
\end{proposition}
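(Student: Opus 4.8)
The plan is to exploit the fact that in an optimal transport problem with fixed marginals $P_0, P_1$, only the genuinely bilinear part of the cost influences the optimal coupling: any additive term depending on $w_0$ alone or on $w_1$ alone integrates, against every $\Lambda \in \Pi(P_0, P_1)$, to a quantity determined solely by the marginals, and hence acts as an additive constant in the objective. First I would decompose the cost $J(w_0, w_1)$ in \eqref{Jw0w1} as
\[
J(w_0, w_1) = \underbrace{\|\mu_0 - \mu_1\|^2}_{\text{constant}} + \underbrace{w_0^2 \tr(\Sigma_0)}_{\text{function of } w_0} + \underbrace{w_1^2 \tr(\Sigma_1)}_{\text{function of } w_1} - 2 \tr(D)\, w_0 w_1,
\]
and integrate against an arbitrary $\Lambda \in \Pi(P_0, P_1)$. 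The first three terms contribute $\|\mu_0-\mu_1\|^2 + \tr(\Sigma_0)\int w_0^2 \,\d P_0 + \tr(\Sigma_1)\int w_1^2 \,\d P_1$, which is independent of $\Lambda$. Thus minimizing $\int J \,\d\Lambda$ over $\Pi(P_0, P_1)$ is equivalent to minimizing $-2\tr(D)\int w_0 w_1 \,\d\Lambda$.

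Next I would perform the same reduction for $\tilde{J}(w_0, w_1) = |w_0 - w_1|^2 = w_0^2 - 2 w_0 w_1 + w_1^2$: the terms $w_0^2$ and $w_1^2$ again integrate to marginal-dependent constants, so minimizing $\int \tilde{J} \,\d\Lambda$ is equivalent to minimizing $-2\int w_0 w_1 \,\d\Lambda$. Both objectives therefore reduce, up to a positive scaling and additive constants, to the single problem of \emph{maximizing} the correlation $\int w_0 w_1 \,\d\Lambda$ over $\Pi(P_0, P_1)$, whose set of minimizers is unchanged by the positive multiplicative factor. Hence $J$ and $\tilde{J}$ share the optimal coupling $\Lambda^*$.

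The points that require care — and which I regard as the main, if modest, obstacle — are twofold. First, the discarded terms $\tr(\Sigma_0)\int w_0^2\,\d P_0$ and $\tr(\Sigma_1)\int w_1^2\,\d P_1$ must be finite constants, which holds precisely when the scale matrices have finite second moments under $P_0, P_1$; by \eqref{pdf_0}, \eqref{pdf_1} this is equivalent to $\nu_0, \nu_1 > 2$, the very condition needed for the $2$-Wasserstein distance between the $t$-distributions to be finite. Second, the factor $\tr(D)$ multiplying the bilinear term must be strictly positive, so that minimizing $-2\tr(D)\int w_0 w_1\,\d\Lambda$ genuinely coincides with maximizing $\int w_0 w_1\,\d\Lambda$ and the direction of optimization is preserved. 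Since $\Sigma_0, \Sigma_1$ are positive definite scale matrices, $\Sigma_0^{\half}\Sigma_1\Sigma_0^{\half}$ is positive definite, its principal square root $D$ is positive definite, and therefore $\tr(D) > 0$. This validates the equivalence and, as intended, lets one apply the one-dimensional transport formula \eqref{1Dtransport} with the convex cost $h(s) = s^2$ to obtain $\Lambda^*$ in closed form.
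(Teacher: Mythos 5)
Your proof is correct and follows essentially the same route as the paper's: both arguments split $J(w_0,w_1)$ into terms depending on $w_0$ or $w_1$ alone (which integrate to marginal-determined constants over any $\Lambda\in\Pi(P_0,P_1)$) plus a term proportional, via $\tr(D)$, to the cost $|w_0-w_1|^2$ (the paper completes the square; you equivalently reduce both costs to the bilinear term $w_0 w_1$). Your added verifications that $\tr(D)>0$ and that the discarded second moments are finite for $\nu_0,\nu_1>2$ are correct and slightly more careful than the paper, which asserts the positivity of $\tr(D)$ without comment.
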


\begin{proof}
    Equation \eqref{Jw0w1} can be written in the form: 
    \begin{equation} \label{J_inter}
        J(w_0, w_1) = \| \mu_0 - \mu_1 \|^2 + w_0^2 \, \tr(\Sigma_0 - D) + w_1^2 \, \tr(\Sigma_1 - D) +  |w_0-w_1|^2 \, \tr(D).
    \end{equation}
    In \eqref{J_inter}, the terms $ \| \mu_0 - \mu_1 \|^2, \, w_0^2 \tr(\Sigma_0 - D), \,  w_1^2 \tr(\Sigma_1 - D)$ do not contribute to the optimization problem in \eqref{J:OT}, since they are constant for any feasible transport plan $\Lambda \in \Pi(\rho_0, \rho_1)$ due to the fixed boundary distributions.
    By dropping these terms, as well as the positive scaling constant $\tr (D)$, we obtain the desired result. 
\end{proof}
Equation \eqref{1Dtransport} implies that the optimal value for Problem \eqref{J:OT} is given by 
\begin{equation} \label{J_star}
    J^*_{\mathrm{OT}} = \int_0^1 J \big(P^{-1}_0(w), P_1^{-1}(w) \big) \d w,
\end{equation}
while the optimal control policy $u_t^*(x)$ and the respective density $\rho_t^*(x)$ resulting from substituting to the optimal transport plan $\Lambda^*$ to the formulas \eqref{cont_pol} and \eqref{cont_feasibility_thm} of Theorem~\ref{cont_feasibility_thm} are given by 
\begin{subequations} \label{rho_star}
\begin{align}
    \rho_t^*(x) = \int_0^1 \rho_{t|P_0^{-1}(w), P_1^{-1}(w)}(x) \d w, 
\end{align}
\end{subequations}
and
\begin{align} \label{u_star}
u^{\ast}_{t}(x)
&=\int_{0}^{1}u_{t|P_0^{-1}(w), P_1^{-1}(w)}(x)\frac{\rho_{t|P_0^{-1}(w), P_1^{-1}(w)}(x)}{\rho^*_t(x)} \d w.
\end{align}
Since equations \eqref{J_star}-\eqref{u_star} involve only one-dimensional integrals, they can be easily computed numerically using a quadrature.
Although $P_0^{-1}, P_1^{-1}$ are not available in closed form, they can be obtained in most scientific computing packages such as \texttt{scipy} \citep{virtanen2020scipy} by properly scaling the quantile function of the inverse gamma distribution.

To illustrate this approach, we calculate the upper bound \eqref{J_star} and the true Wasserstein-2 distance for a one-dimensional problem between two Student-t distributions with parameters $\mu_1 = \mu_2 = 0, \, \Sigma_1 = 1$, $\Sigma_2 = \{0.25, 1, 4\}$, $\nu_1 =3$ for various values of $\nu_2 \in [2.5, 10]$ and report the results in Figure \ref{fig:cgmm}. 
We note that our approach works for arbitrary Student-t distributions in any dimension, however, we study the 1D case in this example to be able to calculate the exact Wasserstein distance and quantify the tightness of our upper bound. 
Although rigorously studying the tightness of the bound \eqref{J_star} remains an open problem, as evident in Figure~\ref{fig:cgmm}, it closely approximates the true Wasserstein distance, at-least in this simple 1D scenario.

\begin{figure}[!ht]
\centering
\includegraphics[width=.32\textwidth]{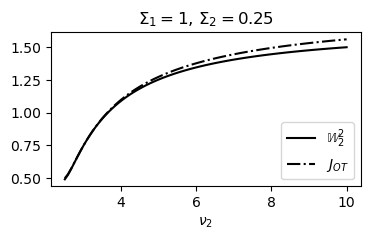}\hfill
\includegraphics[width=.32\textwidth]{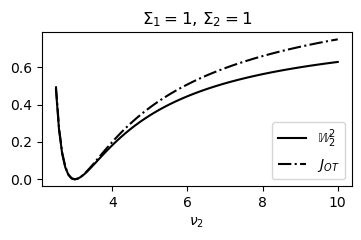}\hfill
\includegraphics[width=.32\textwidth]{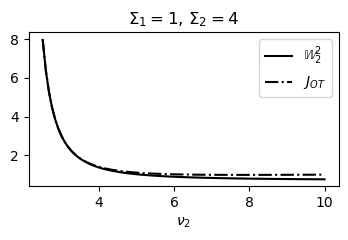}
\caption{Comparison between true Wasserstein distance and upper bound \eqref{J_star} for 1D Student-t distributions. }
\label{fig:cgmm}
\end{figure}

We believe this result, along with the policy \eqref{u_star} and the interpolation \eqref{rho_star}, could be useful tools in developing simulation-free methods for training diffusion models with heavy-tail prior distributions, or for creating tractable OT flows between mixtures of Student-t distributions. 
We leave these interesting directions as future work, since they are not immediately related to computationally inexpensive diffusion model training, which is the main theme of the rest of this paper. 

\section{Additional Experiments and Implementation Details}\label{App:exp_details}

\subsection{Additional Details on 2D Problems}

To compare our approach for the problem of Figure~\ref{fig:GMM2D} with state-of-the-art neural SB solvers we used the original implementations of the DSB\footnote{\url{https://github.com/JTT94/diffusion_schrodinger_bridge}} \citep{de2021diffusion} and DSBM\footnote{\url{https://github.com/yuyang-shi/dsbm-pytorch}} \citep{shi2023diffusion}.
The network architecture used for both algorithms is the fully connected DNN 
of~\cite{de2021diffusion} with $128$-dimensional sinusoidal temporal encodings, 256 neurons in the encoder layer, $\{256, 256\}$ neurons in the decoder layers, and SiLU activation functions \citep{hendrycks2016gaussian}.
We run all algorithms and report the results in Table~\ref{tab:jensen_gap}.
For the zero noise case, i.e., $\epsilon=0$, DSB and DSBM are not applicable, so we approximate the true OT cost using discrete optimal transport, calculated using the POT library \citep{flamary2021pot}, using 10,000 samples from each distribution.
As evident from Table~\ref{tab:jensen_gap}, although the DSB and DSBM algorithms can approximate the true SB cost, they fail to retrieve the true optimal solution for larger values of the noise parameter, due to their non-convex loss functions. 

\begin{table} [!ht]
    \caption{Transport cost comparison for the problem in Figure \ref{fig:GMM2D}}
    \label{tab:jensen_gap}
    \setlength{\tabcolsep}{3pt}
    \centering
    \begin{tabular}{|c|c|c|c|c|c|}
    \hline
        $\epsilon$ & $J_{\mathrm{OT}}$  \eqref{GMM_SB:cost} & $ J_{\mathrm{GMM}}$  \eqref{OTcost}  & DSBM & DSB & OT\\
    \hline
        0    & 100.06   &   89.45  & -  & - & 84.87 \\
    \hline
        0.1  & 100.15   &   89.32  &  84.26 & 98.62 & -\\
    \hline
        1    & 102.28   &   89.28  &  131.50 & 100.82 & -\\
    \hline
        10   & 162.13   &  116.60  &  133.04 & 244.31 & -\\
    \hline
    \end{tabular}
\end{table}

Furthermore, regarding the example problem in Figure~\ref{fig:GT2DCSL}, we provide additional details about the approximation of the boundary distributions as mixture models in Figure~\ref{fig:AI2STAT_details}.  

\begin{figure}[htb]
    \centering
    \includegraphics[width=1 \linewidth]{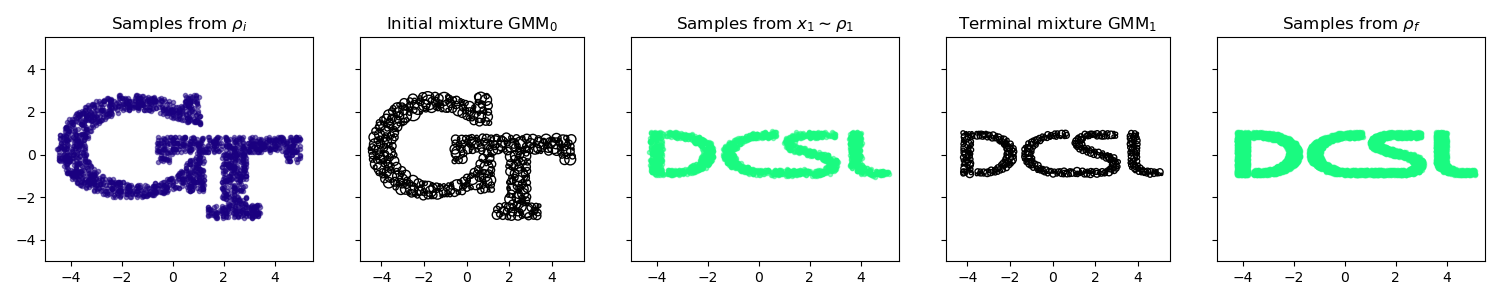}
    \caption{\textcolor{rho0}{GT} to \textcolor{rho1}{DCSL} distribution steering details.}
    \label{fig:AI2STAT_details}
\end{figure}
%
\subsection{Image-to-Image Translation Details}\label{expDetails_Im}

To better evaluate the performance of the proposed approach in the Image-to-Image translation task, we provide further examples in Figures~\ref{fig:m2f_extended} and~\ref{fig:a2c_extended} as well as approximate training and inference times for our approach, and compare them with the training and inference times of LightSB in Table~\ref{tab:speed}.
For our approach, training time consists of the time required to fit the GMMs in the latents of the FFHQ dataset for the two boundary distributions, and the solution of the linear program \eqref{fleet_OT}. 
As inference time, we consider the time taken for the integration of the SDE (or ODE for $\epsilon=0$) \eqref{GMM_SB:dyn} with the mixture policy \eqref{lambda_pol}.
We observe that while inference time is small for solving the deterministic (optimal transport) problem, i.e., for $\epsilon=0$, integrating the stochastic dynamical system for positive values of $\epsilon$ requires more time due to the small time step required for SDE integration.
The quality of the produced images was not found to be affected by this parameter, implying that $\epsilon=0$ could be used for fast, deterministic inference, while a positive value of $\epsilon$ will allow for some randomness in the generated images.
We also note that the faster training time for our approach is mainly due to the very fast convergence of the EM algorithm, which is also less likely to converge to local minima, compared to the standard maximum likelihood method for fitting distribution to data. 
All tests were conducted on a desktop computer with an RTX 3070 GPU. 
\begin{table} [!ht]
    \caption{Training and inference time comparison with state of the art. Inference time is measured for a batch of 10 images and uses GPU parallelization for calculating the elementary GSB policies of the mixture policy \eqref{lambda_pol}.}
    \label{tab:speed}
    \vspace*{2mm}
    \centering
    \begin{tabular}{|c|c|c|c|}
    \hline
                 & Training [s]  &  Inference ($\epsilon=0$) [s], & Inference ($\epsilon=0.1$) [s]\\
    \hline
         LightSB\tablefootnote{\cite{korotin2024light}} & 57 &  -  & 0.02\\
    \hline
         Ours    & 17 & 0.06 & 0.2 \\
    \hline
    \end{tabular}
\end{table}

\subsection{Multi-Marginal Problems Details} \label{App:mSB:details}

\begin{figure} [!ht]
    \centering
    \includegraphics[width=1\linewidth]{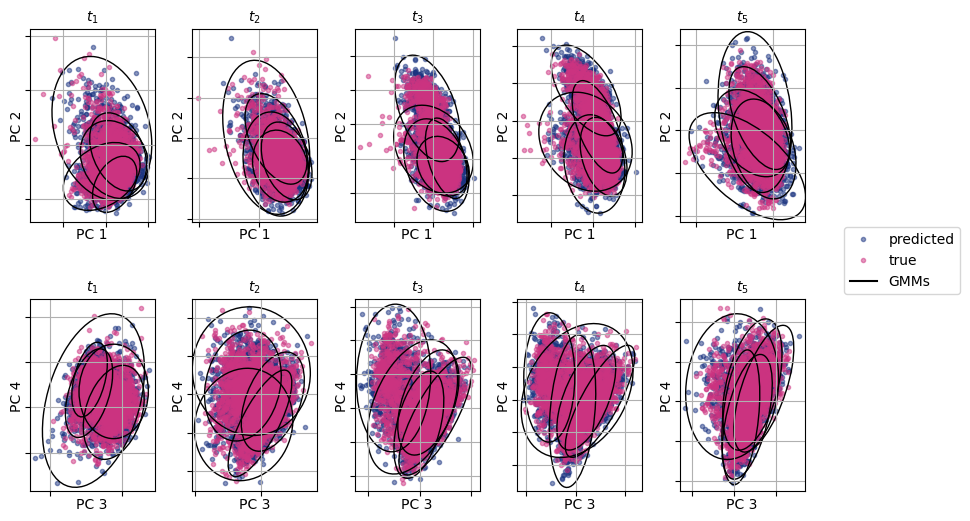}
    \caption{Additional visualization of results in 5-d scRNA problem: predicted vs true distributions for all time-marginals overlayed with the 3-sigma bound for each Gaussian component of the pre-fitted GMMs. }
    \label{fig:scRNAseq_details}
\end{figure}

\paragraph{Solution times.} 

To solve each multi-marginal GMSB we use the semidefinite formulation detailed in Section~\ref{App:OCS} and used \cite{aps2020mosek} to solve the resulting semidefinite program. 
Specifically, we assume the GMMs between the five temporal marginals are spaced 1 time unit apart, resulting in a problem horizon of 4 time units. We use a coarse temporal discretization with time-step $\Delta t = 0.1$ (i.e., 10 time steps between $[t_i, t_{i+1}]$) to evaluate the cost tensor for the optimization problem \eqref{m_fleet_OT} and a fine resolution discretization of $\Delta t = 0.01$ for the final policy calculation, solving only for the GMSBs with non-zero transport parameter $\lambda_{\mathbf{i}}$.
There are a total of $3,125$ combinations of GMSBs for this problem, and the computation of each one using the coarse time grid takes roughly $0.35$ s on an Intel i7 12-th generation CPU with $32$ GB of RAM memory, giving a total of $18.5$ minutes of calculations, if all GMSBs are solved serially. 
This computational overhead can be greatly decreased if the GMSBs are solved in parallel. 
In our setup, we parallelized the calculation using MOSEK's built-in capabilities and solved them in batches of 12, using 2 CPU threads per problem. This brought down the total calculation time under $6$ minutes.
For the final policy calculation, there are only $21$ active GMSBs in the mixture policy \eqref{lambda_pol_multi}, each taking $6$ seconds to compute. The total run time for our algorithm for this problem, adds up to 8 minutes for this problem, which is considerably lower than the corresponding neural methods ($24$ minutes on GPU for the DMSB algorithm \citep{chen2023deep}).  

\paragraph{Velocity inference.} 

After calculating the conditional GMSBs and solving \eqref{mOT_thm_multi}, the marginal mixture distribution for the entire phase-space is fully defined for the entire time horizon of the problem through Equation \eqref{rho_lambda_multi}. Given a position sample $x_0$ at time $t=0$, the corresponding velocity component can be inferred using conditional GMM sampling. 
Specifically, considering that the joint distribution of the phase space at time $t$ is 
\begin{equation}
    \rho_t(x, v) = \sum_{\mathbf{i}} \lambda_{\mathbf{i}} \, \N \left(\begin{bmatrix} x \\ v\end{bmatrix} \, ; \begin{bmatrix} \mu^x_{t|\mathbf{i}} \\ \mu^v_{t|\mathbf{i}}\end{bmatrix}, \, \begin{bmatrix} \Sigma^{xx}_{t|\mathbf{i}} & \Sigma^{xv}_{t|\mathbf{i}} \\ \Sigma^{vx}_{t|\mathbf{i}} & \Sigma^{vv}_{t|\mathbf{i}}\end{bmatrix} \right),
\end{equation}
it is easy to show that the density of $\rho_t(v|x)$ is also a Gaussian Mixture Model, since  
\begin{subequations}
    \begin{align}
        \rho_t(v|x) & = \frac{ \rho_t(x, v) }{\rho_t(x)} \label{bayes} \\
        & = \frac{\sum_{\mathbf{i}} \lambda_{\mathbf{i}} \, \N \left(\begin{bmatrix} x \\ v\end{bmatrix} \, ; \begin{bmatrix} \mu^x_{t|\mathbf{i}} \\ \mu^v_{t|\mathbf{i}}\end{bmatrix}, \, \begin{bmatrix} \Sigma^{xx}_{t|\mathbf{i}} & \Sigma^{xv}_{t|\mathbf{i}} \\ \Sigma^{vx}_{t|\mathbf{i}} & \Sigma^{vv}_{t|\mathbf{i}}\end{bmatrix} \right)}{\sum_{\mathbf{i}} \lambda_{\mathbf{i}} \, \N \left( x ;   \, \mu^x_{t|\mathbf{i}} , \, \Sigma^{xx}_{t|\mathbf{i}} \right)} \\
        & = \frac{\sum_{\mathbf{i}} \lambda_{\mathbf{i}} \, \N \left( v ;   \, \mu^{v|x}_{t|\mathbf{i}} , \, \Sigma^{v|x}_{t|\mathbf{i}} \right) \N \left( x ;   \, \mu^x_{t|\mathbf{i}} , \, \Sigma^{xx}_{t|\mathbf{i}} \right)}{\sum_{\mathbf{i}} \lambda_{\mathbf{i}} \, \N \left( x ;   \, \mu^x_{t|\mathbf{i}} , \, \Sigma^{xx}_{t|\mathbf{i}} \right)}  \\
        & = \sum_{\mathbf{i}} \frac{ \lambda_{\mathbf{i}} \, \N \left( x ;   \, \mu^x_{t|\mathbf{i}} , \, \Sigma^{xx}_{t|\mathbf{i}} \right)}{\sum_{\mathbf{i}} \lambda_{\mathbf{i}} \, \N \left( x ;   \, \mu^x_{t|\mathbf{i}} , \, \Sigma^{xx}_{t|\mathbf{i}} \right)} \, \N \left( v ;   \, \mu^{v|x}_{t|\mathbf{i}} , \, \Sigma^{v|x}_{t|\mathbf{i}} \right),  \label{cond_v_GMM}
    \end{align}
\end{subequations}
where Equation \eqref{bayes} is due to the Bayes rule, with \citep{bishop2006pattern}
\begin{equation*}
\mu^{v|x}_{t|\mathbf{i}} = \mu^{v}_{t|\mathbf{i}} +  \Sigma^{vx}_{t|\mathbf{i}} \left(\Sigma^{xx}_{t|\mathbf{i}}\right)^{-1} \left( x - \mu^{x}_{t|\mathbf{i}}\right), 
\end{equation*}
and 
\begin{equation*}
\Sigma^{v|x}_{t|\mathbf{i}} =  \Sigma^{vv}_{t|\mathbf{i}} 
 - \Sigma^{vx}_{t|\mathbf{i}} \, \left(\Sigma^{xx}_{t|\mathbf{i}}\right)^{-1} \, \Sigma^{xv}_{t|\mathbf{i}}.
\end{equation*}
For our problem, we use Equation~\eqref{cond_v_GMM} to sample the initial velocity of a new sample given its initial position, and then use the joint position-velocity initial conditions to calculate the sample's trajectory by integrating \eqref{GMM_SB:dyn}.

\paragraph{Visualization of results.}

To better visualize our results, we provide more information about the predicted distributions in Figure \ref{fig:scRNAseq_details}, overlaid with the pre-fitted GMMs at each time step.
It is easy to visually confirm that the 5-component mixtures capture the marginal distributions in all time-steps accurately, and since our method is exact, there is minimal distribution mismatch in the predicted marginals, as confirmed quantitatively by the indices in Table~\ref{scRNA_table}.

\subsection{Performance on EOT Benchmarks}

To further evaluate the optimality of the proposed approach, we tested the algorithm on the Entropic Optimal Transport benchmark detailed in \cite{gushchin2023building}.
The benchmark provides a pair of boundary test distributions $\rho_0, \rho_1$, where $\rho_0$ is a scaled Gaussian distribution and $\rho_1$ is a mixture-like distribution that is easy to sample from, and an optimal conditional transport plan $\pi^*(x_1 \vert x_0)$, which is a Gaussian Mixture Model and is known in closed form.
%
%
For the pair $\rho_0, \rho_1$, the optimal policy solving \eqref{SB} can be calculated explicitly, allowing direct comparisons with our approach.
The metric we use to measure the optimality of our approach is the 
Bures-Wasserstein Unexplained Variance Percentage ($\mathrm{cB}\sW\mbox{-}\mathrm{UVP}$) \citep{gushchin2023building}, defined by
\begin{equation} \label{cBWUPV}
\mathrm{cB}\sW\mbox{-}\mathrm{UVP}(\hat{\pi}, \pi^*) \triangleq \frac{100\%}{\frac{1}{2} \mathrm{Var}(\rho_1) } \int \mathrm{B}\sW_2^2(\hat{\pi}(x_1|x_0) \| \pi^*(x_1|x_0)) \rho_0(x_0) \, \d x_0,
\end{equation}
which measures the distance between conditional transport plans, evaluated using the Bures-Wasserstein metric. 

To use the method of \cite{gushchin2023building}, we first obtain samples from the two boundary test distributions and then fit mixture models on them using EM.
We then deploy policy \eqref{lambda_pol}, and report the values of the $\mathrm{cB}\sW\mbox{-}\mathrm{UVP}$ index between the known optimal conditional transport plan $\pi^*(x_1\vert x_0)$, and the conditional transport plan resulting from the integration of the policy \eqref{lambda_pol}, denoted $\hat{\pi}(x_1|x_0)$.
We use 1,000 initial condition samples $x_0$, and for each sample, we draw 1,000 $x_1$ samples from the distributions $\pi^*(x_1\vert x_0)$ and $\hat{\pi}(x_1\vert x_0)$ to compute the empirical Bures-Wasserstein distance in \eqref{cBWUPV}.
The results are reported in Table \ref{tbl:bw-uvp-plan} for problems of various dimensions and noise levels, along with many other available methods for solving the same problem \citep[Table 5]{gushchin2023building}.
%
We note that although our approach requires virtually no training compared to computationally expensive neural OT and SB approaches, it outperforms many of these algorithms, outlining its excellent performance in problems where GMMs accurately capture the marginal distributions of the problem.
%
%


\begin{table}[!ht]\centering
\scriptsize
\setlength{\tabcolsep}{3pt}
\caption{Comparisons of $\text{cB}\mathbb{W}_{2}^{2}\text{-UVP}\downarrow$ (\%) between the optimal plan $\pi^*$ and the learned plan  $\widehat{\pi}$. \textbf{Colors} indicate the ratio of the metric to the \textit{independent baseline} metric: \protect\linebreak
\text{ratio} $\leq$ \textcolor{ForestGreen}{0.2}, \text{ratio} $\in$ \textcolor{orange}{(0.2, 0.5)}, \text{ratio} $>$ \textcolor{red}{0.5}.}\label{tbl:bw-uvp-plan}
\vspace{3mm}
\begin{tabular}{@{}c*{12}{S[table-format=0]}@{}}
\toprule
& \multicolumn{4}{c}{$\epsilon\!=\!0.1$} & \multicolumn{4}{c}{$\epsilon\!=\!1$} & \multicolumn{4}{c}{$\epsilon\!=\!10$}\\
\cmidrule(lr){2-5} \cmidrule(lr){6-9} \cmidrule(l){10-13}
& {$D\!=\!2$} & {$D\!=\!16$} & {$D\!=\!64$} & {$D\!=\!128$} & {$D\!=\!2$} & {$D\!=\!16$} & {$D\!=\!64$} & {$D\!=\!128$} & {$D\!=\!2$} & {$D\!=\!16$} & {$D\!=\!64$} & {$D\!=\!128$} \\
\midrule  
$\lfloor\textbf{LSOT}\rceil$  &  \text{-}  &  \text{-}  &  \text{-}  &  \text{-}  &  \text{-}  &  \text{-} &   \text{-}  &  \text{-}  &  \text{-}  &  \text{-}  &  \text{-}  &  \text{-} \\ 
$\lfloor\textbf{SCONES}\rceil$ &  \text{-}  &  \text{-}  &  \text{-}  &  \text{-}  & \textcolor{orange}{34.88} & \textcolor{orange}{71.34} & \textcolor{orange}{59.12} & \textcolor{red}{136.44} & \textcolor{orange}{32.9} & \textcolor{red}{50.84 }& \textcolor{red}{60.44 }&  \textcolor{red}{52.11 }\\
$\lfloor\textbf{NOT}\rceil$  & \textcolor{ForestGreen}{1.94 }& \textcolor{ForestGreen}{13.67} & \textcolor{ForestGreen}{11.74} & \textcolor{ForestGreen}{11.4 }& \textcolor{ForestGreen}{4.77 }& \textcolor{orange}{23.27} & \textcolor{red}{41.75} & \textcolor{orange}{26.56} & \textcolor{red}{2.86}& \textcolor{red}{4.57} & \textcolor{red}{3.41} &  \textcolor{red}{6.56} \\
$\lfloor\textbf{EgNOT}\rceil$ &\textcolor{red}{129.8}&\textcolor{orange}{75.2} &\textcolor{orange}{60.4} &\textcolor{orange}{43.2} &\textcolor{red}{80.4} &\textcolor{red}{74.4} &\textcolor{red}{63.8} &\textcolor{red}{53.2} &\textcolor{red}{4.14} &\textcolor{red}{2.64} & \textcolor{red}{2.36} & \textcolor{red}{1.31} \\ 
$\lfloor\textbf{ENOT}\rceil$ & \textcolor{ForestGreen}{3.64} & \textcolor{ForestGreen}{22} & \textcolor{ForestGreen}{13.6} & \textcolor{ForestGreen}{12.6} & \textcolor{ForestGreen}{1.04} & \textcolor{ForestGreen}{9.4} & \textcolor{orange}{21.6} & \textcolor{red}{48} & \textcolor{orange}{1.4} & \textcolor{red}{2.4} & \textcolor{red}{19.6} & \textcolor{red}{30} \\
$\lfloor\textbf{MLE-SB}\rceil$ & \textcolor{ForestGreen}{4.57} & \textcolor{ForestGreen}{16.12} & \textcolor{ForestGreen}{16.1} & \textcolor{ForestGreen}{17.81} & \textcolor{ForestGreen}{4.13} & \textcolor{ForestGreen}{9.08} & \textcolor{orange}{18.05} & \textcolor{orange}{15.226} & \textcolor{orange}{1.61} & \textcolor{red}{1.27} & \textcolor{red}{3.9} & \textcolor{red}{12.9} \\  
$\lfloor\textbf{DiffSB}\rceil$ &\textcolor{orange}{73.54}&\textcolor{orange}{59.7} & \textcolor{red}{1386.4}&\textcolor{red}{1683.6}&\textcolor{orange}{33.76} &\textcolor{red}{70.86} &\textcolor{red}{53.42} &\textcolor{red}{156.46} & \text{-} & \text{-} & \text{-} & \text{-} \\   
$\lfloor\textbf{FB-SDE-A}\rceil$ & \textcolor{red}{86.4} & \textcolor{orange}{53.2} & \textcolor{red}{1156.82} &\textcolor{red}{1566.44} & \textcolor{orange}{30.62} & \textcolor{red}{63.48} & \textcolor{orange}{34.84} & \textcolor{red}{131.72} & \text{-} & \text{-} & \text{-} & \text{-} \\  
$\lfloor\textbf{FB-SDE-J}\rceil$ & \textcolor{orange}{51.34} & \textcolor{orange}{89.16} & \textcolor{red}{119.32} &\textcolor{red}{173.96} & \textcolor{orange}{29.34} & \textcolor{red}{69.2} & \textcolor{red}{155.14} & \textcolor{red}{177.52} & \text{-} & \text{-} & \text{-} & \text{-} \\ 
%
%
$\lfloor\textbf{DSBM}\rceil$ &  \textcolor{ForestGreen}{5.2}  &  \textcolor{ForestGreen}{16.8}  &  \textcolor{orange}{37.3}  & \textcolor{orange}{35}  &  \textcolor{ForestGreen}{0.3}  &  \textcolor{ForestGreen}{1.1}  &  \textcolor{ForestGreen}{9.7}  &  \textcolor{red}{31}  &  \textcolor{red}{3.7}  &  \textcolor{red}{105}  &  \textcolor{red}{3557}  &  \textcolor{red}{15000}  \\   
$\lfloor\textbf{SF$^2$ M-Sink} \rceil$ &  \textcolor{ForestGreen}{0.54}  &  \textcolor{ForestGreen}{3.7}  &  \textcolor{ForestGreen}{9.5}  &  \textcolor{ForestGreen}{10.9}  &  \textcolor{ForestGreen}{0.2}  &  \textcolor{ForestGreen}{1.1}  &  \textcolor{ForestGreen}{9}  &  \textcolor{orange}{23}  &  \textcolor{ForestGreen}{0.31}  &  \textcolor{red}{4.9}  &  \textcolor{red}{319}  &  \textcolor{red}{819}  \\  
$\lfloor\textbf{LightSB}\rceil$ &  \textcolor{ForestGreen}{0.03}  &  \textcolor{ForestGreen}{0.08}  &  \textcolor{ForestGreen}{0.28}  &  \textcolor{ForestGreen}{0.60}  &  \textcolor{ForestGreen}{0.05}  &  \textcolor{ForestGreen}{0.09}  &  \textcolor{ForestGreen}{0.24}  &  \textcolor{ForestGreen}{0.62}  &  \textcolor{ForestGreen}{0.07}  &  \textcolor{ForestGreen}{0.11}  &  \textcolor{ForestGreen}{0.21}  &  \textcolor{ForestGreen}{0.37}  \\ 
$\lfloor\textbf{LightSB-M (MB)}\rceil$ &  \textcolor{ForestGreen}{0.005}  &  \textcolor{ForestGreen}{0.07}  &  \textcolor{ForestGreen}{0.27}  &  \textcolor{ForestGreen}{0.63}  &  \textcolor{ForestGreen}{0.002}  &  \textcolor{ForestGreen}{0.04}  & \textcolor{ForestGreen}{0.12}  &  \textcolor{ForestGreen}{0.36}  &  \textcolor{ForestGreen}{0.04}  &  \textcolor{ForestGreen}{0.07}  &  \textcolor{ForestGreen}{0.11}  &  \textcolor{ForestGreen}{0.23}  \\   %
%
$\lfloor\textbf{GMMflow (ours)}\rceil$ & \textcolor{ForestGreen}{10.35} & \textcolor{ForestGreen}{14.68} & \textcolor{ForestGreen}{11.15} &\textcolor{ForestGreen}{11.2} & \textcolor{ForestGreen}{5.78} & 
\textcolor{ForestGreen}{7.20} & \textcolor{ForestGreen}{6.93} & \textcolor{ForestGreen}{6.38} & \textcolor{ForestGreen}{0.16} & \textcolor{ForestGreen}{0.28} & \textcolor{red}{1.43} & \textcolor{red}{2.77}\\  
\hline
Independent coupling  &166.0&152.0&126.0&110.0&86.0&80.0&72.0&60.0&4.2&2.52&2.26& 2.4 \\ 
\bottomrule
\end{tabular}
\captionsetup{justification=centering}

\end{table}

{\color{\editcolor} 

To further benchmark our algorithm with respect to run-times, we provide wall-clock times for both training and inference with respect to the number of components and the problem dimensionality for the boundary distributions provided in the EOT benchmark. We report these values in Table~\ref{tab:EOTtrain} below.

\begin{table}[!ht]
    \centering
    \caption{Training time for for EOT benchmark.}
    \label{tab:EOTtrain}
    \begin{tabular}{|l|l|l|l|l|l|}
    \hline
        Dim$\backslash$ \# comp & 5 & 10 & 20 & 50 & 100  \\ \hline
        2 & 0.209 & 0.0595 & 0.1083 & 0.2303 & 1.0256  \\ \hline
        16 & 0.0567 & 0.0948 & 0.1413 & 0.4218 & 3.3918  \\ \hline
        64 & 0.182 & 0.2228 & 0.7217 & 1.2875 & 2.1431  \\ \hline
        128 & 0.2802 & 0.4562 & 0.7713 & 1.6164 & 3.4101  \\ \hline
    \end{tabular}
\end{table}

\begin{table}[!ht]
    \centering
    \caption{Inference time for for EOT benchmark.}
    \label{tab:EOTinf}
    \begin{tabular}{|l|l|l|l|l|l|}
    \hline
        Dim$\backslash$ \# comp & 5 & 10 & 20 & 50 & 100  \\ \hline
        2 & 0.037 & 0.031 & 0.031 & 0.032 & 0.030  \\ \hline
        16 & 0.032 & 0.032 & 0.032 & 0.033 & 0.032  \\ \hline
        64 & 0.032 & 0.032 & 0.032 & 0.032 & 0.039  \\ \hline
        128 & 0.032 & 0.033 & 0.036 & 0.032 & 0.082  \\ \hline
    \end{tabular}
\end{table}

We note that because the EOT benchmark uses an initial Gaussian distribution and a GMM terminal distribution, there is no point in reporting metrics such as marginal distribution accuracy or transport plan optimality, since these will perform best when the number of components used in GMMflow matches the setting of EOT benchmark. Furthermore, exploring how well a GMM approximates a general distribution as the number of components increases is a well-studied problem and goes beyond the scope of our work; therefore, we do not provide experiments that explore this issue. 
}

\subsection{Problems with LTI Prior Dynamics}\label{LTI_exps}
%
%
To test the algorithm on more complicated dynamical systems, we use the 4-dimensional Linear Time-Invariant (LTI) system 
\begin{equation} \label{LTI}
    \d x_t = A x_t \, \d t + B u_t \, \d t + D \, \d w,
\end{equation}
with
\begin{equation*}
    A = \begin{bmatrix} 2S &  I_2 \\ S & 0_2 \end{bmatrix},\quad 
    S = \begin{bmatrix} 0 &  -1 \\ 1 & 0 \end{bmatrix}, \quad 
    B = \begin{bmatrix} 0 \\ I_2  \end{bmatrix}, \quad  D = \epsilon I_4 ,
\end{equation*}
and boundary distributions 
\begin{subequations}\label{ex2GMMs}
\begin{align} 
    & {\color{\editcolor}\rho_0} = \sum_{k=0}^8 \frac{1}{8} \N\left( \left[ 10 \cos(k \pi/4); 10 \sin(k \pi/4); 0; 0\right], 0.4 I_4\right), \\
    & {\color{\editcolor}\rho_1} = \N\left(0_4, 0.4 I_4\right).
\end{align}
\end{subequations}
We note that solving problem \eqref{GMM_SB} with the dynamical system \eqref{LTI} in place of \eqref{GMM_SB:dyn} is not currently solvable using any mainstream neural SB solvers because the stochastic disturbance $\d w$ in \eqref{LTI} does not enter through the same channels as the control signal $u_t$ and the state $x_t$.
The only available method to solve this problem is detailed in \cite{chen2016optimal}, which, however, assumes access to the solution of the static EOT problem \eqref{EOT} with
boundary distributions \eqref{ex2GMMs}, and a closed form of the probability density transition kernel included by the dynamical system \eqref{LTI} for $u_t \equiv 0$.
The results of our approach are illustrated in Figure~\ref{fig:GMM4D}.

\begin{figure}
    \centering
    \includegraphics[width=0.89\linewidth]{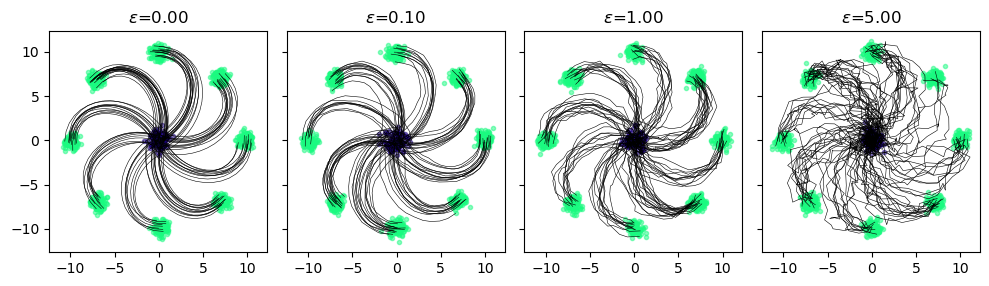}
    \caption{ \textcolor{rho0}{Gaussian} to \textcolor{rho1}{8-Gaussians}  with LTI prior dynamics.}
    \label{fig:GMM4D}
\end{figure}

To solve the Gaussian Bridge sub-problems with a dynamical system of the form \eqref{LTI} we use the discrete-time convex formulation 
of~\cite{rapakoulias2023discrete}. 
We also include a brief overview of the method in Appendix~\ref{App:OCS}. 
Each continuous-time Gaussian Bridge is discretized (in the temporal dimension) into 101 steps over uniform intervals of size $\Delta t = 0.01$. 
We used MOSEK \citep{aps2020mosek} to solve the resulting semidefinite programs. 


\section{Gaussian Bridge for Linear Time-Varying Systems} \label{App:OCS}
\setcounter{equation}{0}
\renewcommand{\theequation}{C.\arabic{equation}}
In this section, we briefly review the available methods in the literature to solve the Gaussian Bridge problem with general LTV dynamics of the form \eqref{LTV}.
That is, we consider the problem 
\begin{subequations} \label{OCS}
\begin{align}
& \min_{u \in \mathcal{U}} \; \E \left[ \int_{0}^{1}{ \| u_t(x) \|^2 \d t} \right], \label{OCS:cost} \\
&     \d x_t = A_t x_t \, \d t + B_t u(x_t) \, \d t + D_t \, \d w, \label{OCS:dyn} \\
& x_0 \sim \N(\mu_0, \Sigma_0), \quad x_1 \sim \N(\mu_1, \Sigma_1). \label{OCS:BC}
\end{align}
\end{subequations}
The solution of problem \eqref{OCS} is used to solve the Gaussian Bridge problem for the example in Section~\ref{LTI_exps} and is relevant to applications with prior dynamics of more general structure such as mean field games \citep{bensoussan2016linear} and large multi-agent control applications~\citep{saravanos2023distributed} or higher-order distribution interpolation problems \citep{chen_measure-valued_2018, chen2019multi}. 
The existence and uniqueness of solutions for problem \eqref{OCS} are studied in \cite{chen2015optimal, liu2022optimal, liu2024reachability}.
Since the state of \eqref{OCS:dyn} remains Gaussian throughout the steering horizon, i.e., $x_t \sim \N(\mu_t, \Sigma_t)$, the problem simplifies to that of the control of the first two statistical moments of the state, namely the mean $\mu_t$ and the covariance $\Sigma_t$. 
Using a control policy parametrization of the form 
\begin{equation} \label{feedback_LTI}
    u_t(x) = K_t(x-\mu_t) + v_t,
\end{equation}
allows for the decoupling of the propagation equations for the mean and covariance of the state. 
More specifically, applying \eqref{feedback_LTI} to \eqref{OCS:dyn}, the equations describing the propagation of $\mu_t$ and $\Sigma_t$ yield \cite[Section 5.5]{sarkka2019applied}
\begin{subequations}
\begin{align}
& \dot{\Sigma}_t = (A_t + B_t K_t) \Sigma_t+ \Sigma_t (A_t + B_t K_t) \t + D_t D_t\t, \label{Cov_prop}\\
& \dot{\mu}_t = A_t \mu_t + B_t v_t. \label{mean_prop}
\end{align}
\end{subequations}
Expanding the expression \eqref{Cov_prop} and performing the change of variables $U_t = K_t \Sigma_t $, we obtain 
\begin{equation} \label{S_d_trans}
    \dot{\Sigma}_t = A_t \Sigma_t + \Sigma_t A_t\t + B_t U_t + U_t\t B_t\t + D_t D_t\t,
\end{equation}
which is linear in $U_t, \Sigma_t$.
Furthermore, substituting \eqref{feedback_LTI} into the cost function \eqref{OCS:cost} and using the cyclic property of the trace operator along with the standard properties of the expectation yields  
\begin{equation} \label{cost_reform}
    \E \left[ \int_{0}^{1}{ \| u_t(x) \|^2 \, \d t} \right] = \int_0^1 v_t\t v_t + \tr \left(K_t \Sigma_t K_t \t\right) \, \d t = \int_0^1 v_t\t v_t + \tr \left(U_t \Sigma_t^{-1} U_t \t\right) \, \d t.
\end{equation}
Equations \eqref{mean_prop}, \eqref{S_d_trans}, \eqref{cost_reform} can be used to reformulate problem \eqref{OCS} to a simpler optimization problem in the space of affine feedback policies, parameterized by $U_t$ and $v_t$.
To be more precise, problem \eqref{OCS} reduces to 
\begin{subequations} \label{OCS_reform}
\begin{align}
& \min_{\mu_t, v_t, \Sigma_t, U_t} \; \int_0^1 v_t\t v_t + \tr \left(U_t \Sigma_t^{-1} U_t \t\right) \, \d t, \label{OCS_reform:cost} \\
&     \dot{\Sigma}_t = A_t \Sigma_t + \Sigma_t A_t\t + B_t U_t + U_t\t B_t\t + D_t D_t\t,\label{OCS_reform:Cov} \\
& \dot{\mu}_t = A_t \mu_t + B_t v_t, 
\label{OCS_reform:mean}
\end{align}
\end{subequations}
which can be further relaxed to a convex semi-definite program using the lossless convex relaxation~\citep{chen2015optimal2}
\begin{subequations} \label{cvx}
\begin{align}
& \min_{\mu_t, v_t, \Sigma_t, U_t} \; \int_0^1 v_t\t v_t + \tr (Y_t ) \, \d t, \label{cvx:cost} \\
& U_t \Sigma_t^{-1} U_t\t \preceq  Y_t, \label{cvx:lmi} \\
&    \dot{\Sigma}_t = A_t \Sigma_t + \Sigma_t A_t\t + B_t U_t + U_t\t B_t\t + D_t D_t\t,\label{cvx:cov} \\
& \dot{\mu}_t = A_t \mu_t + B_t v_t, \label{cvx:mean} 
\end{align}
\end{subequations}
after noting that the constraint \eqref{cvx:lmi} can be cast as a Linear Matrix Inequality (LMI) using Schur's complement as 
\begin{equation*}
\begin{bmatrix} 
\Sigma_t & U_t\t \\
U_t & Y_t
\end{bmatrix} \succeq 0.
\end{equation*} 
Problem \eqref{cvx} is still infinite dimensional since the decision variables are functions of time $t \in [0, 1]$; however, it can be discretized, approximately using a first-order approximation of the derivatives in \eqref{cvx:cov}, \eqref{cvx:mean} \citep{chen2015optimal2} or exactly using a zero-order hold \citep{liu2022optimal, rapakoulias2023discrete}, and solved to global optimality using a semidefinite programming solver such as MOSEK \citep{aps2020mosek}.

\begin{figure}[!ht]
    \centering
    \includegraphics[width=0.7 \linewidth]{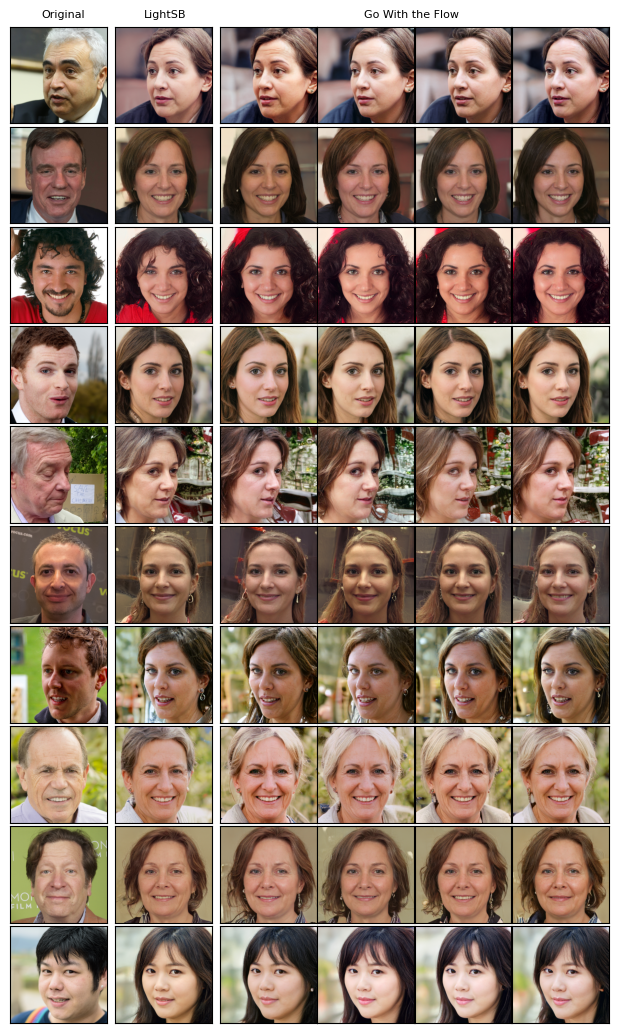}
    \caption{Further examples for the man-to-woman Image-to-Image translation task.}
    \label{fig:m2f_extended}
\end{figure}
\begin{figure}
    \centering
    \includegraphics[width=0.7 \linewidth]{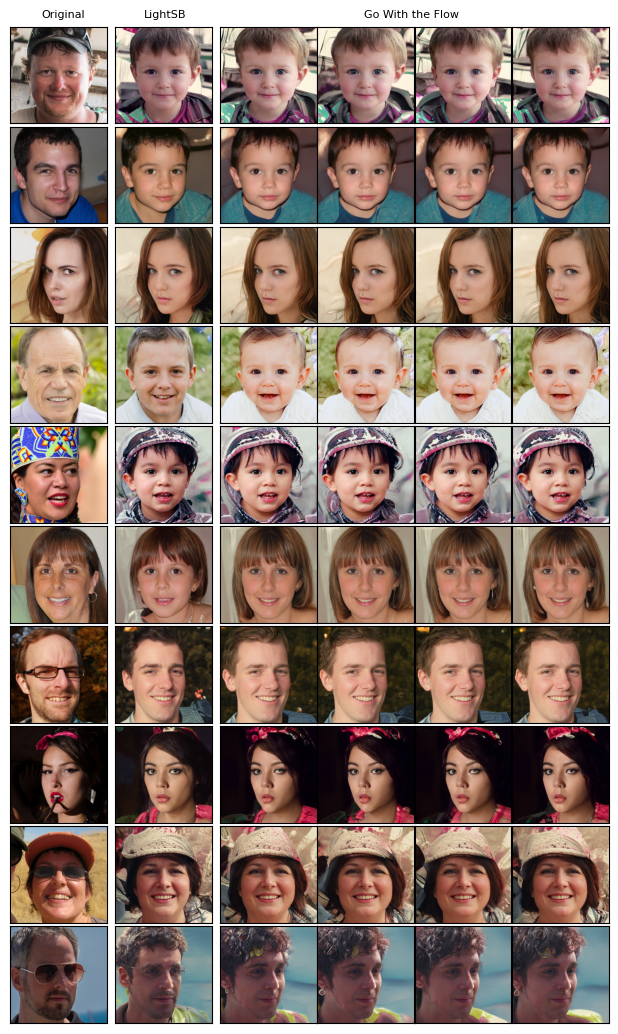}
    \caption{Further examples for the adult-to-child Image-to-Image translation task.}
    \label{fig:a2c_extended}
\end{figure}
\end{document}